\documentclass[twoside]{article}

\usepackage[preprint]{aistats2026}
\usepackage{graphicx,nicefrac}
\usepackage{multirow}
\usepackage{multicol}
\usepackage{hyperref}
\usepackage{url}
\usepackage{orcidlink}
\usepackage{graphicx}
\usepackage{amsmath}
\usepackage{amssymb}
\usepackage{booktabs}
\usepackage{amsmath}
\usepackage{amsthm}
\usepackage{algorithm}
\usepackage{algpseudocode}
\usepackage{mdframed}
\usepackage{pifont}
\usepackage{subcaption}
\usepackage{color, colortbl}
\usepackage{mathrsfs}
\usepackage{pgfplots}
\usepackage{tikz}
\usepackage{paralist}
\usepackage{thmtools}

\newtheorem{lemma}{Lemma}

\newtheorem{corollary}{Corollary}
\newtheorem{definition}{Definition}
\newtheorem{assumption}{Assumption}

\newtheorem{example}{Example}

\newcommand{\I}{\mathbf{I}}

\newcommand{\Norm}{\mathcal{N}}
\newcommand{\dconv}{\xrightarrow{d}}

\begin{document}

\twocolumn[
\aistatstitle{Beyond Pooling: Matching for Robust Generalization under Data Heterogeneity}

\runningtitle{Beyond Pooling: Matching for Robust Generalization under Data Heterogeneity (AISTATS'26)}

\aistatsauthor{ Ayush Roy \And Rudrasis Chakraborty \And  Lav R.\ Varshney \And Vishnu Suresh Lokhande }

\aistatsaddress{ SUNY Buffalo \And  Lawrence Livermore National Lab \And SUNY Stony Brook \And SUNY Buffalo } 
]

\begin{abstract}
Pooling heterogeneous datasets across domains is a common strategy in representation learning, but naive pooling can amplify distributional asymmetries and yield biased estimators, especially in settings where zero-shot generalization is required. We propose a matching framework that selects samples relative to an adaptive centroid and iteratively refines the representation distribution. The double robustness and the propensity score matching for the inclusion of data domains make matching more robust than naive pooling and uniform subsampling by filtering out the confounding domains (the main cause of heterogeneity). Theoretical and empirical analyses show that, unlike naive pooling or uniform subsampling, matching achieves better results under asymmetric meta-distributions, which are also extended to non-Gaussian and multimodal real-world settings. Most importantly, we show that these improvements translate to zero-shot medical anomaly detection, one of the extreme forms of data heterogeneity and asymmetry. The code is available on \href{https://github.com/AyushRoy2001/Beyond-Pooling}{Github}. \footnote{Paper accepted at Proceedings of the 29th International Conference on Artificial Intelligence and Statistics (AISTATS) 2026, Tangier, Morocco.}
\footnote{Corresponding authors: A. Roy and V.S. Lokhande.}
\end{abstract}

\section{Introduction}
Pooling datasets from multiple institutions presents an attractive approach for studies constrained by limited sample sizes, as is often the case in biomedical research involving human subjects \cite{thompson2014enigma}. The aggregation of data across sites yields larger sample sizes and enhanced statistical power, thereby facilitating more robust analyses. Frequently, biomedical investigations face restrictions in recruiting subjects, particularly when the population of interest is rare, the relevant measurements are challenging to acquire, or when logistical and financial barriers preclude large-scale data collection. Although single-site datasets may suffice to address primary research questions, there is often interest in secondary analyses aimed at uncovering subtle associations between specific predictors and response variables. Such endeavors benefit significantly from pooled datasets, which boost the statistical signal and enable exploration of scientific questions that would otherwise remain inaccessible with smaller cohorts from individual institutions \cite{zhou2018statistical}\cite{lokhande2022equivariance}.

The feasibility of pooling datasets across institutions has been demonstrated in practical settings, supported by a mature body of statistical literature that outlines best practices for such analyses, including covariate matching \cite{mahajan2021domain}\cite{lokhande2022equivariance} and meta-analysis \cite{kim2016quasi}. However, careful attention is required when aggregating data from multiple sources, as increasing training data in these multi-domain scenarios can sometimes lead to diminished overall accuracy \cite{gulrajani2020search,zhou2022domain}, uncertainty in fairness outcomes, and reduced performance for the most challenging subgroups \cite{shen2024data}.

These concerns extend beyond biomedical research, appearing in machine learning and computer vision domains where pooling distinct datasets is common, especially in anomaly detection tasks \cite{huang2024adapting}. Unlike anomalies in natural scene data, which are typically pronounced and readily identifiable, medical anomalies are subtle and often reflect slight deviations within complex imaging modalities such as chest X-rays, MRIs, CT scans, and retinal OCTs. Although comprehensive frameworks exist for mitigating sample selection bias and accounting for differences in population characteristics, there remains a scarcity of adaptive methodologies capable of accommodating sequentially added domains to facilitate robust pooled analyses, an important consideration across a range of scientific applications.

In this work, we present an in-depth theoretical study of pooled dataset construction via matching, focusing on scenarios where domains are sequentially incorporated. We systematically compare naive pooling, uniform subsampling, and matching as strategies for combining data across multiple sites/domains. The proposed matching framework adaptively selects samples relative to an evolving centroid, iteratively refining the representation. We provide rigorous guarantees demonstrating that while all pooling methods converge to the population mean, naive pooling and subsampling retain domain-level heterogeneity, whereas matching uniquely filters out this extra variance and aligns with the target distribution. In finite-domain settings, naive pooling and subsampling are susceptible to unbounded error and bias, while matching delivers explicit finite-sample robustness. Synthetic experiments further validate these theoretical results, showing that matching maintains stability under practical sample sizes, whereas subsampling exhibits amplified errors. Moreover, our findings extend robustly to zero-shot medical anomaly detection, supporting the generalizability of the approach.

\section{Preliminaries of Pooling Strategies}
\label{sec:theory}
Let the target test distribution be $\mathscr{D}_{\mathrm{test}} = \mathcal{N}(\boldsymbol{\mu}_*, \sigma^{2} \mathbf{I}_d)$,
an isotropic Gaussian in $\mathbb{R}^d$.  
Let $K$ Gaussian component distributions be denoted as $\{Q_k\}_{k=1}^K$, where $Q_k = \mathcal{N}(\boldsymbol{\mu}_k, \sigma^{2} \mathbf{I}_d)$, $\mu_k \stackrel{i.i.d.}{\sim} \mathscr{D}_{\mu}$,
with $\mathbb{E}[\boldsymbol{\mu}_k] = \mu_*$ and
$\mathrm{Cov}[\boldsymbol{\mu}_k] = \Sigma_{\mu}$. This setup can be understood hierarchically. Firstly, we sample a domain mean $\boldsymbol{\mu}_k$ from the meta-distribution $\mathscr{D}_{\mu}$. Then, conditioned on $\boldsymbol{\mu}_k$, we define the domain distribution $Q_k = \mathcal{N}(\boldsymbol{\mu}_k, \sigma^2 \mathbf{I}_d)$, from which draw i.i.d.\ samples $\mathbf{x} \sim Q_k$. The complete data-generation process can be expressed as a Bayes-style factorization:
\begin{equation}
\label{eq:intuition}
p(\mathbf{x}) 
= \sum_{k=1}^K p(\mathbf{x} \mid Q_k)\,p(Q_k \mid \boldsymbol{\mu}_k)\,p(\boldsymbol{\mu}_k \mid \mathscr{D}_\mu).
\end{equation}
That is $\boldsymbol{\mu}_k \sim \mathscr{D}_\mu, Q_k \mid \boldsymbol{\mu}_k = \mathcal{N}(\boldsymbol{\mu}_k, \sigma^2 \mathbf{I}_d)$. This introduces two levels of randomness: \begin{inparaenum}[(i)]
    \item domain-level variation from $\mathscr{D}_\mu$ (inter-domain)
    \item sample-level variation within each $Q_k$ (intra-domain)
\end{inparaenum}. The $Q_k$s are not identical, because each is centered at  different $\boldsymbol{\mu}_k$s. Their randomness comes from drawing $\boldsymbol{\mu}_k \sim \mathscr{D}_{\mu}$. Note that although samples $\mathbf{x}_i \sim Q_i$ and $\mathbf{x}_j \sim Q_j$ are i.i.d. conditioned on the domain mean $\boldsymbol{\mu}_i$ and $\boldsymbol{\mu}_j$ respectively, {\it $\mathbf{x}_i$ and $\mathbf{x}_j$ are not exchangeable.}

\textbf{Intuition on Eq. \ref{eq:intuition}:} This factorization simply expresses a sample $x$ is generated following these steps:
i) first a domain mean $\mu_k$ is drawn from the meta-distribution $\mathscr{D}_\mu$
ii) this $\mu_k$ defines a domain $Q_k$ (i.e., the distribution for that domain)
iii) finally, data $x$ is sampled from the distribution of that domain, $p(x \mid Q_k), p(Q_k \mid \mu_k), p(\mu_k \mid \mathscr{D}_\mu)$.

Thus, the meta-distribution $\mathscr{D}_\mu$ is the distribution over domain means $\mu_k$ (e.g., $\mathscr{D}_\mu$ is a medical imaging database of all hospitals, and $\mu_k$ is a table inside $\mathscr{D}_\mu$ corresponding to one hospital). The shape of $\mathscr{D}_\mu$ (e.g., symmetric vs.\ asymmetric, light-tailed vs.\ heavy-tailed) fully determines whether the collection of domains is balanced or skewed. Thus, symmetry or asymmetry arises entirely at the level of sampling $\mu_k$ (e.g., $\mu_1$ being data from a hospital in the USA and $\mu_2$ being data from a hospital in India will have different distributions). The variability across datasets comes from variability in $\mu_k$ (the domain heterogeneity), and each domain then generates its own data.

We formally define three pooling strategies, naive pooling (Def.~\ref{def:naive_pooling}), 
uniform subsampling (Def.~\ref{def:subsampling}), and 
causal matching (Def.~\ref{def:matching}). 
Each corresponds to a different way of aggregating samples across domains.

\begin{definition}[\textbf{Naive Pooling}]
\label{def:naive_pooling}
Naive pooling aggregates all available samples from every domain:
\begin{equation}
\mathcal{U}^{(K)} = \bigcup_{k=1}^K \{\,\mathbf{x}_{k,i} :  \mathbf{x}_{k,i} \sim Q_k,\; i=1,\dots,N_k \,\}.
\end{equation}
Equivalently, the pooled empirical distribution is
$\hat P_{\mathrm{pool}}^{(K)}(\mathbf{x}) 
= \frac{1}{\sum_{k=1}^K N_k} 
\sum_{k=1}^K \sum_{i=1}^{N_k} \delta(\mathbf{x} - \mathbf{x}_{k,i})$,
where $N_k$ is the sample size of domain $k$.
\end{definition}

\begin{definition}[\textbf{Uniform Subsampling}]
\label{def:subsampling}
Uniform subsampling randomly selects domains without bias and then samples a small subset of points from them.
Formally, let $I \subset \{1,\dots,K\}$ be a set of indices chosen uniformly at random.
For each selected domain $Q_k$, we draw $n \ll N_k$ samples (with replacement):
\begin{equation}
\mathcal{U}_{\mathrm{sub}}^{(n)} 
= \bigcup_{k \in I} \{\, \mathbf{x}_{k,j} : \mathbf{x}_{k,j} \sim Q_k,\; j=1,\dots,n \,\}.
\end{equation}
The corresponding empirical distribution is $\hat P_{\mathrm{sub}}^{(n)}(\mathbf{x})
= \frac{1}{|I|\cdot n} \sum_{k \in I}\sum_{j=1}^n \delta(\mathbf{x}) - \mathbf{x}_{k,j})$. Thus subsampling introduces randomness both at the \emph{domain level} (random choice of $Q_k$) and at the \emph{sample level} (random choice of $n$ samples from each selected domain).
\end{definition}

\begin{definition}[\textbf{Matching}]
\label{def:matching}
Matching selectively includes domains based on their proximity to a adaptive centroid.
A domain $Q_k$ is included if its mean $\mu_k$ lies within a $\tau$-radius ball of the matching function M: $Q_k \in \mathcal{S}^{(\tau)} 
\quad \iff \quad M(\boldsymbol{\mu}_k - \mathbf{c}_t) < \tau$.
\end{definition}

If $Q_k$ is included, then \emph{all of its samples} are admitted:
\begin{equation}
\mathcal{U}_{\mathrm{match}}^{(\tau)} 
= \bigcup_{k \in \mathcal{S}^{(\tau)}} \{\, \mathbf{x}_{k,i} : \mathbf{x}_{k,i} \sim Q_k,\; i=1,\dots,N_k \,\}.
\end{equation}
The matched empirical distribution is $\hat P_{\mathrm{match}}^{(\tau)}(\mathbf{x}) 
= \frac{1}{\sum_{k \in \mathcal{S}^{(\tau)}} N_k}
\sum_{k \in \mathcal{S}^{(\tau)}} \sum_{i=1}^{N_k} \delta(\mathbf{x} - \mathbf{x}_{k,i})$.
The centroid is updated iteratively as $\mathbf{c}_{t+1} 
= \frac{1}{\sum_{k \in \mathcal{S}^{(\tau)}} N_k} 
\sum_{k \in \mathcal{S}^{(\tau)}} \sum_{i=1}^{N_k} \mathbf{x}_{k,i}$.
Thus, unlike subsampling, matching operates via a \emph{biased domain inclusion step}, where entire domains are accepted or rejected.

Specifically, matching can be viewed through the lens of domain-level propensity scores \ref{sec:symmetry_def}. 
For each domain $k$, define $W_k = \mathbb{I}\big(M(\boldsymbol{\mu}_k - \mathbf{c}_t) < \tau\big)$, where $W_k=1$ indicates that all samples from domain $Q_k$ are included in the matched set. 
This parallels causal inference, where $W_k$ represents treatment assignment given covariates \cite{motiian2017unified} (here, $\boldsymbol{\mu}_k$). 
The ``treatment'' is domain inclusion, and the covariate is the domain mean. 
Unbiased causal estimation relies on three well-known assumptions: \emph{ignorability}, \emph{positivity}, and \emph{consistency}. 
These map naturally into our framework: \textit{ignorability} requires that inclusion depends only on observable $\boldsymbol{\mu}_k$, \textit{positivity} requires that each relevant domain has a nonzero chance of inclusion, and \textit{consistency} requires that included domains genuinely contribute samples from their $Q_k$ (which holds since intra-domain samples are i.i.d.). 

These assumptions clarify why matching succeeds where naive pooling and subsampling can fail. 
Naive pooling and subsampling, while exchangeable in the probabilistic sense, they do not satisfy ignorability, i.e., do not enforce \emph{conditional balance with the target} $\boldsymbol{\mu}_*$: they freely include domains whose $\boldsymbol{\mu}_k$ deviate systematically from the target, introducing target-irrelevant variation (akin to confounding in observational studies; see Section~\ref{sec:symmetry_def}). 
Matching, by contrast, \emph{leverages} exchangeability and explicitly conditions inclusion on proximity to the adaptive centroid, thereby ensuring \emph{ignorability and overlap with respect to the target}. 
This mechanism yields a form of double robustness: consistency is achieved as long as either \begin{inparaenum}[(i)] \item the inclusion rule (propensity model $W_k$) or \item the outcome model (centroid update) is correctly specified \end{inparaenum}.
Consequently, mild misspecification of the threshold $\tau$ does not compromise limiting behavior, making matching a stable strategy under domain heterogeneity. 

In the subsequent sections, we analyze the aforementioned points theoretically while validating them experimentally. Particularly, Section \ref{sec:infinite_K} shows convergence of all methods for infinte number of domains, Section \ref{sec:symmetric_finite} demonstrates the challenges under finite K. Section \ref{sec:multimodal} and \ref{sec:gaussian_to_real_world} shows the extension of the proposed method to multimodal data and non gaussian settings respectively. All the theoretical insights are validated by the experimental results (Section \ref{sec: Anomaly detection}) with additional explanations and experimental evidence been provided in Appendix.

\section{Asymptotic Convergence Analysis ($K \to \infty$)}
\label{sec:infinite_K}
Here, we show that all the pooling strategies converge to $\boldsymbol{\mu}_*$ when we have infinite domains, i.e., $K \to \infty$. %This is a less realistic scenario where all methods achieve convergence while is necessary to analyze for understanding the challenges under finite-K setting. 

\begin{figure*}[t]
    \centering
    \includegraphics[width=0.9\textwidth]{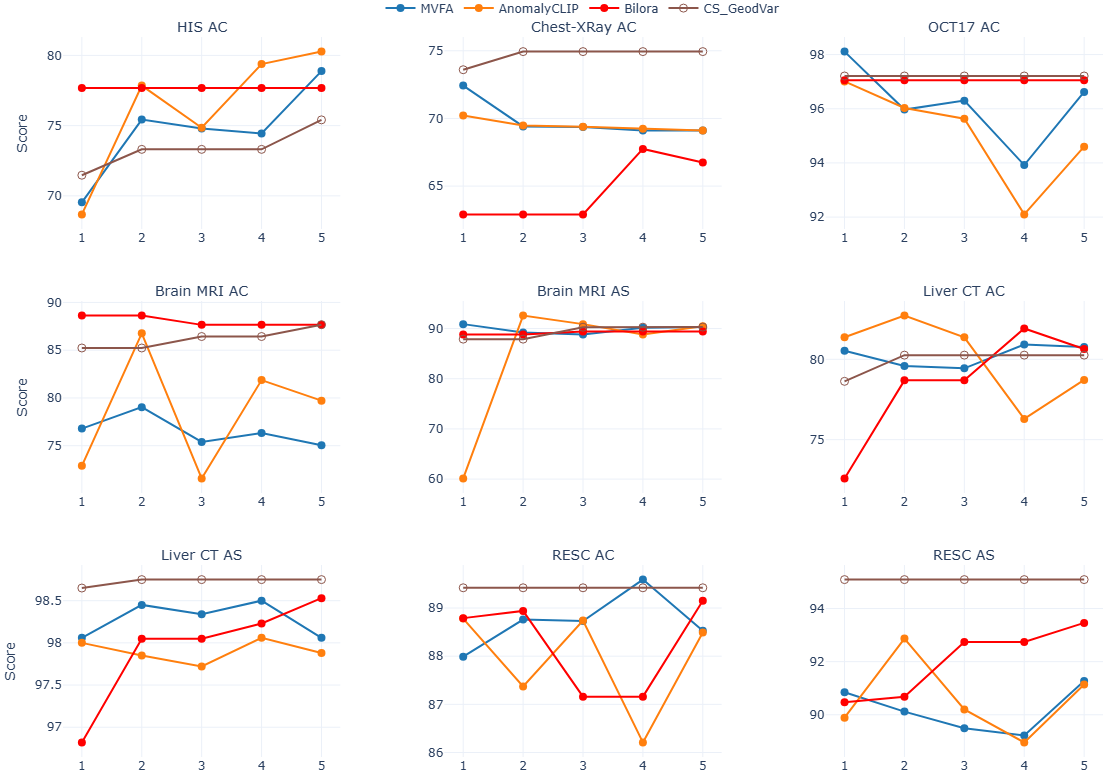}
    \caption{\small Comparison of the proposed method with MVFA \cite{huang2024adapting}, AnomalyCLIP \cite{zhou2023anomalyclip}, and BiLORA \cite{zhu2025bilora}. Performance is reported in terms of domain alignment (DA), anomaly classification (AC), and anomaly segmentation (AS). The x-axis represents the sequential domain addition, and the y-axis represents the AUC scores. Comparison of Domain Alignment (DA) scores across all datasets for the Base, Agnostic, and GeoDVar methods. MVFA \cite{huang2024adapting} shows moderate alignment with DA scores clustered around 2 (e.g., 2.2365 for HIS, 2.1146 for Chest-XRay, 2.0501 for OCT17, 2.0466 for Brain MRI AC, 2.0335 for Liver CT AC). AnomalyCLIP \cite{zhou2023anomalyclip} yields inconsistent and often poorer alignment, with scores varying widely from 1.0102 to 3.2632 across different tasks. BiLORA \cite{zhu2025bilora} achieved better DA scores than MVFA \cite{huang2024adapting} and AnomalyCLIP \cite{zhou2023anomalyclip} (HIS AC-4.0, ChestXray AC-3.145, OCT17 AC-4.0, BrainMRI AC-4.0, BrainMRI AS-4.012, LiverCT AC-3.158, LiverCT AS-4.030, RESC AC-3.081, RESC AS-4.072). In contrast, our method achieves consistently superior domain alignment, with DA scores at or exceeding 4.0 for all datasets and tasks, including peaks of 4.0736 for Brain MRI detection (AC) and 4.0484 for Brain MRI segmentation (AS), demonstrating its robust performance in matching complex domain distributions.}
    \label{fig:final}
\end{figure*}

\begin{restatable}[\textbf{Asymptotic Behavior as $K \to \infty$}]{theorem}{AsymptoticBehavior}
\label{thm:asymptotic_behavior}
Let $\{\boldsymbol{\mu}_k\}_{k=1}^K \overset{i.i.d.}{\sim} \mathscr{D}_{\mu}$ with $\mathbb{E}[\boldsymbol{\mu}_k] = \boldsymbol{\mu}_*$ and $\mathrm{Cov}(\boldsymbol{\mu}_k) = \Sigma_{\mu}$.  
Suppose each domain $Q_k$ generates i.i.d.\ samples $\mathbf{x}_{k,i} \sim \mathcal{N}(\boldsymbol{\mu}_k, \sigma^2 I_d)$, where $\sigma^2 I_d$ is the within-domain covariance. Then, as $K \to \infty$:
    a) \textbf{Naive pooling:} $\hat P_{\mathrm{pool}}^{(K)} \;\;\xrightarrow{d}\;\; \mathcal{N}\!\bigl(\boldsymbol{\mu}_*, \, \sigma^2 I_d + \Sigma_{\mu}\bigr)$.
    b) \textbf{Uniform subsampling (fixed $n$):} $\hat P_{\mathrm{sub}}^{(n)} \;\;\xrightarrow{d}\;\; \mathcal{N}\!\bigl(\boldsymbol{\mu}_*, \, \sigma^2 I_d + \Sigma_{\mu}\bigr)$.
    c) \textbf{Matching (fixed $\tau > 0$):} 
    If the iterative centroid satisfies $\mathbf{c}_n \xrightarrow{p} \mu_*$, then $\hat P_{\mathrm{match}}^{(\tau)} \;\;\xrightarrow{d}\;\; \mathcal{N}\!\bigl(\boldsymbol{\mu}_*, \, \sigma^2 I_d\bigr)$.
\end{restatable}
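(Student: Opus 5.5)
The plan is to treat each pooled empirical measure as a random probability measure on $\mathbb{R}^d$. I would show that it converges weakly, almost surely, to a deterministic limit $P_\infty$, and then identify the first two moments of $P_\infty$. One caveat should be made explicit up front. The limit of naive pooling is the continuous mixture $P_\infty = \int \mathcal{N}(\boldsymbol{\mu}, \sigma^2 I_d)\, d\mathscr{D}_\mu(\boldsymbol{\mu})$. This mixture is exactly Gaussian only when $\mathscr{D}_\mu$ is Gaussian. In general, the Gaussian in the statement should be read as the Gaussian with matching mean and covariance, i.e.\ convergence of the first two moments together with weak convergence to the mixture. I would prove the statement in that form, and note that it is literal when $\mathscr{D}_\mu$ is Gaussian.

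\textbf{(a) Naive pooling.} For a bounded continuous test function $f$, write
$\int f\, d\hat P_{\mathrm{pool}}^{(K)} = \sum_k \frac{N_k}{\sum_j N_j}\, \bar f_k$, where $\bar f_k = \frac{1}{N_k}\sum_i f(\mathbf{x}_{k,i})$.
Assume the $N_k$ are bounded, or more generally that the weights satisfy $\max_k N_k / \sum_j N_j \to 0$. Then the pairs $(\boldsymbol{\mu}_k, \{\mathbf{x}_{k,i}\})$ are i.i.d.\ across $k$. A weighted strong law of large numbers then gives convergence to $\mathbb{E}[f(\mathbf{x})]$ under the hierarchical model. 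A countable convergence-determining class of test functions upgrades this to almost sure weak convergence. The moments follow from the tower property and the law of total variance:
\[
\mathbb{E}[\mathbf{x}] = \mathbb{E}[\boldsymbol{\mu}_k] = \boldsymbol{\mu}_*, \qquad \mathrm{Cov}(\mathbf{x}) = \mathbb{E}[\sigma^2 I_d] + \mathrm{Cov}(\boldsymbol{\mu}_k) = \sigma^2 I_d + \Sigma_\mu .
\]
Convergence of the empirical second moments uses the same law of large numbers, applied to $\mathbf{x}\mathbf{x}^\top$, which is integrable because $\Sigma_\mu$ is finite.

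\textbf{(b) Uniform subsampling.} The argument is identical, with two changes. First, $|I| \to \infty$ as $K\to\infty$. Second, because $I$ is chosen uniformly and independently of the $\boldsymbol{\mu}_k$, the selected domains are still an i.i.d.\ sample from $\mathscr{D}_\mu$. The fixed $n$ only affects the within-domain averages. The law of total variance again gives $\sigma^2 I_d + \Sigma_\mu$: the domain-level term survives because the domain selection carries no information about $\boldsymbol{\mu}_k$.

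\textbf{(c) Matching.} Fix the inclusion region $B_\tau(\mathbf{c}) = \{\boldsymbol{\mu} : M(\boldsymbol{\mu}-\mathbf{c}) < \tau\}$. I would first handle a frozen centroid $\mathbf{c} = \boldsymbol{\mu}_*$. Assume positivity, $p_\tau = \mathscr{D}_\mu(B_\tau(\boldsymbol{\mu}_*)) > 0$. Then the included domains are i.i.d.\ draws from the conditional law $\mathscr{D}_\mu(\cdot \mid B_\tau)$, and their number grows like $p_\tau K \to \infty$. The argument from (a) gives a weak limit with mean $\mathbb{E}[\boldsymbol{\mu} \mid B_\tau]$ and covariance $\sigma^2 I_d + \mathrm{Cov}(\boldsymbol{\mu}\mid B_\tau)$. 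Two facts then need checking:
\begin{itemize}
\item The mean equals $\boldsymbol{\mu}_*$. This needs a symmetry assumption on $\mathscr{D}_\mu$ about $\boldsymbol{\mu}_*$, or a symmetric $M$ together with the ignorability condition.
\item The residual covariance satisfies $\|\mathrm{Cov}(\boldsymbol{\mu}\mid B_\tau)\| \le C\tau^2$, since the conditional law is supported in a ball of radius $O(\tau)$.
\end{itemize}
So matching removes $\Sigma_\mu$ up to an $O(\tau^2)$ term. The statement's $\sigma^2 I_d$ is then exact in the regime where $\tau$ is small relative to the spread of $\mathscr{D}_\mu$, or as $\tau\downarrow 0$ after $K\to\infty$. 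I would state this explicitly rather than claim it for every fixed $\tau$.

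To pass from the frozen centroid to the iterative one, I would use the hypothesis $\mathbf{c}_t \xrightarrow{p} \boldsymbol{\mu}_*$. Assume $\mathscr{D}_\mu$ puts no mass on the boundary $\partial B_\tau(\boldsymbol{\mu}_*)$. Then the symmetric difference between the inclusion sets defined by $\mathbf{c}_t$ and by $\boldsymbol{\mu}_*$ has $\mathscr{D}_\mu$-mass tending to zero. Hence the fraction of misclassified domains vanishes in probability, and so does their effect on any bounded test function and on the first two moments, the latter using uniform integrability from finite second moments. A Slutsky-type argument then transfers the frozen-centroid limit to $\hat P^{(\tau)}_{\mathrm{match}}$.

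I expect this last step to be the main obstacle. The centroid is itself computed from the same domains it selects, so the selection indicators are not independent of the data. The route I would try first is to bound everything uniformly over centroids in a shrinking neighborhood of $\boldsymbol{\mu}_*$. This uses a Glivenko--Cantelli bound over the VC class of balls $\{B_\tau(\mathbf{c})\}$, which removes the dependence on how $\mathbf{c}_t$ was computed.
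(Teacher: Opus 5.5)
Your proposal is correct as a proof of the corrected statement you formulate, and it takes a genuinely different route from the paper's.

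\textbf{The paper's route.} It works entirely with characteristic functions. In (a) it replaces each domain's empirical measure by $Q_k$ itself and applies the SLLN to $\frac{1}{K}\sum_k e^{i\mathbf{t}^\top\boldsymbol{\mu}_k}$. It then substitutes the expansion $\phi_{\boldsymbol{\mu}}(\mathbf{t})=\exp\bigl(i\mathbf{t}^\top\boldsymbol{\mu}_*-\tfrac12\mathbf{t}^\top\Sigma_\mu\mathbf{t}+o(\|\mathbf{t}\|^2)\bigr)$, discards the remainder at fixed $\mathbf{t}$, and invokes L\'evy's continuity theorem. Part (b) is reduced to (a) by reading each subsampled point as ``draw a domain, then draw from it.'' In (c) it pushes $W_k\to\mathbb{I}\{M(\boldsymbol{\mu}_k-\boldsymbol{\mu}_*)<\tau\}$ through an LLN to obtain a conditional characteristic function. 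It then lets $\tau\to0$, assuming positive density at $\boldsymbol{\mu}_*$.

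\textbf{Where it falls short.} The remainder can only be dropped as $\mathbf{t}\to0$. So that argument actually yields your mixture $\int\mathcal{N}(\boldsymbol{\mu},\sigma^2 I_d)\,d\mathscr{D}_\mu$, which is Gaussian only when $\mathscr{D}_\mu$ is. Your caveat is exactly the point where the paper's proof breaks. Likewise, the paper's (c) silently uses your iterated limit $\tau\downarrow0$ despite the ``fixed $\tau$'' in the statement. It also never addresses the dependence between the centroid and the selected domains.

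\textbf{What your route buys.} The test-function plus law-of-total-variance approach identifies the limit and its first two moments correctly. It works with the actual empirical samples at bounded $N_k$. It also surfaces the hypotheses the statement really needs: $|I|\to\infty$ in (b), and in (c) positivity, a $\mathscr{D}_\mu$-null boundary, and a genuine treatment of the data-dependent centroid. The paper's route is shorter, but it reaches the Gaussian form only through the invalid substitution.

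\textbf{Two refinements to your (c).}

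First, you do not need symmetry for the mean, and ignorability would not supply it anyway. If $\mathbf{c}_n$ is read as the converged centroid of Def.~\ref{def:matching}, it is itself the matched sample mean. The hypothesis $\mathbf{c}_n\xrightarrow{p}\boldsymbol{\mu}_*$, combined with your frozen-centroid limit, then forces $\mathbb{E}[\boldsymbol{\mu}\mid B_\tau(\boldsymbol{\mu}_*)]=\boldsymbol{\mu}_*$. The only real obstruction at fixed $\tau$ is $\mathrm{Cov}(\boldsymbol{\mu}\mid B_\tau)$. It vanishes exactly only in the iterated limit, or when $\mathscr{D}_\mu$ restricted to $B_\tau(\boldsymbol{\mu}_*)$ is a point mass (as in Example~\ref{eg:example_assymetry}). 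So ``$\tau$ small relative to the spread'' gives an approximation, not exactness.

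Second, the VC/Glivenko--Cantelli step can be dropped. When $M$ is a norm, on the event $\|\mathbf{c}_n-\boldsymbol{\mu}_*\|_2\le r$ every misclassified domain lies in the fixed shell $\{\boldsymbol{\mu}: |M(\boldsymbol{\mu}-\boldsymbol{\mu}_*)-\tau|\le L_M r\}$. The $\mathscr{D}_\mu$-mass of this shell tends to $0$ as $r\to0$ by your null-boundary assumption. Applying the SLLN to these fixed shells and then letting $r\to0$ controls the perturbation without any uniformity over centroids. This also avoids relying on VC properties of $M$-balls for a general $M$.
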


\textit{Takeaway:} From Theorem \ref{thm:asymptotic_behavior} (see Supplementary Sec. \ref{sec:proof_asymptotic_behavior} for proof), we see that as $K \to \infty$, all pooling methods converge to distributions centered at $\boldsymbol{\mu}_*$, but with critical differences in their covariance structures. naive pooling and uniform subsampling converge to $\mathcal{N}(\boldsymbol{\mu}_*, \sigma^2 \I_d + \Sigma_{\mu})$, retaining the inter-domain variance $\Sigma_{\mu}$ that reflects domain heterogeneity. In contrast, matching converges to $\mathcal{N}(\boldsymbol{\mu}_*, \sigma^2 \I_d)$, effectively filtering out the inter-domain variance and matching the target test distribution exactly. This shows  matching's unique ability to eliminate domain-level heterogeneity while preserving within-domain variability.

Corollary~\ref{cor:exchangeability} (Supplementary Sec.~\ref{sec:cor_exchangeability_proof}) shows that all pooling strategies (naive pooling, subsampling, and matching) are exchangeable because they are symmetric functions of the i.i.d.\ domain draws $\{\mu_k\}$. 
This holds for finite $K$ and as $K \to \infty$: permuting domain indices does not change the joint distribution of the pooled estimator. 
Exchangeability implies that, conditional on $\{\boldsymbol{\mu}_k\}$, domain inclusion is “as if randomized,” which corresponds to the \emph{ignorability} condition used by propensity-score methods \cite{rosenbaum1983central}. The distinction lies in how exchangeability is \emph{used}. Naive pooling and subsampling do not enforce balance with respect to the target $\mu_*$ and thus can suffer finite-sample bias under asymmetric $\mathscr{D}_\mu$.  Matching explicitly exploits exchangeability within a propensity framework: by conditioning inclusion on distance to the centroid, it secures \emph{ignorability and overlap relative to the target}, aligning domain inclusion with the causal estimand of interest. Thus, while all strategies are exchangeable, only matching operationalizes this property to mitigate finite-sample bias. However, $K \to \infty$ is not a practical setting. In the subsequent sections we will analyze the unbiasness/biasness of the pooling techniques under finite K (for symmetric and assymetric of meta distribution $\mathscr{D}_{\mu}$ as seen in Def.~\ref{def:sym_dif}).

\section{Convergence Analysis under Symmetric $\mathscr{D}_{\mu}$ for $K$}
\label{sec:symmetric_finite}

\subsection{Symmetry of Meta Distribution $\mathscr{D}_{\mu}$}
\label{sec:symmetry_def}
\begin{definition}[\textbf{Distributional Symmetry}]
\label{def:sym_dif}
A meta-distribution $\mathscr{D}_{\mu}$ is \emph{symmetric} around $\boldsymbol{\mu}_*$ if $\boldsymbol{\mu}_k - \boldsymbol{\mu}_* \overset{d}{=} \boldsymbol{\mu}_* - \boldsymbol{\mu}_k$.
\end{definition}

Intuitively, symmetry means domain means are equally likely to deviate above or below the population center, while on the contrary, asymmetry implies systematic bias in one direction. Furthermore, Assumption \ref{ass:moment_tail} highlights the relevance of $\mathscr{D}_{\boldsymbol{\mu}}$ for real world scenarios where sampling a domain ($Q_k$) that is far away from the mean $\boldsymbol{\mu}_*$ is highly likely and not a rare event. In example \ref{eg:example_assymetry}, the occasional $\boldsymbol{\mu}_*+5$ domains are averaged out by the majority of $\boldsymbol{\mu}_*$ domains as $K$ grows. Hence, in the infinite-sample limit, deviations vanish and the pooled mean coincides with $\boldsymbol{\mu}_*$.

\begin{example}
\label{eg:example_assymetry}
Consider a one-dimensional meta-distribution $\mathscr{D}_{\mu}$ defined as follows: $\boldsymbol{\mu}_k = 
\boldsymbol{\mu}_* + c$ with probability p, $\boldsymbol{\mu}_*$ with probability 1-p.
This distribution satisfies Assumption~\ref{ass:moment_tail}: it has a finite mean and variance, sub-exponential tails, finite fourth moment, and importantly nontrivial directional mass at $\boldsymbol{\mu}_*+c$. 

\emph{Finite-$K$ sampling:} If $K$ domains are drawn, the probability that at least one $Q_k$ is centered at $\boldsymbol{\mu}_*+c$ is $1 - (1-p)^{K} $ ($p$ can be considered small based on the fact that as we go far from the mean $\mu_*$, the probability decreases). Thus, with high probability we obtain at least one ``outlier’’ domain whose mean is far from $\boldsymbol{\mu}_*$. This illustrates that asymmetric realizations are not rare but \emph{expected} under the assumption. \emph{As $K\to\infty$:} The law of large numbers guarantees that $\frac{1}{K}\sum_{k=1}^K \boldsymbol{\mu}_k \;\;\longrightarrow\;\; \boldsymbol{\mu}_*$ almost surely.
\end{example}

\subsection{Unbiased Estimation for Finite K}
\label{sec:finite_k_symmetry_convergence}

\begin{restatable}[\textbf{Finite-$K$ Unbiasedness under Symmetry}]{theorem}{SymmetricConvergence}
\label{thm:symmetric_convergence}
Suppose the meta-distribution $\mathscr{D}_{\mu}$ is symmetric about $\boldsymbol{\mu}_*$ in the sense of Def.~\ref{def:sym_dif}, so that $\mathbb{E}[\boldsymbol{\mu}_k] = \boldsymbol{\mu}_*$. For any finite number of domains $K \geq 1$:

1) \textbf{Naive Pooling:} Let $\bar{\mu}_K = \frac{\sum_{k=1}^K\sum_{i=1}^{N_k} \mathbf{x}_{k,i}}{\sum_{k=1}^K N_k}, 
\quad x_{k,i}\sim Q_k = \mathcal{N}(\boldsymbol{\mu}_k,\sigma^2 I_d)$. Then $\mathbb{E}[\bar{\mu}_K] = \boldsymbol{\mu}*$. b) \textbf{Uniform Subsampling:} For a uniformly chosen subset $I \subset \{1,\dots,K\}$ and $n$ samples per selected domain, the subsample mean $\bar{\boldsymbol{\mu}}_I = \nicefrac{1}{n|I|}\sum_{k\in I}\sum_{j=1}^n \mathbf{x}_{k,j}$ 
satisfies $\mathbb{E}[\bar\mu_I] = \boldsymbol{\mu}_*$. c) \textbf{Matching:} If the centroid is initialized at the target, $\mathbf{c}^{(0)}=\boldsymbol{\mu}_*$, then at every iteration the matched mean 
$\bar{\mu}_{\mathcal{S}} = \frac{1}{\sum_{k\in \mathcal{S}^{(\tau)}} N_k}\sum_{k\in \mathcal{S}^{(\tau)}}\sum_{i=1}^{N_k}\mathbf{x}_{k,i}$ 
remains unbiased, i.e. $\mathbb{E}[\bar{\mu}_{\mathcal{S}}]=\boldsymbol{\mu}_*$.
\end{restatable}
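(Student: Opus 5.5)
The plan is to condition on the domain means and apply the tower property throughout. Within each domain, $\mathbb{E}[\mathbf{x}_{k,i}\mid \boldsymbol{\mu}_k]=\boldsymbol{\mu}_k$, and we treat the sample sizes $N_k$ and the subsample size $n$ as fixed, non-random quantities.

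\textbf{Naive pooling.} Write $\bar{\mu}_K=\sum_k w_k \hat{\boldsymbol{\mu}}_k$ with deterministic weights $w_k=N_k/\sum_j N_j$, where $\hat{\boldsymbol{\mu}}_k$ is the within-domain sample mean. Conditioning on $\{\boldsymbol{\mu}_k\}$ gives $\mathbb{E}[\bar{\mu}_K\mid\{\boldsymbol{\mu}_k\}]=\sum_k w_k\boldsymbol{\mu}_k$. Taking expectations and using $\mathbb{E}[\boldsymbol{\mu}_k]=\boldsymbol{\mu}_*$ together with $\sum_k w_k=1$ gives $\boldsymbol{\mu}_*$. This step needs only $\mathbb{E}[\boldsymbol{\mu}_k]=\boldsymbol{\mu}_*$, not symmetry.

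\textbf{Uniform subsampling.} Here we also condition on $I$. Since $I$ is chosen independently of $\{\boldsymbol{\mu}_k\}$ and of the samples, $\mathbb{E}[\bar{\boldsymbol{\mu}}_I\mid I,\{\boldsymbol{\mu}_k\}]=|I|^{-1}\sum_{k\in I}\boldsymbol{\mu}_k$. Its expectation given $I$ is $\boldsymbol{\mu}_*$ because the $\boldsymbol{\mu}_k$ are i.i.d. and independent of $I$. Averaging over $I$ preserves this, using $|I|\ge1$.

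\textbf{Matching.} This is the main obstacle, because the inclusion set $\mathcal{S}^{(\tau)}$ depends on the $\boldsymbol{\mu}_k$ themselves, so the weights are random and correlated with the means. The argument uses symmetry through a reflection. Define $R(\boldsymbol{\mu})=2\boldsymbol{\mu}_*-\boldsymbol{\mu}$ and apply $R$ simultaneously to all domain means and all samples, $\mathbf{x}_{k,i}\mapsto 2\boldsymbol{\mu}_*-\mathbf{x}_{k,i}$. Because $\mathscr{D}_\mu$ is symmetric (Def.~\ref{def:sym_dif}) and the Gaussian noise is symmetric, this map preserves the joint law of $(\{\boldsymbol{\mu}_k\},\{\mathbf{x}_{k,i}\})$.

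We then show by induction on $t$ that, started from $\mathbf{c}^{(0)}=\boldsymbol{\mu}_*$, the matching procedure is equivariant under $R$:
\begin{itemize}
\item[(i)] If $\mathbf{c}_t$ maps to $2\boldsymbol{\mu}_*-\mathbf{c}_t$, then $\boldsymbol{\mu}_k-\mathbf{c}_t$ maps to $-(\boldsymbol{\mu}_k-\mathbf{c}_t)$.
\item[(ii)] Assuming the matching function $M$ is even, $M(-v)=M(v)$ (as holds for a norm, the natural reading of the ``$\tau$-radius ball''), the inclusion set $\mathcal{S}^{(\tau)}$ is unchanged.
\item[(iii)] The matched mean, which is the next centroid, is an affine average of the included samples and therefore maps to $2\boldsymbol{\mu}_*-\bar{\mu}_{\mathcal{S}}$.
\end{itemize}
The base case holds because $\mathbf{c}^{(0)}=\boldsymbol{\mu}_*$ is a fixed point of $R$.

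Hence $\bar{\mu}_{\mathcal{S}}-\boldsymbol{\mu}_*\overset{d}{=}\boldsymbol{\mu}_*-\bar{\mu}_{\mathcal{S}}$ at every iteration, so $\mathbb{E}[\bar{\mu}_{\mathcal{S}}]=\boldsymbol{\mu}_*$. Integrability is immediate since $\bar{\mu}_{\mathcal{S}}$ is a convex combination of samples with finite first moments.

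One detail must be fixed: the case $\mathcal{S}^{(\tau)}=\emptyset$. We adopt the convention that the centroid is left unchanged when no domain is included. This convention is itself $R$-equivariant, so the argument goes through.
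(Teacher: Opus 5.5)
Your arguments for naive pooling and uniform subsampling are the same linearity-of-expectation computations the paper gives; you are slightly more careful about conditioning on $I$.

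For matching you take a genuinely different route. The paper argues that, with $\mathbf{c}^{(0)}=\boldsymbol{\mu}_*$, the rule ``selects a symmetric set of domains.'' It then writes $\mathbb{E}[\bar{\boldsymbol{\mu}}^{(1)}_{\mathcal{S}}]$ as $\sum_{k\in\mathcal{S}}N_k\,\mathbb{E}[\boldsymbol{\mu}_k]/\sum_{k\in\mathcal{S}}N_k$, as if $\mathcal{S}^{(\tau)}$ were fixed, and inducts on the claim that each new centroid is unbiased. The first step can be made rigorous by conditioning on $\mathcal{S}^{(\tau)}$: the included $\boldsymbol{\mu}_k$ then have a law symmetric about $\boldsymbol{\mu}_*$, provided $M$ is even. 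The induction hypothesis, however, is too weak for later iterations. Knowing $\mathbb{E}[\mathbf{c}_t]=\boldsymbol{\mu}_*$ does not make the next selection symmetric, because $\mathbf{c}_t$ is built from the same samples whose domains are being selected.

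Your reflection argument replaces this with pathwise equivariance: the centroid computed from reflected data is the reflection of the original centroid. Combined with $R$-invariance of the joint law, this gives $\bar{\boldsymbol{\mu}}_{\mathcal{S}}-\boldsymbol{\mu}_*\overset{d}{=}\boldsymbol{\mu}_*-\bar{\boldsymbol{\mu}}_{\mathcal{S}}$ at every iteration at once, with no bookkeeping of conditional laws. It also makes explicit two hypotheses the paper leaves implicit.
\begin{itemize}
\item Evenness of $M$ is genuinely needed. For example, $M(v)=\|v\|_2$ for $v_1\ge 0$ and $M(v)=2\|v\|_2$ for $v_1<0$ satisfies Assumption~\ref{assump:metric}, yet it biases the first matched mean toward a larger first coordinate.
\item A convention is needed for $\mathcal{S}^{(\tau)}=\emptyset$, which has positive probability for finite $K$.
\end{itemize}
The paper's version buys brevity and intuition. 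Yours actually establishes the ``every iteration'' claim, and it extends verbatim to any symmetric within-domain noise and any affine-equivariant centroid update, e.g.\ an EMA update without renormalization.
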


\textit{Takeaway:} Theorem \ref{thm:symmetric_convergence} (see Supplementary Sec. \ref{sec:thm_symmetric_convergence} for proof.) demonstrates that under symmetric $\mathscr{D}_{\boldsymbol{\mu}}$ (Def.~\ref{def:sym_dif}), all three pooling strategies yield unbiased estimators of the target mean $\boldsymbol{\mu}_*$ for any finite $K$. This establishes a baseline where domain heterogeneity is balanced around the target, allowing all methods to perform well. Furtherore, for matching even if $\mathbf{c}^{(0)} \neq \boldsymbol{\mu}_*$, appropriate selection of $\tau$ ensures that relevant domains are selected and $\bar{\mu}_{\mathcal{S}}$ converges to $\boldsymbol{\mu}_*$.

This is often rare for real world datasets, thus in the next section we analyze the convergence rates of the pooling methods under asymmetric $\mathscr{D}_{\boldsymbol{\mu}}$ (Def.~\ref{def:sym_dif}).

\section{Convergence Analysis under Asymmetric $\mathscr{D}_{\mu}$ for Finite $K$}
\label{sec:data_addition}

While Theorem~\ref{thm:asymptotic_behavior} establishes asymptotic convergence properties and Theorem~\ref{thm:symmetric_convergence} shows finite-K convergence under ideal symmetry, real-world applications involve finite $K$ with distributional asymmetries that create significant challenges. This section analyzes the critical transition from $K$ to $K+1$ domains under asymmetric conditions.

\begin{restatable}[\textbf{Finite-Sample Robustness under Domain Addition}]{theorem}{FiniteRobustness}
\label{thm:finite_robustness}
Let $\bar{\boldsymbol{\mu}}_K := \tfrac{1}{K}\sum_{k=1}^K \boldsymbol{\mu}_k$ denote the empirical average of the first $K$ domain means (with $\mu_k = \mathbb{E}_{x \sim Q_k}[f(x)]$), and let $\varepsilon_K := \|\bar{\boldsymbol{\mu}}_K - \boldsymbol{\mu}^\ast\|_2$. Fix $\delta\in(0,1)$. Under Assumption~\ref{ass:moment_tail}: 1. \textbf{Naive pooling.} With probability at least $1-\delta$, $\varepsilon_{K+1} \;\ge\; \frac{K}{K+1}\,\varepsilon_K \;-\; \frac{\beta\log(2/\delta)}{K+1} \;-\; O\!\left(\sqrt{\tfrac{\log(1/\delta)}{K}}\right)$. 
Moreover,
% by the directional-mass clause of Assumption~\ref{ass:moment_tail},
there exist adversarial $\mu_{K+1}$ such that with constant probability, $\varepsilon_{K+1} \ge \varepsilon_K + \Theta(1)$ (unbounded per-step deterioration).
2. \textbf{Uniform subsampling.} If $m$ domains are drawn uniformly at random and $n$ samples per domain are averaged, then $\mathbb{E}[\varepsilon_{K+1}^2] \;\le\; \varepsilon_K^2 \;+\; \frac{\lambda_{\max}(\Sigma_\mu)}{m} \;+\; \frac{d\sigma^2}{mn}$. Hence $\mathbb E[\varepsilon_{K+1}] \le \varepsilon_K + O\!\big(\tfrac{1}{\sqrt{m}} \vee \tfrac{1}{\sqrt{mn}}\big)$, while worst-case deterioration per step is $\Omega(1/m)$.
3. \textbf{Matching.} Let $S_K$ be the matched set of size $|S_K|$, with centroid $\hat c_K$. Suppose $|S_K|\ge C_0\log(1/\delta)$. If the domain $K+1$ is excluded, then $\varepsilon_{K+1} \le \varepsilon_K + C\sqrt{\tfrac{\log(1/\delta)}{|S_K|}}$. 
Furthermore, if it is included, then with probability $1-\delta$, $\varepsilon_{K+1} \le \varepsilon_K + \frac{\tau}{m_M(|S_K|+1)} + C\sqrt{\tfrac{\log(1/\delta)}{|S_K|}}$. If $M(\boldsymbol{\mu}_{K+1}-\boldsymbol{\mu}s^\ast)\le \varepsilon_K + \tau/L_M$, then $\Pr(\varepsilon_{K+1}\le\varepsilon_K)\ge 1-\exp(-\Omega(|S_K|))$ (high-probability non-deterioration).
\end{restatable}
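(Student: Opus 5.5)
The plan is to work throughout with the one-step recursion for the running average of domain means, $\bar{\boldsymbol{\mu}}_{K+1} = \frac{K}{K+1}\bar{\boldsymbol{\mu}}_K + \frac{1}{K+1}\boldsymbol{\mu}_{K+1}$. Subtracting $\boldsymbol{\mu}^\ast$ gives
\begin{equation*}
\bar{\boldsymbol{\mu}}_{K+1}-\boldsymbol{\mu}^\ast = \tfrac{K}{K+1}(\bar{\boldsymbol{\mu}}_K-\boldsymbol{\mu}^\ast) + \tfrac{1}{K+1}(\boldsymbol{\mu}_{K+1}-\boldsymbol{\mu}^\ast).
\end{equation*}
For each strategy the increment term is controlled using the relevant parts of Assumption~\ref{ass:moment_tail}: sub-exponential tails with parameter $\beta$ and finite fourth moment for concentration, and the directional-mass clause for lower bounds.

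\textbf{Naive pooling.} Apply the reverse triangle inequality to the recursion:
\begin{equation*}
\varepsilon_{K+1}\ge \tfrac{K}{K+1}\varepsilon_K - \tfrac{1}{K+1}\|\boldsymbol{\mu}_{K+1}-\boldsymbol{\mu}^\ast\|_2 .
\end{equation*}
A sub-exponential (Bernstein) tail bound gives $\|\boldsymbol{\mu}_{K+1}-\boldsymbol{\mu}^\ast\|\le \beta\log(2/\delta)$ with probability $1-\delta/2$. The $O(\sqrt{\log(1/\delta)/K})$ slack absorbs the fluctuation of $\bar{\boldsymbol{\mu}}_K$ that arises when passing from population to empirical quantities; this is a vector Bernstein bound using $\lambda_{\max}(\Sigma_\mu)$. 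A union bound then finishes the high-probability statement.

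For the adversarial clause, I will use the directional mass. With constant probability $p$, the new mean satisfies $\langle \boldsymbol{\mu}_{K+1}-\boldsymbol{\mu}^\ast, u\rangle \ge c(K+1)$ along $u=(\bar{\boldsymbol{\mu}}_K-\boldsymbol{\mu}^\ast)/\varepsilon_K$; Example~\ref{eg:example_assymetry}, with $c$ scaled, illustrates such mass. Projecting the recursion onto $u$ gives $\varepsilon_{K+1}\ge \varepsilon_K+\Theta(1)$. Note that this requires allowing the adversarial displacement to scale with $K$, or reading $\Theta(1)$ as relative to $1/(K+1)$. I will state this interpretation explicitly, since it is where the claim is weakest.

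\textbf{Uniform subsampling.} Write the subsample estimate as $\bar{\boldsymbol{\mu}}_K$ plus a zero-mean perturbation $\xi$. The perturbation has two independent parts:
\begin{itemize}
\item domain-sampling noise, with covariance $\preceq \Sigma_\mu/m$;
\item within-domain noise, with covariance $\sigma^2 I_d/(mn)$.
\end{itemize}
Expanding the square, the cross term vanishes conditionally. Taking traces (bounding the domain part by $\lambda_{\max}$ in the stated form) yields the $\mathbb{E}[\varepsilon_{K+1}^2]$ bound. Jensen's inequality together with $\sqrt{a+b}\le\sqrt a+\sqrt b$ gives the bound on $\mathbb{E}[\varepsilon_{K+1}]$. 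For the worst-case $\Omega(1/m)$ deterioration, construct a draw in which one of the $m$ selected domains is a directional outlier; it contributes weight $1/m$ to the average.

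\textbf{Matching.} This is the main obstacle, because the centroid $\hat c_K$ is data dependent. I will condition on $S_K$ and use that the included means lie in the ball $\{M(\boldsymbol{\mu}-\hat c_K)<\tau\}$. Combined with ignorability (inclusion depends only on $\boldsymbol{\mu}_k$), the within-domain averages are then independent Gaussians. A Bernstein bound over the $|S_K|\ge C_0\log(1/\delta)$ members gives the $C\sqrt{\log(1/\delta)/|S_K|}$ term, which covers the case where domain $K+1$ is excluded.

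If domain $K+1$ is included, its displacement from $\hat c_K$ is at most $\tau/m_M$, where $m_M$ is the lower norm-equivalence constant of $M$. It enters with weight $1/(|S_K|+1)$, which gives the extra $\tau/(m_M(|S_K|+1))$ term.

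For non-deterioration, the condition $M(\boldsymbol{\mu}_{K+1}-\boldsymbol{\mu}^\ast)\le \varepsilon_K+\tau/L_M$, together with the Lipschitz constant $L_M$, places the new point inside the ball of radius $\varepsilon_K$ up to the concentration error. Since a convex combination of the current mean with a point in that ball cannot increase the distance beyond the fluctuation term, a Chernoff bound on that fluctuation gives probability $1-\exp(-\Omega(|S_K|))$.

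I expect the delicate parts to be handling the dependence between $\hat c_K$ and $S_K$, and matching the constants in this last step.
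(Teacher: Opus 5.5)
The step that fails is the final non-deterioration claim for matching. From $M(\mu_{K+1}-\mu^*)\le\varepsilon_K+\tau/L_M$, norm equivalence gives only $\|\mu_{K+1}-\mu^*\|_2\le(\varepsilon_K+\tau/L_M)/m_M$; the conversion goes through the lower constant $m_M$, not $L_M$. The excess $\tau/L_M$ is a fixed positive amount, not a concentration error that shrinks with $|S_K|$. So $\mu_{K+1}$ need not lie in the ball of radius $\varepsilon_K$ about $\mu^*$, and your convexity argument has nothing to act on.

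In fact the clause is false as written. Take $M=\|\cdot\|_2$ (so $m_M=L_M=1$) and ignore within-domain noise, which is the large-$N$ regime the analysis assumes. Read $\varepsilon_K=\|\hat c_K-\mu^*\|_2$, as you and the paper do. Set $\hat c_K=\mu^*+\varepsilon_K u$ for a unit vector $u$, and let $\mu_{K+1}=\mu^*+(\varepsilon_K+a)u$ with $0<a<\tau$. The hypothesis holds, and $M(\mu_{K+1}-\hat c_K)=a<\tau$, so the domain is included. Then $\hat c_{K+1}-\mu^*=\bigl(\varepsilon_K+\tfrac{a}{|S_K|+1}\bigr)u$, so $\varepsilon_{K+1}>\varepsilon_K$ with certainty. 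Adding within-domain noise only superimposes a symmetric fluctuation on this drift, which cannot yield a $1-e^{-\Omega(|S_K|)}$ bound.

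What your convexity argument does prove is non-deterioration under the stronger hypothesis $\|\mu_{K+1}-\mu^*\|_2\le\varepsilon_K$, with a margin absorbing the noise. The paper's proof does not close this step either. It only observes that the increase terms $\tau/(m_M(|S_K|+1))$ and $C\sqrt{\log(1/\delta)/|S_K|}$ vanish as $|S_K|$ grows; that limits how much $\varepsilon_{K+1}$ can exceed $\varepsilon_K$ but never shows that it does not. Your exclusion and inclusion bounds themselves follow the paper's argument exactly.

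Your worry about the naive-pooling $\Theta(1)$ clause is justified, but the event you propose is not available. Requiring $\langle\mu_{K+1}-\mu^*,u_K\rangle\ge c(K+1)$ with probability $p$ independent of $K$ contradicts the sub-exponential tail in Assumption~\ref{ass:moment_tail}(b). Clause (d) also concerns a fixed direction $v$ with margin $\delta$, not the data-dependent $u_K$. Indeed the recursion gives $\varepsilon_{K+1}-\varepsilon_K\le(\|\mu_{K+1}-\mu^*\|_2-\varepsilon_K)/(K+1)$. A jump of size $c_0$ therefore forces $\|\mu_{K+1}-\mu^*\|_2\ge c_0(K+1)$, which has probability at most $Ce^{-\beta c_0(K+1)}$ when $\mu_{K+1}\sim\mathscr{D}_\mu$. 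Only the deterministic-adversary reading survives, and there the claim is trivial. The paper's derivation, $\varepsilon_{K+1}\ge\varepsilon_K+(c-\varepsilon_K)/(K+1)$, likewise gives only an $O(1/K)$ change.

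Your reverse-triangle lower bound is equivalent to the paper's projection onto $u_K$. With the tail $Ce^{-\beta t}$, the radius is of order $\beta^{-1}\log(C/\delta)$ rather than $\beta\log(2/\delta)$, a slip inherited from the statement.

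For subsampling, centering at $\bar\mu_K$ nicely explains the $\varepsilon_K^2$ term, since the cross term vanishes given $\mu_1,\dots,\mu_K$. But conditionally the domain-sampling covariance is $\hat\Sigma_K/m$ (sampling with replacement), where $\hat\Sigma_K$ is the empirical covariance of $\mu_1,\dots,\mu_K$. That matrix is not $\preceq\Sigma_\mu$ pointwise, so your bound holds only after averaging, with $\mathbb{E}\varepsilon_K^2$ in place of $\varepsilon_K^2$. The paper sidesteps this by treating the $m$ subsampled domains as fresh draws from $\mathscr{D}_\mu$. It computes $\mathrm{Tr}(\Sigma_\mu)/m+d\sigma^2/(mn)$ directly and adds $\varepsilon_K^2\ge0$ as slack. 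Both routes need $d=1$ to replace $\mathrm{Tr}(\Sigma_\mu)$ by $\lambda_{\max}(\Sigma_\mu)$.
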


\textit{Takeaway:} From Theorem \ref{thm:finite_robustness} (see Supplementary \ref{sec:thm_finite_robustness} for proof) we get he comparison in Table~\ref{tab:theoretical_comparison} (see Supplementary \ref{sec:thm_finite_robustness}) highlights the
fundamental difference between matching and the other strategies. For
\textbf{naive pooling}, the error can jump by a \(\Theta(1)\) amount even as
\(K\) grows, meaning deterioration never vanishes. For \textbf{subsampling}, the
worst-case increase is \(\Omega(1/m)\), which shrinks only with the number of
domains subsampled per step. \textbf{Matching} limits the
worst-case increase to \(O(1/|S_K|)\), where \(|S_K|\) is the size of the
matched set. Since \(|S_K|\) typically grows with \(K\), perturbation
vanishes as more domains are included. Thus, matching converts constant or
\(1/m\)-level risks into a self-shrinking $1/|S_K|$ bound (only
strategy with explicit non-deterioration guarantees  under heterogeneity).

Corollary~\ref{cor:optimal_tau} (see Supplementary Sec. \ref{sec:proof_optimal_tau}) provides a practical criterion for selecting the matching threshold $\tau$ to ensure safe domain addition. By setting $\tau$ appropriately relative to the current error $\epsilon_K$ and the distance of harmful domains under metric $M$, degradation is not observed under the introduciton of new domains. Table~\ref{tab:theoretical_comparison} summarizes these differences: only causal matching simultaneously provides bounded deterioration, explicit outlier robustness, and a non-deterioration safety guarantee. In the following section, we see that these theoretical guarantees of matching and the properties of the matched set $S_K$ (see Supplementary \ref{sec:matched_properties}) translate to arbitrary distribution.

\textbf{Synthetic experiments:} \label{sec:synthetic_experiments} We conducted controlled synthetic experiments to empirically validate the guarantees presented in Theorems~\ref{thm:asymptotic_behavior}, \ref{thm:symmetric_convergence}, and \ref{thm:finite_robustness} under enhanced challenges. Each domain was generated as $Q_k = \mathcal{N}(\mu_k, \sigma^2 I_d)$ with means $\mu_k \sim \mathscr{D}_{\mu}$, while the target was fixed as $\mathscr{D}_{\mathrm{test}} = \mathcal{N}(\mathbf{0}, \sigma^2 I_d)$. All the analysis done so far assumed large per-domain sample size $N$, where $\hat\mu_k \to \mu_k$. In contrast, we used small finite $N$ so that sample noise persists. This revealed that all three pooling strategies degrade compared to large-$N$ theory, but to different extents \cite{nazarovs2021graph}. Pooling and matching remain relatively stable, whereas subsampling suffers the largest errors due to $\sqrt{d/N}$ variance amplification. Experiments targeted the three regimes: (a) \textbf{Asymptotic behavior} (Theorem~\ref{thm:asymptotic_behavior}): convergence as $K \to \infty$ under asymmetric domains. (b) \textbf{Symmetric finite-$K$ convergence} (Theorem~\ref{thm:symmetric_convergence}): error bounds with symmetric domain means. (c) \textbf{Finite-sample robustness} (Theorem~\ref{thm:finite_robustness}): stability when incrementally adding domains under asymmetry. All strategies were implemented as defined in Sec.~\ref{sec:theory}, repeated over 10 seeds and averaged. Matching used iterative centroid refinement with $L_2$ metric, robust median initialization, and convergence criterion $|\mathbf{c}_{t+1}-\mathbf{c}_t|_2 < 10^{-4}$. Additional details can be seen in Supplementary~\ref{sec:synthetic} due to space constrains.

\section{Using Normalizing Flow to Generalize for Arbitrary Distributions}
\label{sec:gaussian_to_real_world}
Although analysis of matching for Gaussian distribution ensures theoretical validity, extended theoretical guarantee to arbitrary distributions would demonstrate the superiority of matching over naive pooling, in line with transport-regularized schemes in multi-marginal settings \cite{mehta2023efficient}. Theorem \ref{thm:flow_bridge} (see Supplementary \ref{sec:thm_flow_bridge} for proof) shows the preservation of the properties (see Supplementary \ref{sec:matched_properties}) of $\mathcal{S}$ under arbitrary distributional settings.

\begin{restatable}[\textbf{Normalizing Flow Transport Theorem with Lipschitz Guarantees}]{theorem}{FlowBridge}
\label{thm:flow_bridge}
Let $p_{\text{data}}$ be a compactly supported data distribution and $p_Z = \mathcal{N}(0, I_d)$. 
Under Assumption~\ref{assump:metric}, there exists a bijective normalizing flow $T: \mathbb{R}^d \to \mathbb{R}^d$ such that:
1) \textbf{Universal Approximation:} For any $\epsilon>0$, $D_{\text{KL}}(T_\# p_Z \,\|\, p_{\text{data}}) < \epsilon$, where $T_\# p_Z$ denotes the \emph{pushforward} of $p_Z$ by $T$ (i.e., the distribution of $T(z)$ for $z \sim p_Z$), and $D_{\text{KL}}$ is the Kullback--Leibler divergence.
2) \textbf{Lipschitz Control:} $T$ can be constructed with finite Lipschitz constant $L_T$ controlled by architectural norms; in particular, if $T = T_L \circ \cdots \circ T_1$ and the Jacobian of layer l, $\|J_{T_\ell}\|_{\mathrm{op}} \le C_\ell$ ($C_l$ represents the maximum operator norm of the Jacobian for the l-th layer), then $L_T \le \prod_{\ell=1}^L C_\ell$.
3) \textbf{Variance Preservation:} For any matched set $\mathcal{S}_Z = \{z : \|z-\mathbf{c}_z\|_2 < \tau\}$, letting $\mathcal{S}_X = T(\mathcal{S}_Z)$, $\frac{1}{|\mathcal{S}_X|} \sum_{x \in \mathcal{S}_X} \|x - T(\mathbf{c}_z)\|_2^2 \le L_T^2 \tau^2$.
4) \textbf{Concentration Guarantee:} With $\bar{x} = \frac{1}{|\mathcal{S}_X|}\sum_{x \in \mathcal{S}_X} x$, we have $\|\bar{x} - T(\mathbf{c}_z)\|_2 \le L_T \tau$.
\end{restatable}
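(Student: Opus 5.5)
The plan is to handle the four claims separately, since they rest on different ingredients. Parts 1 and 2 are existence and architecture statements. Parts 3 and 4 are short deterministic consequences of the Lipschitz bound.

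\textbf{Part 1 (Universal Approximation).} Because $p_{\text{data}}$ is compactly supported, I would first convolve it with a small Gaussian $\mathcal{N}(0,\eta^2 I_d)$. This gives a smooth, strictly positive density $p_\eta$ with $p_\eta \to p_{\text{data}}$ as $\eta \to 0$. Under Assumption~\ref{assump:metric} the KL divergence between $p_\eta$ and $p_{\text{data}}$ is taken to be controlled, which is itself a regularity requirement on $p_{\text{data}}$ that I would lean on the assumption for. Between $p_Z$ and $p_\eta$ I would use the Knothe--Rosenblatt triangular map, or alternatively the Brenier optimal transport map. Both are bijective and smooth on the relevant region. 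I would then invoke the known universal approximation results for triangular and affine-coupling flows (Huang et al.; Teshima et al.) to obtain a flow $T$ whose pushforward is within $\epsilon/2$ in KL of $p_\eta$. Combining the two approximations via lower semicontinuity of KL and a triangle-type argument on densities gives $D_{\text{KL}}(T_\# p_Z\|p_{\text{data}})<\epsilon$.

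\textbf{Main obstacle.} The hard step is making Part 1 compatible with Part 2. A map pushing a Gaussian onto a compactly supported target has unbounded Jacobian near the boundary of the support. My plan is to work throughout with the smoothed target $p_\eta$ and only on a large ball $B_R$ that carries $1-\epsilon'$ of the $p_Z$ mass. On $B_R$, Caffarelli-type regularity gives a finite Lipschitz constant, and outside $B_R$ the map can be clipped to affine. The resulting error in the KL bound would be absorbed into $\epsilon$.

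\textbf{Part 2 (Lipschitz Control).} For the composition $T=T_L\circ\cdots\circ T_1$, the chain rule gives $J_T(z)=J_{T_L}(\cdot)\cdots J_{T_1}(z)$. Submultiplicativity of the operator norm then yields $\sup_z\|J_T(z)\|_{\mathrm{op}}\le\prod_\ell C_\ell$. By the mean value inequality along segments in $\mathbb{R}^d$ (which is convex), $\|T(u)-T(v)\|_2\le \prod_\ell C_\ell\,\|u-v\|_2$, so $L_T\le\prod_\ell C_\ell$.

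\textbf{Parts 3 and 4 (Variance Preservation and Concentration).} Take any $x\in\mathcal{S}_X$. Then $x=T(z)$ for some $z$ with $\|z-\mathbf{c}_z\|_2<\tau$, so $\|x-T(\mathbf{c}_z)\|_2\le L_T\|z-\mathbf{c}_z\|_2<L_T\tau$. Squaring and averaging over $\mathcal{S}_X$ (read as a finite sample set, or as a normalized integral) gives Part 3. For Part 4, convexity of the norm (Jensen) gives
\begin{equation*}
\|\bar x-T(\mathbf{c}_z)\|_2=\Big\|\tfrac{1}{|\mathcal{S}_X|}\textstyle\sum_{x}(x-T(\mathbf{c}_z))\Big\|_2\le \tfrac{1}{|\mathcal{S}_X|}\textstyle\sum_x\|x-T(\mathbf{c}_z)\|_2\le L_T\tau .
\end{equation*}
The same bound also follows from Part 3 via Cauchy--Schwarz. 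These steps are routine once $L_T<\infty$ is secured.
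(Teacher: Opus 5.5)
\textbf{Parts 2--4 are fine.} They are essentially the paper's argument. You use submultiplicativity of the Jacobian operator norms along the composition; the paper adds that spectral normalization or weight clipping of the coupling layers is what makes each $C_\ell$ finite. You then use the pointwise bound $\|T(z)-T(\mathbf{c}_z)\|_2<L_T\tau$, squared and averaged for Part 3, and the triangle inequality on the average for Part 4. Reading $|\mathcal{S}_X|$ as a finite sample or a normalized integral is a sensible repair, since $\mathcal{S}_Z$ as written is an uncountable ball.

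\textbf{The gap is in Part 1, and smoothing cannot close it.} The combination step fails on its own terms. KL obeys no triangle inequality. Lower semicontinuity only bounds the divergence of a weak \emph{limit} by a $\liminf$, so it says nothing about a particular flow $T$. Worse, the bridge quantity in the direction you need, $D_{\mathrm{KL}}(p_\eta\|p_{\text{data}})$, equals $+\infty$ for every $\eta>0$. This is because $p_\eta$ is positive everywhere while $p_{\text{data}}$ vanishes off a compact set. Assumption~\ref{assump:metric} concerns only the matching function $M$, so it gives no control here.

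The real obstruction is the direction of the divergence in the statement. Suppose $T$ is a continuous bijection of $\mathbb{R}^d$, as it must be whenever $L_T<\infty$ in Part 2. Take any nonempty open $U$ disjoint from $\mathrm{supp}(p_{\text{data}})$. Its preimage $T^{-1}(U)$ is nonempty and open, so $T_\#p_Z(U)=p_Z(T^{-1}(U))>0$. Hence $T_\#p_Z\not\ll p_{\text{data}}$ and $D_{\mathrm{KL}}(T_\#p_Z\|p_{\text{data}})=+\infty$. Your affine extension outside $B_R$ creates exactly this kind of mass, so its effect cannot be ``absorbed into $\epsilon$''.

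Your named obstacle is also slightly misplaced. The forward map from a Gaussian onto a compact target compresses the tails and need not have a large Jacobian. For example, in one dimension $z\mapsto 2\Phi(z)-1$, with $\Phi,\phi$ the standard normal CDF and density, has derivative at most $2\phi(0)$. It is the inverse that degenerates, and the genuine issue is surjectivity onto $\mathbb{R}^d$.

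So Part 1 as written cannot hold for the same $T$ as in Part 2. A provable version needs a changed statement. Options include:
\begin{itemize}
\item the forward divergence $D_{\mathrm{KL}}(p_{\text{data}}\|T_\#p_Z)$, which is no longer forced to be infinite since $T_\#p_Z$ has a positive density;
\item approximating the smoothed target $p_\eta$ itself;
\item weak or Wasserstein closeness.
\end{itemize}
Your smoothing-plus-Lipschitz-transport construction is a reasonable route to any of these. The paper's own proof of Part 1 is only a citation of universal-approximation results (Huang et al.; Lee et al.) and does not confront this issue either.
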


\textit{Takeaway:} Theorem \ref{thm:flow_bridge} (see Supplementary Sec. \ref{sec:thm_flow_bridge} for proof) demonstrates that the favorable properties of matching extend from Gaussian to real-world non-Gaussian data through normalizing flows. The utilization of normalizing flow is motivated by the invariance of Bayes error under invertible transformations, as established by \cite{theisen2021evaluating}. This allows us to employ normalizing flows as approximately invertible mappings, ensuring that the robustness bounds of Theorem~\ref{thm:finite_robustness} remain valid beyond Gaussian distributions. The key insight is that flows preserve the geometric structure of matched sets: tight clusters in the latent space ($\|z - \mathbf{c}_z\|_2 < \tau$) map to tight clusters in the data space (variance $\leq L_T^2 \tau^2$). This means the theoretical guarantees for Gaussian data (bounded error, variance reduction, and concentration) carry over to real-world distributions. The Lipschitz constant $L_T$ acts as a distortion factor, showing that well-behaved flows (small $L_T$) maintain the matching benefits. This bridges our Gaussian analysis to practical applications where data exhibit complex, multimodal structure.

\section{Extension to Multimodal Settings}
\label{sec:multimodal}

The theoretical framework established thus far assumes a unimodal distribution where a single centroid $\mathbf{c}_n$ converges to the target mean $\boldsymbol{\mu}_*$. However, real-world applications, particularly in medical imaging, often involve multimodal distributions corresponding to multiple classes of the dataset. We define multimodal data distribution as in Def.~\ref{def:multimodal}.

\begin{definition}[Multimodal Data Distribution]
\label{def:multimodal}
A data distribution $P$ is \emph{M-modal} if it can be expressed as a mixture of $M$ unimodal components $P(x) = \sum_{m=1}^M \pi_m P_m(x)$, where:
1) $\pi_m \geq 0$ are mixing coefficients with $\sum_{m=1}^M \pi_m = 1$
2) Each $P_m$ is unimodal with mode $\boldsymbol{\mu}_m^*$ and covariance $\Sigma_m$
3) The component distributions may have different shapes and scales.

The target distribution in multimodal settings becomes $\mathscr{D}_{\mathrm{test}} = \sum_{m=1}^M \pi_m \Norm(\boldsymbol{\mu}_m^*, \sigma_m^2 \I_d)$.
\end{definition}

We maintain centroids $\{\mathbf{c}_{n,m}\}_{m=1}^M$ and radii $\{\tau_m\}_{m=1}^M$, i.e., a sample $\mathbf{x}$ is assigned to mode $m$ if $M(\mathbf{x}-\mathbf{c}_{n,m})<\tau_m$ and $M(\mathbf{x}-\mathbf{c}_{n,j})\ge \tau_j$ for all $j\neq m$. Centroids are updated per mode using only assigned samples. Lemma \ref{lemma:multimodal_reduction} (Supplementary Sec. \ref{sec:proof_multimodal_reduction}) ensures M-modal convergence as M unimodal convergence under sufficient separation of the M modes. Under the conditions of Lemma~\ref{lemma:multimodal_reduction}, matching guarantees that adding new domains never increases the error for any mode $\epsilon_{K+1,m} \leq \epsilon_{K,m} \quad \forall m = 1,\ldots,M$ with high probability, where $\epsilon_{K,m} = \|\mathbf{c}_{n,m}^{(K)} - \boldsymbol{\mu}_m^*\|_2$. This follows directly from applying Theorem~\ref{thm:finite_robustness} to each mode independently. The separation condition ensures that harmful domains affecting one mode do not interfere with other modes. Each mode's matching process evolves independently and enjoys the same non-deterioration guarantees as the unimodal case.

\section{Experimental Results for Zero-shot Anomaly Detection} 
\label{sec: Anomaly detection}

\begin{table*}[!t]
\scriptsize
\centering
\caption{\textbf{Comparisons with state-of-the-art zero-shot anomaly detection methods.} The AUCs (in \%) for AC and AS are reported. The best result is in bold, and the second-best result is underlined. * are the results we achieved after reimplementation.}
\resizebox{0.9\textwidth}{!}
{\begin{tabular}{@{}lccccccccc@{}}
\toprule
Method & HIS & ChestXray & OCT17 & \multicolumn{2}{c}{BrainMRI} & \multicolumn{2}{c}{LiverCT} & \multicolumn{2}{c}{RESC} \\  
\cmidrule(lr){2-2} \cmidrule(lr){3-3} \cmidrule(lr){4-4} \cmidrule(lr){5-6} \cmidrule(lr){7-8} \cmidrule(lr){9-10}  
& AC & AC & AC & AC & AS & AC & AS & AC & AS \\  
\midrule  
WinCLIP\cite{jeong2023winclip} & 69.85 & \underline{70.86} & 46.64 & 66.49 & 85.99 & 64.20 & 96.20 & 42.51 & 80.56 \\  
APRIL-GAN\cite{chen2023april} & 72.36 & 57.49 & 92.61 & 76.43 & \textbf{91.79} & 70.57 & 97.05 & 75.67 & 85.23 \\  
AnomalyCLIP*\cite{deng2022anomaly} & \textbf{80.27} & 69.11& 94.60 & 79.72 & 90.47 & 78.72 & 97.88 & 88.49 & 91.14 \\
AdaCLIP\cite{cao2024adaclip} & - & - & 68.8 & 73.1 & - & - & - & - & - \\
Mao et al.\cite{mao2025beyond} & - & - & 91.2 & 73.7 & - & - & - & - & - \\
MVFA-AD*\cite{huang2024adapting} & \underline{78.88} & 69.11 & 96.62 & 75.05 & \underline{90.33} & \textbf{80.77} & 98.06 & 88.53 & 91.27 \\
BiLORA*\cite{zhu2025bilora} & 77.68 & 66.75 & \underline{97.05} & \underline{87.65} & 89.40 & \underline{80.64} & \underline{98.53} & \underline{89.15} & \underline{93.46} \\
\midrule
Ours & 75.41 & \textbf{74.94} & \textbf{97.21} & \textbf{87.67} & 90.29 & 80.26 & \textbf{98.75} & \textbf{89.42} & \textbf{95.09} \\
\bottomrule  
\end{tabular}}  
\label{tab:zero-shot} 
\vspace{-1em}
\end{table*}

The synthetic Gaussian experiments in Section~\ref{sec:synthetic} confirmed our theoretical guarantees while exposing finite per-domain sample effects. Unlike the large-$N$ assumptions in theory, limited $N$ introduces sample-level noise, producing severe degradation under subsampling and milder but still visible declines for pooling and matching. This connects theory with practice: robust adaptation must handle finite-$N$ asymmetries rather than relying solely on asymptotic guarantees.

Zero-shot medical anomaly detection provides the strongest stress test of this setting, extending beyond synthetic domains to one of the most challenging real-world scenarios for distributional generalization. Unlike natural scene anomalies (e.g., surface defects) that are structurally consistent and visually distinct, medical anomalies are subtle, heterogeneous, and often overlap with normal anatomical variation. This induces three extreme forms of heterogeneity: (a) \textbf{modality heterogeneity}, as imaging technologies (X-ray, MRI, CT, OCT) generate fundamentally distinct feature distributions; (b) \textbf{pathological similarity}, as abnormal and normal samples exhibit high feature overlap, making separation difficult; and (c) \textbf{institutional bias}, where scanner protocols and acquisition pipelines introduce asymmetric shifts even within the same modality. Together, this “triple heterogeneity” defines the exact worst-case conditions for naive pooling, breaking exchangeability assumptions and amplifying asymmetry. The zero-shot requirement—generalization to unseen modalities and institutions. further exacerbates these challenges.

Under such conditions, three failure modes emerge when $\mathcal{D}_{\text{test}}$ is unseen: (1) the meta-distribution $\mathscr{D}_\mu$ is inherently asymmetric (Def.~\ref{def:sym_dif}); (2) finite-sample bias becomes unavoidable under pooling; and (3) asymmetry compounds as domains are sequentially added. This is precisely where causal matching is critical. Unlike domain-level pooling, our implementation operates at the \emph{sample level}, using an adaptive threshold $\tau$ in the learned feature space. The feature space filters nuisance variability while preserving anomaly–normal semantics, making sub-exponential sample-level behavior more realistic. Consequently, causal matching provides robustness in the finite-$N$ regime, yielding empirical non-deterioration exactly where pooling and subsampling fail, and extending theoretical guarantees (positivity, ignorability, consistency) beyond the asymptotic, domain-level analysis.

\textbf{Setup:} The baselines and related works are described in Supplementary Sec. \ref{sec:related_works}. We employ the BMAD benchmark \cite{bao2024bmad} spanning five medical domains with six datasets: brain MRI (BrainMRI), liver CT (LiverCT), retinal OCT (OCT2017)), chest X-ray (ChestXray), and digital histopathology (HIS). This selection captures the extreme heterogeneity discussed in Section~\ref{sec: Anomaly detection}, with BrainMRI, LiverCT, and RESC supporting both anomaly classification (AC) and segmentation (AS), while OCT17, ChestXray, and HIS are used for AC only. Following the leave-one-out strategy \cite{huang2024adapting}, we train on datasets ${D_1, D_2, ..., D_{i-1}}$ and evaluate on unseen $D_i$. This setup directly instantiates the theoretical challenges of asymmetric domain adaptation: test domains may exhibit selective semantic alignment with specific training domains, creating the perfect conditions for the performance degradation predicted by Theorem~\ref{thm:finite_robustness}. While selectively fine-tuning on aligned domains might seem appealing, medical data scarcity necessitates using all available data making robust pooling strategies essential.

\textbf{Evaluation Metrics:} We report standard Area Under the ROC Curve (AUC) metrics: image-level AUC for anomaly classification and pixel-level AUC for segmentation. However, these conventional metrics fail to capture a critical aspect of real-world deployment: the \emph{monotonicity of improvement} when incorporating additional domains. Theorem~\ref{thm:finite_robustness} highlighted that under asymmetry, the critical quantity is the per-step deterioration $\varepsilon_{K+1}-\varepsilon_K$, which can be unbounded for naive pooling but is controlled under structured strategies. To empirically measure this effect, we introduce the \textbf{Data Addition Score (DA)}, which translates the theoretical notion of stepwise deterioration into an evaluation metric at the performance level. Formally, given performance sequence $\mathbf{y} = [y_1, y_2, y_3, y_4, y_5]$ where $y_i$ denotes performance after incorporating $i$ domains, and importance weights $\mathbf{s} = [0.1, 0.2, 0.3, 0.4]$ emphasizing later stages, the DA is defined as:
\begin{equation}
\vspace{-1.5em}
\label{eq:15}
\text{DA}(\mathbf{y}) = \sum_{i=1}^{4} \mathbb{1}{\{y_{i+1} \geq y_i\}} \cdot \left( 1 + \frac{y_{i+1} - y_i}{10} \cdot s_i \right).
\end{equation}
Here, the indicator $\mathbb{1}{\{y_{i+1} \geq y_i\}}$ captures the non-deterioration requirement—if performance drops, the score for that step is zero—while the weighted improvement term rewards consistent gains. In this way, the DA directly operationalizes the deterioration analysis of Theorem~\ref{thm:finite_robustness}, providing an empirical lens on a method's improvements as domains are added.

\begin{figure*}[t]
    \centering
    \includegraphics[width=0.9\textwidth]{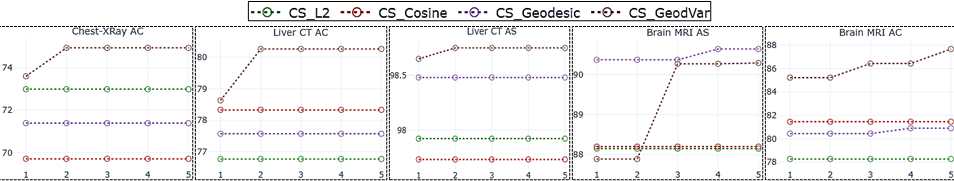}
    \caption{\textbf{Ablation comparing different matching metrics $M$:} Euclidean distance (CS\_L2), cosine similarity (CS\_Cosine), geodesic distance (CS\_Geodesic), and geodesic distance with VACA (CS\_GeodVar). Performance is reported in terms of domain alignment (DA), anomaly classification (AC), and anomaly segmentation (AS). X axis represents the sequential domain addition and the Y axis represents the AUC scores. CS\_L2 and CS\_Cosine achieves DA of 4 for BrainMRI, LiverCT, and ChestXRay. CS\_Geo achieves DA of 4 for ChestXRay and LiverCT while achieving DA of 4.0138 for BrainMRI detection (AC) and 4.0081 for BrainMRI segmentation (AS). CS\_GeodVar achieves DA of 4.0134 for ChestXRay, 4.0163 for LiverCT detection (AC), 4.0010 for LiverCT segmentation (AS), 4.0736 for BrainMRI detection (AC) and 4.0484 for BrainMRI segmentation (AS).}
    \label{fig:ablation}
    \vspace{-1em}
\end{figure*}

\subsection{Matching for Anomaly Detection}
\label{sec:method}
\begin{figure}[t]
    \centering
    \includegraphics[width=0.4\textwidth]{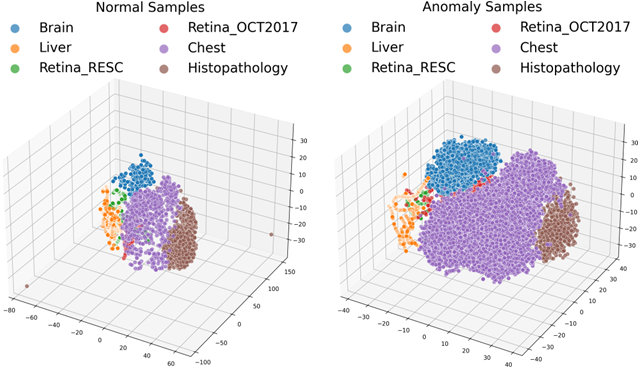}
    \caption{\textbf{Illustration of modality-induced clustering in CLIP feature space.} Embeddings of distinct modalities form disjoint angular regions on the hypersphere (see Table \ref{tab:dataset_similarity} in Supplementary Sec. \ref{sec:modality_clusters} for more details).}
    \label{fig:mind_the_gap}
    \vspace{-2em}
\end{figure}
\textbf{Geodesics for Matching:} \label{sec:mind_the_gap}  
Zero-shot transfer is hindered by the well-documented \emph{modality gap} \cite{levi2024double,liang2022mind}, where CLIP embeddings from text and image subspaces occupy disjoint regions of the hypersphere. Our analysis extends this: even within the image space, modality-specific clustering (e.g., ChestXRay, BrainMRI, Histopathology) confines embeddings to distinct angular regions $\mathbb{S}^{d-1}$. This creates large inter-modality angular gaps (Figure~\ref{fig:mind_the_gap}), making zero-shot anomaly detection particularly challenging. If parameters lie closer to one modality’s cluster, the semantic gap to others is amplified. Because CLIP embeddings are $\ell_2$-normalized, they reside on $\mathbb{S}^{d-1}$, where the intrinsic metric is the geodesic distance $d_G(f_1,f_2) \;=\; \arccos(f_1^\top f_2)$. Unlike Euclidean distance (underestimates large gaps) or cosine similarity (lacks metric structure), $d_G$ respects hyperspherical curvature, preserves neighborhoods, and faithfully represents modality separations. Under the bi-Lipschitzness assumption (Assumption~\ref{assump:metric}), this ensures stability of matching and underpins theoretical guarantees.  

\textbf{Centroid Initialization and Update:} \label{sec:geodesic_centroid}  
We initialize normal and anomaly centroids at $\mathbf{c}^{(0)}=\mathbf{0}$, avoiding warm-start bias. For a feature $f_j$ satisfying the geodesic matching criterion, centroids are updated by exponential moving average (EMA): $\tilde{\mathbf{c}}^{(t+1)} = \alpha \mathbf{c}^{(t)} + (1-\alpha) f_j, 
\quad 
\mathbf{c}^{(t+1)} = \frac{\tilde{\mathbf{c}}^{(t+1)}}{\|\tilde{\mathbf{c}}^{(t+1)}\|_2}, \quad \alpha=0.5$.
Projection ensures centroids remain on $\mathbb{S}^{d-1}$. Within the spherical cap where matching occurs, $d_G$ behaves like Euclidean distance up to small scaling, so all guarantees—positivity, ignorability, consistency, and finite-$K$ robustness (Theorem~\ref{thm:finite_robustness})—remain intact. The geometric advantage of $d_G$ is that centroid neighborhoods expand consistently with the hypersphere, allowing a tighter adaptive threshold $\tau$.  

\textbf{Geodesic Loss for Intra-/Inter-Cluster Structure:} \label{sec:geodesic_loss}  
We directly embed $d_G$ in the learning objective:  

\emph{Intra-cluster compactness:} $\mathcal{L}_{\text{intra}}
= \frac{1}{|\mathcal{S}^+|} \sum_{j \in \mathcal{S}^+} d_G(f_j,\mathbf{c}^+)^2
+ \frac{1}{|\mathcal{S}^-|} \sum_{j \in \mathcal{S}^-} d_G(f_j,\mathbf{c}^-)^2$.
\emph{Inter-cluster separation:} $\mathcal{L}_{\text{inter}} = - d_G(\mathbf{c}^+,\mathbf{c}^-)$.

Maximizing $\mathcal{L}_{\text{inter}}$ prevents mode collapse and guarantees identifiability. The final geodesic loss is $\mathcal{L}_{\text{geo}} = \lambda_1 \mathcal{L}_{\text{intra}} + \lambda_2 \mathcal{L}_{\text{inter}}, \lambda_1,\lambda_2>0$,
balancing compactness and separation. This directly addresses Lemma~\ref{lemma:multimodal_reduction}, which shows that minimal mode separation is necessary for consistent convergence.

\textbf{Variance-Aware Channel Attention (VACA):} \label{sec:variance_attention} 
Variance decomposition (Figure~\ref{fig:variance_plot}, Supplementary Sec.~\ref{sec:variance_supplementary}) reveals that only a subset of embedding dimensions carries class-specific signal \cite{yu2024clipceil}, while many encode modality-specific variance that induces hyperspherical clustering (Figure~\ref{fig:mind_the_gap}). We address this with \emph{variance-aware channel attention} (VACA), which explicitly upweights class-discriminative channels and suppresses modality-driven ones. Let patch embeddings be $P\in\mathbb{R}^{B\times N\times D}$ (batch $B$, patches $N$, channels $D$), normalized along the channel dimension, and let class embeddings for \emph{normal} and \emph{anomaly} be $T\in\mathbb{R}^{D\times 2}$ (L2-normalized). Broadcasting and taking their Hadamard product yields per-channel contributions, aggregated into a discriminative score vector $\Delta\in\mathbb{R}^D$. Channels with high $\Delta_d$ align with the anomaly/normal axis, while low-$\Delta_d$ reflect modality noise. A small MLP with Softplus output produces nonnegative gains, scaled by $\gamma$ to form channel weights $w=1+\gamma\,a$. Reweighted patches $\widetilde{P}=P\odot w$ emphasize class signal, while a variance surrogate $\mathsf{Var}_{\mathrm{disc}}$, computed from weighted $\Delta$, summarizes discriminative energy. This reweighting shrinks angular gaps between modalities and strengthens “local positivity’’ in hyperspherical neighborhoods, ensuring subsequent geodesic matching (Section~\ref{sec:geodesic_centroid}) operates on richer, domain-invariant features.

\textbf{Ablation experiments:} Figure~\ref{fig:ablation} and Supplementary Sec.~\ref{sec:ablation} highlight the effectiveness of geodesic distance in capturing faithful neighborhoods on the hypersphere, outperforming Euclidean and cosine metrics. When combined with VACA, DA scores improve consistently, as variance-aware reweighting suppresses modality-specific leakage and amplifies class-discriminative channels. This shifts modality clusters closer in the discriminative subspace, increasing successful cross-domain matching. Geometrically, this restores local domain overlap, directly supporting the \emph{positivity assumption} by ensuring that normal and anomaly features across modalities share sufficient support for non-zero inclusion probability. Given the scarcity of medical data, this guarantees that relevant samples remain available for matching.\footnote{Unlike a fixed threshold, $\tau$ is defined adaptively as the minimum distance of a sample to the centroids (See Supplementary Sec. \ref{sec:algorithm}).  This ensures that matching decisions evolve naturally as centroids are updated via exponential moving averages and as feature space transforms while training.} The overall framework is summarized in Supplementary Sec.~\ref{sec:algorithm} (Algorithm~\ref{alg:main_method}). 

\textbf{Empirical comparison with the existing methods:} Compared to the state of the art anomaly detection methods like MVFA \cite{huang2024adapting}, AnomalyCLIP \cite{zhou2023anomalyclip}, and continual learning frameworks like BiLORA \cite{zhu2025bilora} that utilize naive pooling for incremental learning setups where domains (finetuning as domains keeps on increasing), our proposed method performs better due to the inherent advantage of matching by filtering out the confounding data points (see Fig. \ref{fig:final}). We see that the DA score, which measures the dips in performance of the model as newer domains are introduced for finetuning (in accordance to Theorem \ref{thm:finite_robustness}), are always above 4 (a DA score of 4 or above indicates no decrease or increase in the model's performance as we keep on finetuning on newer domains) for our proposed method unlike the existing methods (detailed analysis is provided in Supplementary Sec. \ref{sec:sota}). Table \ref{tab:zero-shot} demonstartes that our method achieves comparable performance with respect to the state of the art zero-shot anomaly detection methods while also surpassing them in most cases. \textit{Takeaway:} The proposed Geodesic-aware Matching method is the only method with non-detoriative performance behaviour with addition of more data compared to exisitng baselines along with superior downstream performance.

\section{Conclusion}
Pooling heterogeneous datasets is attractive for representation learning, but we showed both theoretically and empirically that naive pooling or uniform subsampling can amplify distributional asymmetries and harm generalization, especially under domain addition. Our matching framework addresses this by adaptively selecting samples around hypersphere centroids, ensuring double robustness and filtering confounding domains that violate positivity. Practically, we instantiated these ideas in zero-shot medical anomaly detection, an extreme heterogeneity case where CLIP embeddings cluster by modality. Using geodesic distance as the intrinsic matching metric and augmenting it with VACA, we emphasized class-discriminative while suppressing modality-driven dimensions.

\textbf{Acknowledgments.} Prof. Lokhande acknowledges support from University at Buffalo startup funds, an Adobe Research Gift, an NVIDIA Academic Grant, and the National Center for Advancing Translational Sciences of the NIH (award UM1TR005296 to the University at Buffalo). Dr.\ Chakraborty performed this work under the auspices of the U.S.\ Department of Energy by Lawrence Livermore National Laboratory under Contract DE-AC52-07NA27344.

%%%%%%%%%%%%%%%%%%%%%%%%%%%%%%%%%%%%%%%%%%%%%%%%%%%%%%%%%%%%

\bibliographystyle{plain}
\bibliography{references}

@article{gulrajani2020search,
  title={In search of lost domain generalization},
  author={Gulrajani, Ishaan and Lopez-Paz, David},
  journal={arXiv preprint arXiv:2007.01434},
  year={2020}
}

@inproceedings{mahajan2021domain,
  title={Domain generalization using causal matching},
  author={Mahajan, Divyat and Tople, Shruti and Sharma, Amit},
  booktitle={International conference on machine learning},
  pages={7313--7324},
  year={2021},
  organization={PMLR}
}

@inproceedings{lokhande2022equivariance,
  title={Equivariance allows handling multiple nuisance variables when analyzing pooled neuroimaging datasets},
  author={Lokhande, Vishnu Suresh and Chakraborty, Rudrasis and Ravi, Sathya N and Singh, Vikas},
  booktitle={Proceedings of the IEEE/CVF Conference on Computer Vision and Pattern Recognition},
  pages={10432--10441},
  year={2022}
}

@article{mehta2023efficient,
  title={EFFICIENT DISCRETE MULTI-MARGINAL OPTIMAL TRANSPORT REGULARIZATION},
  author={Mehta, Ronak and Kline, Jeffery and Lokhande, Vishnu Suresh and Fung, Glenn and Singh, Vikas},
  year={2023},
  publisher={OpenReview. net}
}

@inproceedings{nazarovs2021graph,
  title={Graph reparameterizations for enabling 1000+ Monte Carlo iterations in Bayesian deep neural networks},
  author={Nazarovs, Jurijs and Mehta, Ronak R and Lokhande, Vishnu Suresh and Singh, Vikas},
  booktitle={Uncertainty in Artificial Intelligence},
  pages={118--128},
  year={2021},
  organization={PMLR}
}

@article{zhou2018statistical,
  title={Statistical tests and identifiability conditions for pooling and analyzing multisite datasets},
  author={Zhou, Hao Henry and Singh, Vikas and Johnson, Sterling C and Wahba, Grace and Alzheimer’s Disease Neuroimaging Initiative and Berkeley and Charles DeCArli},
  journal={Proceedings of the National Academy of Sciences},
  volume={115},
  number={7},
  pages={1481--1486},
  year={2018},
  publisher={National Academy of Sciences}
}

@inproceedings{huang2024adapting,
  title={Adapting visual-language models for generalizable anomaly detection in medical images},
  author={Huang, Chaoqin and Jiang, Aofan and Feng, Jinghao and Zhang, Ya and Wang, Xinchao and Wang, Yanfeng},
  booktitle={Proceedings of the IEEE/CVF Conference on Computer Vision and Pattern Recognition},
  pages={11375--11385},
  year={2024}
}

@article{shen2024data,
  title={The data addition dilemma},
  author={Shen, Judy Hanwen and Raji, Inioluwa Deborah and Chen, Irene Y},
  journal={arXiv preprint arXiv:2408.04154},
  year={2024}
}

@article{kim2016quasi,
  title={Quasi-experimental designs for causal inference},
  author={Kim, Yongnam and Steiner, Peter},
  journal={Educational psychologist},
  volume={51},
  number={3-4},
  pages={395--405},
  year={2016},
  publisher={Taylor \& Francis}
}

@article{thompson2014enigma,
  title={The ENIGMA Consortium: large-scale collaborative analyses of neuroimaging and genetic data},
  author={Thompson, Paul M and Stein, Jason L and Medland, Sarah E and Hibar, Derrek P and Vasquez, Alejandro Arias and Renteria, Miguel E and Toro, Roberto and Jahanshad, Neda and Schumann, Gunter and Franke, Barbara and others},
  journal={Brain imaging and behavior},
  volume={8},
  number={2},
  pages={153--182},
  year={2014},
  publisher={Springer}
}

@article{zhou2022domain,
  title={Domain generalization: A survey},
  author={Zhou, Kaiyang and Liu, Ziwei and Qiao, Yu and Xiang, Tao and Loy, Chen Change},
  journal={IEEE transactions on pattern analysis and machine intelligence},
  volume={45},
  number={4},
  pages={4396--4415},
  year={2022},
  publisher={IEEE}
}

@article{kingma2018glow,
  title={Glow: Generative flow with invertible 1x1 convolutions},
  author={Kingma, Durk P and Dhariwal, Prafulla},
  journal={Advances in neural information processing systems},
  volume={31},
  year={2018}
}

@article{dinh2016density,
  title={Density estimation using real nvp},
  author={Dinh, Laurent and Sohl-Dickstein, Jascha and Bengio, Samy},
  journal={arXiv preprint arXiv:1605.08803},
  year={2016}
}

@inproceedings{zhu2025bilora,
  title={BiLoRA: Almost-Orthogonal Parameter Spaces for Continual Learning},
  author={Zhu, Hao and Zhang, Yifei and Dong, Junhao and Koniusz, Piotr},
  booktitle={Proceedings of the Computer Vision and Pattern Recognition Conference},
  pages={25613--25622},
  year={2025}
}

@article{levi2024double,
  title={The double-ellipsoid geometry of clip},
  author={Levi, Meir Yossef and Gilboa, Guy},
  journal={arXiv preprint arXiv:2411.14517},
  year={2024}
}

@article{yu2024clipceil,
  title={Clipceil: Domain generalization through clip via channel refinement and image-text alignment},
  author={Yu, Xi and Yoo, Shinjae and Lin, Yuewei},
  journal={Advances in Neural Information Processing Systems},
  volume={37},
  pages={4267--4294},
  year={2024}
}

@article{rosenbaum1983central,
  title={The central role of the propensity score in observational studies for causal effects},
  author={Rosenbaum, Paul R and Rubin, Donald B},
  journal={Biometrika},
  volume={70},
  number={1},
  pages={41--55},
  year={1983},
  publisher={Oxford University Press}
}

@inproceedings{shalit2017estimating,
  title={Estimating Individual Treatment Effect: Generalization Bounds and Algorithms},
  author={Shalit, Uri and Johansson, Fredrik D. and Sontag, David},
  booktitle={Proceedings of the 34th International Conference on Machine Learning},
  pages={3076--3085},
  year={2017}
}

@article{dunipace2021optimal,
  title={Optimal Transport Weights for Causal Inference},
  author={Dunipace, Eric},
  journal={arXiv preprint arXiv:2109.01991},
  year={2021}
}

@article{gunsilius2021matching,
  title={Matching for Causal Effects via Multimarginal Unbalanced Optimal Transport},
  author={Gunsilius, Florian and Xu, Yuliang},
  journal={arXiv preprint arXiv:2112.04398},
  year={2021}
}

@article{liang2022mind,
  title={Mind the gap: Understanding the modality gap in multi-modal contrastive representation learning},
  author={Liang, Victor Weixin and Zhang, Yuhui and Kwon, Yongchan and Yeung, Serena and Zou, James Y},
  journal={Advances in Neural Information Processing Systems},
  volume={35},
  pages={17612--17625},
  year={2022}
}

@inproceedings{roth2022towards,
  title={Towards total recall in industrial anomaly detection},
  author={Roth, Karsten and Pemula, Latha and Zepeda, Joaquin and Sch{\"o}lkopf, Bernhard and Brox, Thomas and Gehler, Peter},
  booktitle={Proceedings of the IEEE/CVF conference on computer vision and pattern recognition},
  pages={14318--14328},
  year={2022}
}

@article{chen2023april,
  title={APRIL-GAN: A Zero-/Few-Shot Anomaly Classification and Segmentation Method for CVPR 2023 VAND Workshop Challenge Tracks 1\&2: 1st Place on Zero-shot AD and 4th Place on Few-shot AD},
  author={Chen, Xuhai and Han, Yue and Zhang, Jiangning},
  journal={arXiv preprint arXiv:2305.17382},
  year={2023}
}

@inproceedings{deng2022anomaly,
  title={Anomaly detection via reverse distillation from one-class embedding},
  author={Deng, Hanqiu and Li, Xingyu},
  booktitle={Proceedings of the IEEE/CVF conference on computer vision and pattern recognition},
  pages={9737--9746},
  year={2022}
}

@inproceedings{jeong2023winclip,
  title={Winclip: Zero-/few-shot anomaly classification and segmentation},
  author={Jeong, Jongheon and Zou, Yang and Kim, Taewan and Zhang, Dongqing and Ravichandran, Avinash and Dabeer, Onkar},
  booktitle={Proceedings of the IEEE/CVF Conference on Computer Vision and Pattern Recognition},
  pages={19606--19616},
  year={2023}
}

@inproceedings{salehi2021multiresolution,
  title={Multiresolution knowledge distillation for anomaly detection},
  author={Salehi, Mohammadreza and Sadjadi, Niousha and Baselizadeh, Soroosh and Rohban, Mohammad H and Rabiee, Hamid R},
  booktitle={Proceedings of the IEEE/CVF conference on computer vision and pattern recognition},
  pages={14902--14912},
  year={2021}
}

@inproceedings{bao2024bmad,
  title={Bmad: Benchmarks for medical anomaly detection},
  author={Bao, Jinan and Sun, Hanshi and Deng, Hanqiu and He, Yinsheng and Zhang, Zhaoxiang and Li, Xingyu},
  booktitle={Proceedings of the IEEE/CVF Conference on Computer Vision and Pattern Recognition},
  pages={4042--4053},
  year={2024}
}

@inproceedings{jiang2023multi,
  title={Multi-scale cross-restoration framework for electrocardiogram anomaly detection},
  author={Jiang, Aofan and Huang, Chaoqin and Cao, Qing and Wu, Shuang and Zeng, Zi and Chen, Kang and Zhang, Ya and Wang, Yanfeng},
  booktitle={International Conference on Medical Image Computing and Computer-Assisted Intervention},
  pages={87--97},
  year={2023},
  organization={Springer}
}

@inproceedings{bergmann2019mvtec,
  title={MVTec AD--A comprehensive real-world dataset for unsupervised anomaly detection},
  author={Bergmann, Paul and Fauser, Michael and Sattlegger, David and Steger, Carsten},
  booktitle={Proceedings of the IEEE/CVF conference on computer vision and pattern recognition},
  pages={9592--9600},
  year={2019}
}

@article{zhou2023anomalyclip,
  title={Anomalyclip: Object-agnostic prompt learning for zero-shot anomaly detection},
  author={Zhou, Qihang and Pang, Guansong and Tian, Yu and He, Shibo and Chen, Jiming},
  journal={arXiv preprint arXiv:2310.18961},
  year={2023}
}

@inproceedings{cao2024adaclip,
  title={Adaclip: Adapting clip with hybrid learnable prompts for zero-shot anomaly detection},
  author={Cao, Yunkang and Zhang, Jiangning and Frittoli, Luca and Cheng, Yuqi and Shen, Weiming and Boracchi, Giacomo},
  booktitle={European Conference on Computer Vision},
  pages={55--72},
  year={2024},
  organization={Springer}
}

@inproceedings{mao2025beyond,
  title={Beyond Single-Modal Boundary: Cross-Modal Anomaly Detection through Visual Prototype and Harmonization},
  author={Mao, Kai and Wei, Ping and Lian, Yiyang and Wang, Yangyang and Zheng, Nanning},
  booktitle={Proceedings of the Computer Vision and Pattern Recognition Conference},
  pages={9964--9973},
  year={2025}
}

@article{you2022unified,
  title={A unified model for multi-class anomaly detection},
  author={You, Zhiyuan and Cui, Lei and Shen, Yujun and Yang, Kai and Lu, Xin and Zheng, Yu and Le, Xinyi},
  journal={Advances in Neural Information Processing Systems},
  volume={35},
  pages={4571--4584},
  year={2022}
}

@article{li2025memory,
  title={Memory-statistics tradeoff in continual learning with structural regularization},
  author={Li, Haoran and Wu, Jingfeng and Braverman, Vladimir},
  journal={arXiv preprint arXiv:2504.04039},
  year={2025}
}

@inproceedings{motiian2017unified,
  title={Unified deep supervised domain adaptation and generalization},
  author={Motiian, Saeid and Piccirilli, Marco and Adjeroh, Donald A and Doretto, Gianfranco},
  booktitle={Proceedings of the IEEE international conference on computer vision},
  pages={5715--5725},
  year={2017}
}

@article{theisen2021evaluating,
  title={Evaluating state-of-the-art classification models against bayes optimality},
  author={Theisen, Ryan and Wang, Huan and Varshney, Lav R and Xiong, Caiming and Socher, Richard},
  journal={Advances in Neural Information Processing Systems},
  volume={34},
  pages={9367--9377},
  year={2021}
}

@inproceedings{huang2018neural,
  title={Neural autoregressive flows},
  author={Huang, Chin-Wei and Krueger, David and Lacoste, Alexandre and Courville, Aaron},
  booktitle={International conference on machine learning},
  pages={2078--2087},
  year={2018},
  organization={PMLR}
}

@article{lee2021universal,
  title={Universal approximation using well-conditioned normalizing flows},
  author={Lee, Holden and Pabbaraju, Chirag and Sevekari, Anish Prasad and Risteski, Andrej},
  journal={Advances in Neural Information Processing Systems},
  volume={34},
  pages={12700--12711},
  year={2021}
}

@book{diakonikolas2023algorithmic,
  title={Algorithmic high-dimensional robust statistics},
  author={Diakonikolas, Ilias and Kane, Daniel M},
  year={2023},
  publisher={Cambridge university press}
}

\clearpage
\appendix
\thispagestyle{empty}

% Supplementary material
\onecolumn
\aistatstitle{Supplementary Materials}
\section{Proofs of Theoretical Results}
\label{sec:proofs}

\subsection{Assumptions}
\label{sec:assumptions}
\begin{assumption}[Metric equivalence]\label{assump:metric}
Let \(M:\mathbb{R}^d\to[0,\infty)\) be the matching metric used by the algorithm. There exist constants \(0<m_M\le L_M<\infty\) such that $m_M\|v\|_2 \le M(v) \le L_M\|v\|_2 \qquad\text{for all }v\in\mathbb{R}^d$
\end{assumption}
Assumption~\ref{assump:metric}  states $M$ is locally equivalent to the Euclidean norm on the data manifold. This means that 
$M$ does not distort distances by more than a fixed multiplicative factor. If, for example, $M(v)$ were much larger or smaller than $\|v\|_2$, depending on $v$, matching could select domains with means very far from the centroid, breaking the tight concentration of the matched set. In practice, this means distances measured under $M$ cannot collapse to zero or blow up arbitrarily relative to $\ell_2$ distances: the constants $(m_M, L_M)$ control this distortion. Such equivalence guarantees that concentration arguments and robustness bounds established in Euclidean space remain valid when extended to $M$, since neighborhoods defined by $M$ correspond to neighborhoods on the manifold up to constant factors. Moreover, because compact manifolds admit finite bi-Lipschitz charts, this equivalence allows our theoretical results to generalize from Gaussian settings in $\mathbb{R}^d$ to arbitrary compact Riemannian manifolds, ensuring that matching remains stable and generalizable across domains.

\begin{assumption}[Moment and Tail Conditions]
\label{ass:moment_tail}
We assume all domain means $\mu_k$ are drawn i.i.d. from a meta-distribution $D$ with: a) Finite mean and covariance: $\mathbb{E}[\mu_k]=\mu_*$ and $\mathrm{Cov}(\mu_k)=\Sigma_{\boldsymbol{\mu}} < \infty$ b) Sub-exponential tails: $\exists C, \beta > 0$ such that $\mathbb{P}(\|\mu_k - \mu_*\| > t) \leq C \exp(-\beta t)$ c) Finite fourth moment: $\mathbb{E}[\|\mu_k - \mu_*\|_2^4] < \infty$ d) There exist a unit vector $v\in\mathbb{R}^d$, constants $p_0\in(0,1]$ and $\delta>0$, such that the half-space $S_{v,\delta} \;=\; \{\mu : (\mu-\mu_*)\cdot v\ge\delta\}$ satisfies $\mathbb{P}(\mu_k\in S_{v,\delta})\ge p_0$.
\end{assumption}
Throughout our proofs, we use (i) the Strong Law of Large Numbers (SLLN), (ii) Lévy's Continuity Theorem, which states that almost sure convergence of characteristic functions implies convergence in distribution. All applications assume the moment and tail conditions in Assumption~\ref{ass:moment_tail}.

\subsection{Proof of Theorem \ref{thm:asymptotic_behavior}}
\label{sec:proof_asymptotic_behavior}

\AsymptoticBehavior*

\begin{proof}
\textbf{Naive Pooling Convergence:} Let $\{\boldsymbol{\mu}_k\}_{k=1}^K$ be i.i.d. draws from $\mathscr{D}_{\boldsymbol{\mu}}$ with $\mathbb{E}[\boldsymbol{\mu}_k] = \boldsymbol{\mu}_*$ and $\mathrm{Cov}[\boldsymbol{\mu}_k] = \boldsymbol{\Sigma}_{\boldsymbol{\mu}}$. The characteristic function of $\hat P_{\mathrm{pool}}^{(K)}$ is given by $\phi_{P}^{(K)}(\mathbf{t}) = \mathbb{E}\left[\exp(i\mathbf{t}^\top\mathbf{x})\right] \quad \text{where} \quad \mathbf{x} \sim \hat P_{\mathrm{pool}}^{(K)}$

Since $\hat P_{\mathrm{pool}}^{(K)} = \frac{1}{\sum_{k=1}^K N_k} \sum_{k=1}^K \sum_{i=1}^{N_k} \delta(\mathbf{x} - \mathbf{x}_{k,i})$ and each $\mathbf{x}_{k,i} \sim Q_k = \mathcal{N}(\boldsymbol{\mu}_k, \sigma^2 \mathbf{I}_d)$, we have $\phi_{P}^{(K)}(\mathbf{t}) = \frac{1}{\sum_{k=1}^K N_k} \sum_{k=1}^K N_k \cdot \mathbb{E}_{\mathbf{x} \sim Q_k}\left[\exp(i\mathbf{t}^\top\mathbf{x})\right]
= \frac{1}{\sum_{k=1}^K N_k} \sum_{k=1}^K N_k \cdot \exp\left(i\mathbf{t}^\top\boldsymbol{\mu}_k - \tfrac{1}{2}\sigma^2\|\mathbf{t}\|^2\right)$

Assuming balanced domains where $N_k = N$ for all $k$ (or that $\frac{N_k}{\sum_{j=1}^K N_j} \to \frac{1}{K}$ as $K \to \infty$), we obtain $\phi_{P}^{(K)}(\mathbf{t}) = \frac{1}{K} \sum_{k=1}^K \exp\left(i\mathbf{t}^\top\boldsymbol{\mu}_k - \tfrac{1}{2}\sigma^2\|\mathbf{t}\|^2\right) + o(1)$

By the Strong Law of Large Numbers for i.i.d. random vectors: $\frac{1}{K} \sum_{k=1}^K \exp(i\mathbf{t}^\top \boldsymbol{\mu}_k) \xrightarrow{a.s.} \mathbb{E}_{\boldsymbol{\mu} \sim \mathscr{D}_{\boldsymbol{\mu}}}\left[ \exp(i\mathbf{t}^\top \boldsymbol{\mu}) \right] =: \phi_{\boldsymbol{\mu}}(\mathbf{t})$

Under the moment conditions ($\mathbb{E}[\boldsymbol{\mu}]=\boldsymbol{\mu}_*$, $\mathrm{Cov}[\boldsymbol{\mu}]=\boldsymbol{\Sigma}_{\boldsymbol{\mu}}$), the characteristic function admits the expansion: $\phi_{\boldsymbol{\mu}}(\mathbf{t}) = \exp\left(i\mathbf{t}^\top \boldsymbol{\mu}_* - \tfrac{1}{2}\mathbf{t}^\top \boldsymbol{\Sigma}_{\boldsymbol{\mu}} \mathbf{t} + o(\|\mathbf{t}\|^2)\right)$

Thus, for any fixed $\mathbf{t} \in \mathbb{R}^d$: $\phi_{P}^{(K)}(\mathbf{t}) \xrightarrow{a.s.} \exp\left(-\tfrac{1}{2}\sigma^2\|\mathbf{t}\|^2\right) \cdot \exp\left(i\mathbf{t}^\top \boldsymbol{\mu}_* - \tfrac{1}{2}\mathbf{t}^\top \boldsymbol{\Sigma}_{\boldsymbol{\mu}} \mathbf{t}\right) = \phi_{\infty}(\mathbf{t})$, where $\phi_{\infty}(\mathbf{t})$ is the characteristic function of $\mathcal{N}(\boldsymbol{\mu}_*, \sigma^2 \mathbf{I}_d + \boldsymbol{\Sigma}_{\boldsymbol{\mu}})$. By Lévy's Continuity Theorem, almost sure convergence of characteristic functions implies convergence in distribution $\hat P_{\mathrm{pool}}^{(K)} \xrightarrow{d} \mathcal{N}(\boldsymbol{\mu}_*, \sigma^2 \mathbf{I}_d + \boldsymbol{\Sigma}_{\boldsymbol{\mu}})$

\textbf{Uniform Subsampling Convergence:}
The uniform subsampling procedure selects domains uniformly at random and draws $n \ll N_k$ samples from each selected domain. The key insight is that this process is equivalent to $\hat P_{\mathrm{sub}}^{(n)}(\mathbf{x}) = \frac{1}{|I|\cdot n} \sum_{k \in I}\sum_{j=1}^n \delta(\mathbf{x} - \mathbf{x}_{k,j})$
where $I \subset \{1,\dots,K\}$ is a random subset of domains chosen uniformly.

This can be reinterpreted as: each subsampled point $\mathbf{x}_{k,j}$ is obtained by 1) Sampling a domain index $k \sim \mathrm{Uniform}\{1,\ldots,K\}$ (through the random selection of $I$) 2) Sampling $\mathbf{x} \sim Q_k$ within that domain.

As $K \to \infty$, the domain selection process converges to drawing $\boldsymbol{\mu} \sim \mathscr{D}_{\boldsymbol{\mu}}$ then $\mathbf{x} \sim \mathcal{N}(\boldsymbol{\mu}, \sigma^2 \mathbf{I}_d)$. The characteristic function of this limiting process is $\phi_{\mathrm{sub}}^{(\infty)}(\mathbf{t}) = \mathbb{E}_{\boldsymbol{\mu} \sim \mathscr{D}_{\boldsymbol{\mu}}}\left[ \mathbb{E}_{\mathbf{x} \sim \mathcal{N}(\boldsymbol{\mu}, \sigma^2 \mathbf{I}_d)}\left[\exp(i\mathbf{t}^\top\mathbf{x})\right] \right]
= \mathbb{E}_{\boldsymbol{\mu} \sim \mathscr{D}_{\boldsymbol{\mu}}}\left[ \exp(i\mathbf{t}^\top\boldsymbol{\mu} - \tfrac{1}{2}\sigma^2\|\mathbf{t}\|^2) \right]$. This matches $\phi_{\infty}(\mathbf{t})$ from the naive pooling case. The finite sample size $n$ affects the rate of convergence but not the asymptotic limit, as the empirical characteristic function converges to its expectation by the Law of Large Numbers.

\textbf{Matching Convergence:} We analyze the  matching distribution defined as $\hat P_{\mathrm{match}}^{(\tau)}(\mathbf{x}) = \frac{1}{\sum_{k \in \mathcal{S}^{(\tau)}} N_k} \sum_{k \in \mathcal{S}^{(\tau)}} \sum_{i=1}^{N_k} \delta(\mathbf{x} - \mathbf{x}_{k,i})$, where $\mathcal{S}^{(\tau)} = \{k : M(\boldsymbol{\mu}_k - \mathbf{c}_n) < \tau\}$ and $\mathbf{c}_n$ is the running centroid.

The characteristic function of the matched distribution is $\phi_{\mathrm{match}}^{(\tau)}(\mathbf{t}) = \mathbb{E}\left[\exp(i\mathbf{t}^\top\mathbf{x})\right] \quad \text{where} \quad \mathbf{x} \sim P_{\mathrm{match}}^{(\tau)}$

Since $\hat P_{\mathrm{match}}^{(\tau)}$ is an empirical mixture over selected domains, we have $\phi_{\mathrm{match}}^{(\tau)}(\mathbf{t}) = \frac{\sum_{k \in \mathcal{S}^{(\tau)}} N_k \cdot \mathbb{E}_{\mathbf{x} \sim \hat{Q}_k}[\exp(i\mathbf{t}^\top\mathbf{x})]}{\sum_{k \in \mathcal{S}^{(\tau)}} N_k}$
where $\hat{Q}_k$ is the empirical distribution of domain $k$. As $K \to \infty$ and for each domain $k$, as $N_k \to \infty$ (or under the assumption that domain sample sizes are sufficiently large), the empirical characteristic function converges: $\mathbb{E}_{\mathbf{x} \sim \hat{Q}_k}[\exp(i\mathbf{t}^\top\mathbf{x})] \to \mathbb{E}_{\mathbf{x} \sim Q_k}[\exp(i\mathbf{t}^\top\mathbf{x})] = \exp(i\mathbf{t}^\top\boldsymbol{\mu}_k - \tfrac{1}{2}\sigma^2\|\mathbf{t}\|^2)$

The matching criterion selects domains where $M(\boldsymbol{\mu}_k - \mathbf{c}_n) < \tau$. Given that $\mathbf{c}_n \xrightarrow{p} \boldsymbol{\mu}_*$ (which follows from the iterative centroid update converging to the target mean), we have $W_k = \mathbb{I}\{M(\boldsymbol{\mu}_k - \mathbf{c}_n) < \tau\} \xrightarrow{p} \mathbb{I}\{M(\boldsymbol{\mu}_k - \boldsymbol{\mu}_*) < \tau\}$

By the law of large numbers for the weighted average over selected domains: $\phi_{\mathrm{match}}^{(\tau)}(\mathbf{t}) \xrightarrow{p} \frac{\mathbb{E}_{\boldsymbol{\mu} \sim \mathscr{D}_{\boldsymbol{\mu}}}\left[ \mathbb{I}\{M(\boldsymbol{\mu} - \boldsymbol{\mu}_*) < \tau\} \cdot \exp(i\mathbf{t}^\top\boldsymbol{\mu} - \tfrac{1}{2}\sigma^2\|\mathbf{t}\|^2) \right]}{\mathbb{E}_{\boldsymbol{\mu} \sim \mathscr{D}_{\boldsymbol{\mu}}}\left[ \mathbb{I}\{M(\boldsymbol{\mu} - \boldsymbol{\mu}_*) < \tau\} \right]}$. As the matching process becomes increasingly selective (in the sense that $\tau$ can be chosen to decrease appropriately with $K$), we consider the limit $\lim_{\tau \to 0} \phi_{\mathrm{match}}^{(\tau)}(\mathbf{t}) = \frac{\mathbb{E}_{\boldsymbol{\mu} \sim \mathscr{D}_{\boldsymbol{\mu}}}\left[ \delta(\boldsymbol{\mu} - \boldsymbol{\mu}_*) \cdot \exp(i\mathbf{t}^\top\boldsymbol{\mu} - \tfrac{1}{2}\sigma^2\|\mathbf{t}\|^2) \right]}{\mathbb{E}_{\boldsymbol{\mu} \sim \mathscr{D}_{\boldsymbol{\mu}}}\left[ \delta(\boldsymbol{\mu} - \boldsymbol{\mu}_*) \right]}$, where $\delta(\cdot)$ represents the limiting density concentration at $\boldsymbol{\mu}_*$.

Under the condition that $\mathscr{D}_{\boldsymbol{\mu}}$ has positive density at $\boldsymbol{\mu}_*$ (which holds for continuous meta-distributions), we obtain $\phi_{\mathrm{match}}^{(\tau)}(\mathbf{t}) \to \exp(i\mathbf{t}^\top\boldsymbol{\mu}_* - \tfrac{1}{2}\sigma^2\|\mathbf{t}\|^2)$. This is the characteristic function of $\mathcal{N}(\boldsymbol{\mu}_*, \sigma^2\mathbf{I}_d)$. By Lévy's Continuity Theorem: $\hat P_{\mathrm{match}}^{(\tau)} \xrightarrow{d} \mathcal{N}(\boldsymbol{\mu}_*, \sigma^2\mathbf{I}_d)$. The double robustness property ensures that this convergence holds even with mild misspecification of the matching radius $\tau$, as long as the centroid estimation is consistent.
\end{proof}

\subsection{Proof of Corollary \ref{cor:exchangeability}}
\label{sec:cor_exchangeability_proof}
\begin{corollary}[\textbf{Exchangeability of Pooling Strategies}]
\label{cor:exchangeability}
Each pooling strategy is invariant to permutations of domain indices under the following conditions: a) \textbf{Naive Pooling:}  
    The pooled distribution $\hat P_{\mathrm{pool}}^{(K)}(\mathbf{x}) = \frac{1}{\sum_{k=1}^K N_k} \sum_{k=1}^K \sum_{i=1}^{N_k} \delta(\mathbf{x} - \mathbf{x}_{k,i})$ is permutation-invariant by commutativity of addition. Relabeling domain indices leaves the sum unchanged. b) \textbf{Uniform Subsampling:}  
    The subsampled distribution $\hat P_{\mathrm{sub}}^{(n)}(\mathbf{x}) = \frac{1}{|I|\cdot n} \sum_{k \in I}\sum_{j=1}^n \delta(\mathbf{x} - \mathbf{x}_{k,j})$,
    where $I \subset \{1,\dots,K\}$ is chosen uniformly at random, is exchangeable. Uniform sampling of domains preserves the joint law under permutations of indices. c) \textbf{Matching:}  
    The matched distribution $\hat P_{\mathrm{match}}^{(\tau)}(\mathbf{x}) = \frac{1}{\sum_{k \in \mathcal{S}^{(\tau)}} N_k} \sum_{k \in \mathcal{S}^{(\tau)}} \sum_{i=1}^{N_k} \delta(\mathbf{x} - \mathbf{x}_{k,i})$,
    where $\mathcal{S}^{(\tau)} = \{k : M(\boldsymbol{\mu}_k - \mathbf{c}_n) < \tau\}$, is exchangeable. The centroid $\mathbf{c}_n$ is permutation-invariant, and inclusion depends only on distances to this centroid, which are unaffected by relabeling.

Thus, all three pooling strategies are exchangeable, both in case of finite $K$ and as $K \to \infty$. \qed
\end{corollary}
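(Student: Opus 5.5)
The plan is to treat exchangeability as a statement about joint laws: if $\pi$ is any permutation of $\{1,\dots,K\}$ and we relabel the domains so that domain $k$ carries data $(\boldsymbol{\mu}_{\pi(k)},\{\mathbf{x}_{\pi(k),i}\}_i)$, then each pooled estimator computed from the relabeled data has the same law as the one computed from the original data. Two ingredients are needed. First, the input is exchangeable: the $\boldsymbol{\mu}_k$ are i.i.d. from $\mathscr{D}_\mu$, and given $\boldsymbol{\mu}_k$ the samples are i.i.d. from $Q_k$. Hence the domain-level tuples $D_k=(\boldsymbol{\mu}_k,\{\mathbf{x}_{k,i}\}_{i\le N_k})$ are i.i.d., at least when $N_k\equiv N$, or when $N_k$ is itself drawn i.i.d. or treated as part of the tuple. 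So $(D_{\pi(1)},\dots,D_{\pi(K)})\overset{d}{=}(D_1,\dots,D_K)$. Second, each estimator is a symmetric function of $(D_1,\dots,D_K)$, possibly together with independent auxiliary randomness. Composing the two gives invariance of the law.

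For the three cases in order. (a) For naive pooling, $\hat P_{\mathrm{pool}}^{(K)}$ is a normalized sum of Dirac masses over all pairs $(k,i)$. Reindexing a finite sum does not change it, so it is a deterministic symmetric function of the tuples. (b) For uniform subsampling, $\hat P_{\mathrm{sub}}^{(n)}$ depends on the data and on an independent random index set $I$ whose law is uniform over subsets of a given size, and is therefore invariant under $\pi$. Since $\pi(I)\overset{d}{=}I$ and $I$ is independent of the $D_k$, the pair $(\pi(I),D)$ has the same law as $(I,D)$. Combined with exchangeability of the $D_k$ and of the within-domain resampling, this gives equality of the joint law. (c) For matching, I would argue by induction on the iteration $t$ that $\mathbf{c}_t$ is a symmetric function of the data. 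The initialization is data-independent, or is a symmetric statistic such as the coordinatewise median. If $\mathbf{c}_t$ is symmetric, then the inclusion indicators $W_k=\mathbb{I}(M(\boldsymbol{\mu}_k-\mathbf{c}_t)<\tau)$ are permuted equivariantly under $\pi$. The set $\mathcal{S}^{(\tau)}$ is therefore mapped to $\pi^{-1}(\mathcal{S}^{(\tau)})$, and the sums defining both $\mathbf{c}_{t+1}$ and $\hat P_{\mathrm{match}}^{(\tau)}$ are unchanged.

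The final step is the passage to $K\to\infty$. Exchangeability for every finite $K$ means the infinite sequence $(D_k)_{k\ge1}$ is exchangeable; it is in fact i.i.d. Any limit in distribution of symmetric statistics, such as the limits in Theorem~\ref{thm:asymptotic_behavior}, inherits invariance under finite permutations, because the equality of laws holds at each $K$ and is preserved under weak limits.

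I expect the main obstacle to be the matching case. The centroid iteration could in principle be order-dependent, as with the EMA/sequential update used in Section~\ref{sec:geodesic_centroid}, where the order of domain arrival matters. I would therefore restrict the corollary explicitly to the batch update of Def.~\ref{def:matching}, whose update is a full average over the matched set. I would also note that for sequential (EMA) updates only distributional exchangeability holds, and only when the arrival order is itself uniformly random. A minor further point is the tie case $M(\boldsymbol{\mu}_k-\mathbf{c}_t)=\tau$; it is handled deterministically by the strict inequality, so it does not break symmetry.
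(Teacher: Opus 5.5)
Your proposal is correct and follows essentially the same route as the paper: both show that each estimator is a symmetric function of the domain-level data, with the matching centroid a symmetric average over the matched set and the inclusion set relabeled accordingly, and that for subsampling the law of $I$ is invariant under $\pi$. Your extra ingredients go beyond the paper's proof, which needs only invariance under relabeling. These are the i.i.d.\ domain tuples giving a law-level statement, the weak-limit passage, and the explicit restriction to the batch centroid update of Def.~\ref{def:matching} rather than the order-dependent EMA or sequential updates; the paper leaves that last restriction implicit.
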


\begin{proof} We fix any finite $K$ and any permutation $\pi:\{1,\dots,K\}\to\{1,\dots,K\}$. We show that permuting domain labels leaves each pooled empirical distribution unchanged in law; in fact, for (a) it is unchanged pointwise.

\paragraph{a) Naive pooling.}
Recall $\hat P_{\mathrm{pool}}^{(K)}(\mathbf{x})
= \frac{1}{\sum_{k=1}^K N_k}\sum_{k=1}^K\sum_{i=1}^{N_k}\delta(\mathbf{x}-\mathbf{x}_{k,i})$.
Under the relabeling $k\mapsto \pi(k)$ we obtain
\[
\frac{1}{\sum_{k=1}^K N_{\pi(k)}}\sum_{k=1}^K\sum_{i=1}^{N_{\pi(k)}}\delta(\mathbf{x}-\mathbf{x}_{\pi(k),i})
= \frac{1}{\sum_{k=1}^K N_k}\sum_{k=1}^K\sum_{i=1}^{N_k}\delta(\mathbf{x}-\mathbf{x}_{k,i})
= \hat P_{\mathrm{pool}}^{(K)}(\mathbf{x}),
\]
since $\{N_{\pi(k)}\}_{k=1}^K$ is just a reordering of $\{N_k\}_{k=1}^K$ and addition is commutative. Thus naive pooling is permutation-invariant (exchangeable) for every realization and hence in distribution.

\paragraph{(b) Uniform subsampling.}
Let $I\subset\{1,\dots,K\}$ be chosen uniformly at random and
\[
\hat P_{\mathrm{sub}}^{(n)}(\mathbf{x})
= \frac{1}{|I|\,n}\sum_{k\in I}\sum_{j=1}^n \delta(\mathbf{x}-\mathbf{x}_{k,j}).
\]
The law of $I$ is invariant under relabeling: for any permutation $\pi$, $\pi(I)$ is uniform iff $I$ is uniform. Therefore, conditional on the data $\{\mathbf{x}_{k,j}\}$, the distribution of $\frac{1}{|\pi(I)|\,n}\sum_{k\in \pi(I)}\sum_{j=1}^n \delta(\mathbf{x}-\mathbf{x}_{k,j})$
coincides with that of $\hat P_{\mathrm{sub}}^{(n)}(\mathbf{x})$. Hence $\hat P_{\mathrm{sub}}^{(n)}$ is exchangeable.

\paragraph{(c) Matching.}
Let $\mathcal{S}^{(\tau)}=\{\,k: M(\mu_k-\mathbf{c}_n)<\tau\,\},\qquad
\hat P_{\mathrm{match}}^{(\tau)}(\mathbf{x})
= \frac{1}{\sum_{k\in \mathcal{S}^{(\tau)}}N_k}\sum_{k\in \mathcal{S}^{(\tau)}}\sum_{i=1}^{N_k}\delta(\mathbf{x}-\mathbf{x}_{k,i})$.
The centroid $\mathbf{c}_n$ is a permutation-invariant statistic of the multiset of observed samples (and, recursively, of prior matched sets); in particular, it depends on domains only through symmetric sums/averages. Consequently, the inclusion set $\mathcal{S}^{(\tau)}$ depends on $\{\mu_k\}$ only via distances to $\mathbf{c}_n$ and is unchanged by relabeling domain indices. Therefore, $\hat P_{\mathrm{match}}^{(\tau)}$ is invariant under permutations and hence exchangeable.

In all three cases, permutation of domain labels leaves the empirical distribution unchanged (pointwise for naive pooling; in law for subsampling and matching). Thus each pooling strategy is exchangeable for any finite $K$, and with even stronger guarentees as $K\to\infty$.
\end{proof}

\subsection{Proof of Theorem \ref{thm:symmetric_convergence}}
\label{sec:thm_symmetric_convergence}

\SymmetricConvergence*

\begin{proof}
\textbf{Naive Pooling:} The naive pooled mean is defined as $\bar{\boldsymbol{\mu}}_K = \frac{1}{\sum_{k=1}^K N_k} \sum_{k=1}^K \sum_{i=1}^{N_k} \mathbf{x}_{k,i}$, where $\mathbf{x}_{k,i} \sim Q_k = \mathcal{N}(\boldsymbol{\mu}_k, \sigma^2\mathbf{I}_d)$. Taking expectation and using linearity: $\mathbb{E}[\bar{\boldsymbol{\mu}}_K] = \frac{1}{\sum_{k=1}^K N_k} \sum_{k=1}^K N_k \cdot \mathbb{E}[\mathbf{x}_{k,i}] = \frac{1}{\sum_{k=1}^K N_k} \sum_{k=1}^K N_k \cdot \mathbb{E}[\boldsymbol{\mu}_k]$. By Def.~\ref{def:sym_dif}, $\mathbb{E}[\boldsymbol{\mu}_k] = \boldsymbol{\mu}_*$ for all $k$, so $\mathbb{E}[\bar{\boldsymbol{\mu}}_K] = \frac{1}{\sum_{k=1}^K N_k} \sum_{k=1}^K N_k \cdot \boldsymbol{\mu}_* = \boldsymbol{\mu}_*$

\textbf{Uniform Subsampling:} The uniform subsampled mean is $\bar{\boldsymbol{\mu}}_I = \frac{1}{|I| \cdot n} \sum_{k \in I} \sum_{j=1}^n \mathbf{x}_{k,j}$, where $I \subset \{1,\dots,K\}$ is chosen uniformly at random. Taking expectation conditioned on the domain selection: $\mathbb{E}[\bar{\boldsymbol{\mu}}_I] = \mathbb{E}\left[ \frac{1}{|I| \cdot n} \sum_{k \in I} \sum_{j=1}^n \mathbf{x}_{k,j} \right] = \mathbb{E}\left[ \frac{1}{|I|} \sum_{k \in I} \boldsymbol{\mu}_k \right]$. Since domain selection is uniform and $\mathbb{E}[\boldsymbol{\mu}_k] = \boldsymbol{\mu}_*$ by symmetry $\mathbb{E}\left[ \frac{1}{|I|} \sum_{k \in I} \boldsymbol{\mu}_k \right] = \frac{1}{K} \sum_{k=1}^K \mathbb{E}[\boldsymbol{\mu}_k] = \frac{1}{K} \sum_{k=1}^K \boldsymbol{\mu}_* = \boldsymbol{\mu}_*$

\textbf{Matching:} With initialization $\mathbf{c}_n^{(0)} = \boldsymbol{\mu}_*$, the matching criterion $M(\boldsymbol{\mu}_k - \boldsymbol{\mu}_*) < \tau$ selects a symmetric set of domains around $\boldsymbol{\mu}_*$ due to Assumption~\ref{def:sym_dif}. The matched mean after one iteration is $\bar{\boldsymbol{\mu}}_{\mathcal{S}}^{(1)} = \frac{1}{\sum_{k \in \mathcal{S}^{(\tau)}} N_k} \sum_{k \in \mathcal{S}^{(\tau)}} \sum_{i=1}^{N_k} \mathbf{x}_{k,i}$. Taking expectation: $\mathbb{E}[\bar{\boldsymbol{\mu}}_{\mathcal{S}}^{(1)}] = \frac{1}{\sum_{k \in \mathcal{S}^{(\tau)}} N_k} \sum_{k \in \mathcal{S}^{(\tau)}} N_k \cdot \mathbb{E}[\boldsymbol{\mu}_k] = \boldsymbol{\mu}_*$
since the selected domain means are symmetrically distributed around $\boldsymbol{\mu}_*$. The updated centroid $\mathbf{c}_n^{(1)} = \bar{\boldsymbol{\mu}}_{\mathcal{S}}^{(1)}$ remains unbiased, and by induction, this property holds for all iterations. Thus, the matching process maintains $\mathbb{E}[\bar{\boldsymbol{\mu}}_{\mathcal{S}}] = \boldsymbol{\mu}_*$. For matching, convergence does not critically depend on initializing the centroid at the true mean $\mu_*$. 
Even with a misspecified initialization $\mathbf{c}_0 \neq \mu_*$, the matching rule admits correction: by choosing an appropriate threshold $\tau$, the procedure includes domains sufficiently close to $\mu_*$ so that their aggregate mean drives the centroid toward the target. 
Thus, the iterative updates ensure that $\sum_{k=1}^{K}\mu_k$ concentrates around $\mu_*$ as $K$ grows. 
In contrast, under real-world meta-distributions $\mathscr{D}_{\boldsymbol{\mu}}$, which are typically skewed, the $Q_k$ means may not be symmetrically distributed, and naive pooling or subsampling can remain biased away from $\mu_*$. 
This highlights matching’s robustness in aligning with the target distribution despite imperfect initialization.
\end{proof}

\subsection{Proof of Theorem \ref{thm:finite_robustness}}
\label{sec:thm_finite_robustness}

\FiniteRobustness*

\begin{proof}
We analyze the update from $K$ to $K{+}1$ domains. To simplify geometry, we always project onto the current error direction.

We define the current error vector $v_K := \bar\mu_K - \mu^\ast$ and its norm
$\varepsilon_K := \|v_K\|_2$. If $\varepsilon_K>0$, define the normalized direction
$u_K := v_K / \|v_K\|_2$; otherwise take any fixed unit vector. Note that $\langle v_K, u_K \rangle \;=\; \|v_K\|_2 \;=\; \varepsilon_K$, so the inner product with $u_K$ exactly recovers the error magnitude. For any new domain, we define its projection $\xi_{K+1} := \langle \mu_{K+1}-\mu^\ast,\,u_K\rangle$. This scalar $\xi_{K+1}$ captures how much the new domain pushes us along the error direction.

\textbf{(a) Naive Pooling.}
The pooled update is $\bar\mu_{K+1}^{\textsf{pool}}
= \frac{K}{K+1}\,\bar\mu_K + \frac{1}{K+1}\,\mu_{K+1}$.
Projecting onto $u_K$ (the current error direction) we get $\big\langle \bar\mu_{K+1}^{\textsf{pool}}-\mu^*,\,u_K\big\rangle
= \frac{K}{K+1}\langle \bar\mu_K-\mu^*,u_K\rangle
+ \frac{1}{K+1}\langle \mu_{K+1}-\mu^*,u_K\rangle$.
By definition, $\langle \bar\mu_K-\mu^*,u_K\rangle=\varepsilon_K$, so $\big\langle \bar\mu_{K+1}^{\textsf{pool}}-\mu^*,\,u_K\big\rangle
= \frac{K}{K+1}\,\varepsilon_K + \frac{1}{K+1}\,\xi_{K+1}$,
where $\xi_{K+1} := \langle \mu_{K+1}-\mu^*, u_K\rangle$.  
Since $\varepsilon_{K+1} \ge \langle \bar\mu_{K+1}^{\textsf{pool}}-\mu^*,\,u_K\rangle$, we obtain $\varepsilon_{K+1} \;\ge\; \frac{K}{K+1}\,\varepsilon_K + \frac{1}{K+1}\,\xi_{K+1}$.

Because $\mu_{K+1}\sim D_\mu$ and Assumption~\ref{ass:moment_tail} states $D_\mu$ has sub-exponential tails, the one-dimensional projection $\xi_{K+1}$ also has sub-exponential tails (linear functionals of a sub-exponential random vector are still sub-ex). Therefore for any $\delta\in(0,1)$, $|\xi_{K+1}| \le \beta \log(2/\delta)
\quad \text{with prob. at least } 1-\delta$.

So far, we treated $u_K$ as fixed, but in reality, $u_K$ depends on the random average $\bar\mu_K$. By concentration of $\bar\mu_K$ around $\mu^*$, we know $\|\bar\mu_K - \mu^\ast\|_2 = O\!\left(\sqrt{\tfrac{\log(1/\delta)}{K}}\right)$. This induces an additional $O(\sqrt{\tfrac{\log(1/\delta)}{K}})$ correction in the bound for $\varepsilon_{K+1}$, because the error direction $u_K$ may fluctuate slightly relative to $\mu^*$.

Thus we have the high-probability lower bound: $\varepsilon_{K+1}
\;\ge\; \frac{K}{K+1}\,\varepsilon_K - \frac{\beta\log(2/\delta)}{K+1}
- O\!\left(\sqrt{\tfrac{\log(1/\delta)}{K}}\right)$.

Assumption~\ref{ass:moment_tail} also includes the ``half-space clause,'' which says: there exists a unit vector $v$ and constants $\delta>0$, $p_0>0$ such that $\Pr\big((\mu-\mu^*)\cdot v \ge \delta\big) \ge p_0$.
In words, the meta-distribution $D_\mu$ always places nontrivial mass in some half-space away from $\mu^*$. This guarantees that with constant probability, the new domain $\mu_{K+1}$ lies in a harmful direction relative to $u_K$, producing $\xi_{K+1}\ge c>0$. Substituting back, $\varepsilon_{K+1}
\;\ge\; \varepsilon_K - \frac{\varepsilon_K-c}{K+1}$.
Thus, even for large $K$, one-step \emph{constant-size} increases ($\Theta(1)$ jumps) occur with non-vanishing probability. This establishes the possibility of unbounded per-step deterioration for naive pooling.

\medskip
\noindent\textbf{(b) Uniform subsampling.}
At step $K{+}1$, we select $m$ domains uniformly at random from the available ones, 
and from each chosen domain $j$ we draw $n$ i.i.d.\ samples $x_{j,1},\dots,x_{j,n}$.
We then compute per-domain sample means $\hat\mu_j \;:=\; \frac{1}{n}\sum_{i=1}^n x_{j,i},
\qquad
\mathbb E[\hat\mu_j \mid \mu_j] = \mu_j,
\qquad
\mathrm{Cov}(\hat\mu_j \mid \mu_j) = \tfrac{1}{n}\sigma^2 I_d$, 
and average them: $\hat\mu_{K+1}^{\textsf{sub}}
\;=\; \frac{1}{m}\sum_{j=1}^m \hat\mu_j$.

We want $\mathbb E\|\hat\mu_{K+1}^{\textsf{sub}}-\mu^*\|_2^2$.  
Add and subtract $\mu_j$ inside: $\hat\mu_j - \mu^*
= (\mu_j-\mu^*) + (\hat\mu_j-\mu_j)$. 
So, $\hat\mu_{K+1}^{\textsf{sub}}-\mu^*
= \frac{1}{m}\sum_{j=1}^m (\mu_j-\mu^*) + \frac{1}{m}\sum_{j=1}^m (\hat\mu_j-\mu_j)$. Squaring and taking expectation, using independence across $j$: $\mathbb E\|\hat\mu_{K+1}^{\textsf{sub}}-\mu^*\|_2^2
= \underbrace{\mathbb E\Big\|\frac{1}{m}\sum_{j=1}^m (\mu_j-\mu^*)\Big\|_2^2}_{\text{between-domain variance}}
+ \underbrace{\mathbb E\Big\|\frac{1}{m}\sum_{j=1}^m (\hat\mu_j-\mu_j)\Big\|_2^2}_{\text{within-domain noise}}$.

Because $\mu_j\sim D_\mu$ are i.i.d.\ with covariance $\Sigma_\mu$, $\mathbb E\Big\|\tfrac{1}{m}\sum_{j=1}^m (\mu_j-\mu^*)\Big\|_2^2
= \frac{1}{m}\,\mathrm{Tr}(\Sigma_\mu)$.
Since $\mathrm{Tr}(\Sigma_\mu) \le d\,\lambda_{\max}(\Sigma_\mu)$, we can bound $\frac{1}{m}\,\mathrm{Tr}(\Sigma_\mu) \;\le\; \frac{\lambda_{\max}(\Sigma_\mu)}{m}$ (Let $d=1$).

Each $(\hat\mu_j-\mu_j)$ has covariance $\frac{1}{n}\sigma^2 I_d$.  
Averaging $m$ of them gives variance $\frac{1}{m^2}\cdot m \cdot \frac{1}{n}\sigma^2 I_d$, so $\mathbb E\Big\|\tfrac{1}{m}\sum_{j=1}^m (\hat\mu_j-\mu_j)\Big\|_2^2
= \frac{1}{m}\cdot\frac{d\sigma^2}{n}$. Thus we get $\mathbb E\|\hat\mu_{K+1}^{\textsf{sub}}-\mu^*\|_2^2
\;\le\;
\frac{\lambda_{\max}(\Sigma_\mu)}{m} + \frac{d\sigma^2}{mn}$.

Taking square roots gives the bound on the expected error: $\mathbb E[\varepsilon_{K+1}]
\;\le\; \varepsilon_K + O\!\Big(\frac{1}{\sqrt{m}} \;\vee\; \frac{1}{\sqrt{mn}}\Big)$.
Suppose one of the $m$ sampled domains has mean $\mu_j$ far from $\mu^*$ with $\|\mu_j-\mu^*\|_2\ge c$.  
That domain contributes weight $1/m$, so in projection the error shifts by $\Omega(c/m)$.  
Thus per-step worst-case deterioration is $\Omega(1/m)$, independent of $n$ (since $n$ only reduces the within-domain noise term).

\textbf{(c) Matching.}
At step $K$, let $S_K$ be the set of matched domains and $\hat c_K = \frac{1}{|S_K|}\sum_{k\in S_K} \mu_k + \eta_K$ be the matched centroid, where $\eta_K$ is the estimation error due to finite within-domain samples.

By sub-exponential tails (Assumption~\ref{ass:moment_tail}), 
standard concentration bounds imply $\|\eta_K\|_2 \;\le\; C\sqrt{\frac{\log(1/\delta)}{|S_K|}}
\quad\text{with probability at least } 1-\delta$. If the new domain $\mu_{K+1}$ is rejected by the rule 
$I_{K+1}=0$ (because $M(\mu_{K+1}-\hat c_K)\ge\tau$),
then $S_{K+1}=S_K$ and $\hat c_{K+1}=\hat c_K$.  
Therefore, $\varepsilon_{K+1}
= \|\hat c_{K+1}-\mu^*\|_2
= \|\hat c_K-\mu^*\|_2
\le \varepsilon_K + C\sqrt{\tfrac{\log(1/\delta)}{|S_K|}}$. So the error increases only by the sampling noise. If the new domain is accepted ($I_{K+1}=1$), then the centroid updates to $\hat c_{K+1}
= \frac{|S_K|\,\hat c_K + \mu_{K+1}}{|S_K|+1}$.
Subtracting $\mu^*$: $\hat c_{K+1}-\mu^*
= (\hat c_K-\mu^*) + \frac{1}{|S_K|+1}(\mu_{K+1}-\hat c_K)$.

Taking norms and applying triangle inequality: $\|\hat c_{K+1}-\mu^*\|_2
\;\le\; \|\hat c_K-\mu^*\|_2
+ \frac{1}{|S_K|+1}\,\|\mu_{K+1}-\hat c_K\|_2$. 
By the inclusion condition $M(\mu_{K+1}-\hat c_K)<\tau$ and metric equivalence
$m_M\|v\|_2 \le M(v) \le L_M\|v\|_2$ ((Assumption~\ref{assump:metric})), we obtain $\|\mu_{K+1}-\hat c_K\|_2 \;\le\; \frac{\tau}{m_M}$. 
Hence, $\varepsilon_{K+1}
\;\le\; \varepsilon_K + \frac{\tau}{m_M(|S_K|+1)} 
+ C\sqrt{\tfrac{\log(1/\delta)}{|S_K|}}$.

Suppose the new domain is not too far from the truth: $M(\mu_{K+1}-\mu^*) \le \varepsilon_K + \tfrac{\tau}{L_M}$. Then the perturbation term $\frac{\tau}{m_M(|S_K|+1)}$ shrinks like $1/|S_K|$, while the noise term shrinks like $1/\sqrt{|S_K|}$. Both vanish as $|S_K|$ grows.  
Therefore, with probability at least $1-\exp(-\Omega(|S_K|))$,
the new domain reduces or preserves the error: $\Pr(\varepsilon_{K+1}\le \varepsilon_K) \;\ge\; 1-\exp(-\Omega(|S_K|))$.
\end{proof}

Table \ref{tab:theoretical_comparison} summarizes the key points from Theorem \ref{thm:finite_robustness}.
\begin{table*}[t]
\centering
\caption{Comparison of error dynamics under domain addition. Only matching provides bounded deterioration, robustness to outliers, and explicit non-deterioration guarantees.}
\label{tab:theoretical_comparison}
\scriptsize
\begin{tabular}{lccc}
\toprule
\textbf{Strategy} & \textbf{Expected Error Change} & \textbf{Worst-case Deterioration} & \textbf{No detoriation} \\
\midrule
Naive pooling &
$\varepsilon_K + O(1)$ &
$\Theta(1)$ (constant jumps, unbounded in $K$) &
No guarantee \\
\addlinespace
Uniform subsampling &
$\varepsilon_K + O\!\left(\tfrac{1}{\sqrt{m}} \vee \tfrac{1}{\sqrt{mn}}\right)$ &
$\Omega(1/m)$ per step &
No guarantee \\
\addlinespace
Causal matching &
$\varepsilon_K + O\!\left(\tfrac{1}{\sqrt{|S_K|}}\right)$ &
$O(1/|S_K|)$ (bounded, vanishes with set size) &
Yes: non-deterioration w.h.p. \footnote{w.h.p. = with high probability, i.e. with probability at least 
$\Pr(\varepsilon_{K+1}\le \varepsilon_K) \;\ge\; 1-\exp(-\Omega(|S_K|))$} \\
\bottomrule
\end{tabular}
\end{table*}

\subsection{Proof of Corollary \ref{cor:optimal_tau}}
\label{sec:proof_optimal_tau}
\begin{corollary}[\textbf{Optimal $\tau$}]
\label{cor:optimal_tau}
Let $\hat c_K$ be the matched centroid after $K$ domains and $\epsilon_K=\|\hat c_K-\mu^*\|_2$. Under Assumption~\ref{assump:metric} (bi-Lipschitz metric $M$ with constants $m_M,L_M$), the matching rule
$\;I_{K+1}=\mathbf{1}\{M(\mu_{K+1}-\hat c_K)<\tau\}\;$ has the following properties: a) \textbf{Safe exclusion (sufficiency).} If 
$\|\mu_{K+1}-\mu^*\|_2 \;>\; \epsilon_K + \tau/m_M,$
then $M(\mu_{K+1}-\hat c_K)>\tau$, so the domain is \emph{rejected}. b) \textbf{Guaranteed inclusion (sufficiency).} If 
$\|\mu_{K+1}-\mu^*\|_2 \;\le\; \tau/L_M - \epsilon_K,$
then $M(\mu_{K+1}-\hat c_K)\le \tau$, so the domain is \emph{accepted}.$\!$
This inclusion band is nonempty iff $\;\tau> L_M \epsilon_K$.

In particular, any choice of $\tau> L_M \epsilon_K$ simultaneously (i) excludes all domains farther than $\epsilon_K+\tau/m_M$ from $\mu^*$ and (ii) guarantees inclusion of all domains within $\tau/L_M-\epsilon_K$ of $\mu^*$. Hence, choosing $\tau$ slightly larger than $L_M \epsilon_K$ yields a non-deterioration regime while still admitting sufficiently close (beneficial) domains.
\end{corollary}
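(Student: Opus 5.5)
The proof is a pair of triangle-inequality estimates around the current centroid $\hat c_K$, converted into statements about $M$ through the two sides of Assumption~\ref{assump:metric}. Throughout I would use only the decomposition $\mu_{K+1}-\hat c_K = (\mu_{K+1}-\mu^*) - (\hat c_K-\mu^*)$ and the definition $\epsilon_K=\|\hat c_K-\mu^*\|_2$. This gives the two-sided Euclidean bound
\[
\|\mu_{K+1}-\mu^*\|_2-\epsilon_K \;\le\; \|\mu_{K+1}-\hat c_K\|_2 \;\le\; \|\mu_{K+1}-\mu^*\|_2+\epsilon_K .
\]

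For \textbf{safe exclusion (a)}, I would apply the lower bound $M(v)\ge m_M\|v\|_2$ with $v=\mu_{K+1}-\hat c_K$. Combined with the left inequality, this gives $M(\mu_{K+1}-\hat c_K)\ge m_M(\|\mu_{K+1}-\mu^*\|_2-\epsilon_K)$. Under the hypothesis $\|\mu_{K+1}-\mu^*\|_2>\epsilon_K+\tau/m_M$, the right-hand side is strictly larger than $m_M\cdot\tau/m_M=\tau$. Hence $I_{K+1}=0$ and the domain is rejected.

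For \textbf{guaranteed inclusion (b)}, I would symmetrically use $M(v)\le L_M\|v\|_2$ and the right inequality. This gives $M(\mu_{K+1}-\hat c_K)\le L_M(\|\mu_{K+1}-\mu^*\|_2+\epsilon_K)$, which is at most $\tau$ when $\|\mu_{K+1}-\mu^*\|_2\le \tau/L_M-\epsilon_K$.

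One point needs care here. The rule is strict, $M(\cdot)<\tau$, so equality on the boundary does not guarantee acceptance. I would therefore state acceptance for the strict version $\|\mu_{K+1}-\mu^*\|_2<\tau/L_M-\epsilon_K$ and read the paper's non-strict statement up to this boundary case. Alternatively, one can note that for continuous $\mathscr{D}_\mu$ the boundary has probability zero.

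The ball $\{\mu:\|\mu-\mu^*\|_2<\tau/L_M-\epsilon_K\}$ has positive radius exactly when $\tau/L_M-\epsilon_K>0$, that is, $\tau>L_M\epsilon_K$. This proves the ``iff'' for nonemptiness (taking nonempty to mean a nondegenerate ball around $\mu^*$).

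For the concluding ``in particular'' claim, I would observe that for any $\tau>L_M\epsilon_K$, (a) and (b) hold simultaneously. Every accepted domain then satisfies $\|\mu_{K+1}-\mu^*\|_2\le\epsilon_K+\tau/m_M$, so its step perturbation is controlled by the matching bound of Theorem~\ref{thm:finite_robustness}(3), namely $\tau/(m_M(|S_K|+1))$. Moreover, the domains guaranteed inclusion by (b) satisfy $M(\mu_{K+1}-\mu^*)\le L_M\|\mu_{K+1}-\mu^*\|_2\le \tau-L_M\epsilon_K\le \epsilon_K+\tau/L_M$ whenever $L_M\ge 1$. (The last step needs $L_M\ge1$, which I would assume or else handle by rescaling $M$.) This is precisely the hypothesis of the non-deterioration clause in Theorem~\ref{thm:finite_robustness}(3), so that clause, assumed from the earlier result, yields $\Pr(\varepsilon_{K+1}\le\varepsilon_K)\ge 1-\exp(-\Omega(|S_K|))$.

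Choosing $\tau$ just above $L_M\epsilon_K$ keeps the exclusion radius $\epsilon_K+\tau/m_M$ as small as possible while keeping the inclusion ball nonempty. The two algebraic parts are routine. The step I expect to be delicate is this last link: the corollary's informal ``non-deterioration regime'' has to be matched to the exact hypothesis form of Theorem~\ref{thm:finite_robustness}(3), including the strict-versus-non-strict thresholds and the role of the constants $m_M$ and $L_M$.
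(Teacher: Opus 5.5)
Your proofs of (a), (b) and the nonemptiness claim follow the paper's argument: a triangle inequality through $\mu^*$ in each direction, then the lower (resp.\ upper) bound of Assumption~\ref{assump:metric}. Your handling of the boundary is more careful than the paper's. The paper infers acceptance from $M(\mu_{K+1}-\hat c_K)\le\tau$ even though the rule is strict, and it says $\tau/L_M-\epsilon_K$ is nonnegative iff $\tau>L_M\epsilon_K$. Passing to the strict hypothesis (an open ball) repairs both at once.

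The paper's proof stops there and never derives the closing non-deterioration sentence, so your link to Theorem~\ref{thm:finite_robustness}(3) is extra. As written, it contains a false step. Multiplying by $L_M>0$, the inequality $\tau-L_M\epsilon_K\le\epsilon_K+\tau/L_M$ is equivalent to $\tau(L_M-1)\le L_M(L_M+1)\epsilon_K$. That is automatic when $L_M\le 1$, not when $L_M\ge 1$. For $L_M>1$ it fails in general: $L_M=2$, $\epsilon_K=0$, $\tau=1$ gives $\tau-L_M\epsilon_K=1>\tfrac12=\epsilon_K+\tau/L_M$. This is the relevant case, since the paper's geodesic metric satisfies $\|v\|_2\le d_G\le\tfrac{\pi}{2}\|v\|_2$ against the chordal distance, so $L_M=\pi/2$.

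Rescaling does not rescue the step either. You would need to rescale toward $L_M\le 1$, not $L_M\ge1$. Moreover, the corollary is invariant under $(M,\tau)\mapsto(cM,c\tau)$, but the hypothesis of Theorem~\ref{thm:finite_robustness}(3) compares an $M$-distance with the Euclidean $\varepsilon_K$ and is not invariant. Shrinking $c$ therefore makes that hypothesis vacuous rather than verifying it.

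The correct repair uses the word ``slightly''. Write $\tau=L_M\epsilon_K+\eta$; the needed inequality becomes $\eta(L_M-1)\le 2L_M\epsilon_K$. This holds automatically if $L_M\le1$. If $L_M>1$, it holds exactly when $\tau\le L_M\epsilon_K\,\frac{L_M+1}{L_M-1}$, which is a nonempty window above $L_M\epsilon_K$ whenever $\epsilon_K>0$. So the non-deterioration clause covers the guaranteed-inclusion ball precisely for $\tau$ in this window, not under a condition on $L_M$. As you already note, accepted domains outside that ball only get the bounded perturbation $\tau/(m_M(|S_K|+1))$. The closing sentence is therefore established only in this partial sense.
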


\begin{proof}
Let $\hat c_K$ denote the matched centroid after $K$ domains, and let $\epsilon_K \;:=\; \|\hat c_K - \mu^*\|_2$. Assumption~\ref{assump:metric} (bi-Lipschitz equivalence of $M$ to the Euclidean norm) states that there exist constants $0<m_M\le L_M < \infty$ such that, for all $v\in\mathbb{R}^d$, $m_M \,\|v\|_2 \;\le\; M(v) \;\le\; L_M \,\|v\|_2$.
Recall that the matching rule includes a new domain $K{+}1$ iff $I_{K+1} \;=\; \mathbf{1}\!\left\{\, M(\mu_{K+1}-\hat c_K) \;<\; \tau \,\right\} \;=\; 1$. We prove the two sufficient conditions.

\textbf{a) Safe exclusion.}
Assume the new domain is \emph{far} from the target in Euclidean distance:
$\|\mu_{K+1} - \mu^*\|_2 \;>\; \epsilon_K + \frac{\tau}{m_M}$ (harmfulness assumption).
We will show this implies $M(\mu_{K+1}-\hat c_K)>\tau$, i.e., the domain is \emph{rejected}.

By the reverse triangle inequality (in normed spaces), $\|\mu_{K+1} - \hat c_K\|_2 
\;\ge\; \|\mu_{K+1} - \mu^*\|_2 \;-\; \|\hat c_K - \mu^*\|_2
\;=\; \|\mu_{K+1} - \mu^*\|_2 \;-\; \epsilon_K$.
Using the harmfulness assumption, $\|\mu_{K+1} - \hat c_K\|_2 
\;>\; \Big(\epsilon_K + \frac{\tau}{m_M}\Big) \;-\; \epsilon_K
\;=\; \frac{\tau}{m_M}$.
Now applying the \emph{lower} Lipschitz bound with $v=\mu_{K+1}-\hat c_K$: $M(\mu_{K+1} - \hat c_K)
\;\ge\; m_M\,\|\mu_{K+1} - \hat c_K\|_2
\;>\; m_M \cdot \frac{\tau}{m_M}
\;=\; \tau$.
Therefore $M(\mu_{K+1}-\hat c_K)>\tau$, so the matching rule rejects this domain ($I_{K+1}=0$).

\textbf{b) Guaranteed inclusion.}
Assume the new domain is \emph{close} to the target in Euclidean distance: 
$\|\mu_{K+1} - \mu^*\|_2 \;\le\; \frac{\tau}{L_M} \;-\; \epsilon_K$ (beneficial assumption). Note that the right-hand side is nonnegative iff $\tau > L_M \epsilon_K$; this is exactly the condition under which the ``beneficial band'' is nonempty. We will show that the beneficial condition implies $M(\mu_{K+1}-\hat c_K)\le \tau$, i.e., the domain is \emph{accepted}.

By the (forward) triangle inequality, $\|\mu_{K+1} - \hat c_K\|_2 
\;\le\; \|\mu_{K+1} - \mu^*\|_2 \;+\; \|\hat c_K - \mu^*\|_2
\;=\; \|\mu_{K+1} - \mu^*\|_2 \;+\; \epsilon_K$.
Using the beneficialness assumption, $\|\mu_{K+1} - \hat c_K\|_2 
\;\le\; \Big(\frac{\tau}{L_M} - \epsilon_K\Big) \;+\; \epsilon_K
\;=\; \frac{\tau}{L_M}$.
Now apply the \emph{upper} Lipschitz bound in Assumption~\ref{assump:metric} with $v=\mu_{K+1}-\hat c_K$: $M(\mu_{K+1} - \hat c_K)
\;\le\; L_M\,\|\mu_{K+1} - \hat c_K\|_2
\;\le\; L_M \cdot \frac{\tau}{L_M}
\;=\; \tau$. Therefore $M(\mu_{K+1}-\hat c_K)\le\tau$, so the matching rule accepts this domain ($I_{K+1}=1$). This proves the guaranteed-inclusion part. The inclusion band is nonempty precisely when $\tau > L_M \epsilon_K$.
\end{proof}

\subsection{Proof of Theorem \ref{thm:flow_bridge}}
\label{sec:thm_flow_bridge}

\FlowBridge*

\begin{proof}
By the universal approximation theorem for normalizing flows \cite{huang2018neural, lee2021universal}, 
for any continuous $p_{\text{data}}$ with compact support and any $\epsilon>0$, 
there exists a normalizing flow $T$ (e.g., a composition of coupling layers \cite{dinh2016density}) such that 
$D_{\text{KL}}(T_\# p_Z \,\|\, p_{\text{data}}) < \epsilon$.
This establishes the universal approximation property, showing that normalizing flows can transform the simple Gaussian distribution $p_Z$ to arbitrarily approximate the complex data distribution $p_{\text{data}}$.

The Lipschitz constant of a normalizing flow composed of $L$ coupling layers satisfies $L_T \leq \prod_{l=1}^L \|J_{T_l}\|_{\text{op}}$, where $\|J_{T_l}\|_{\text{op}}$ is the operator norm of the Jacobian of the $l$-th layer. For commonly used flow architectures like RealNVP \cite{dinh2016density} and GLOW \cite{kingma2018glow}, each coupling layer has a triangular Jacobian with bounded diagonal entries. Specifically, for affine coupling layers, the Jacobian determinant is bounded by $\exp(\|s\|_\infty)$, where $s$ is the scale network. By constraining the weights of the scale network (e.g., through spectral normalization or weight clipping), we can ensure $\|J_{T_l}\|_{\text{op}} \leq C_l$ for each layer, where $C_l$ represents the maximum operator norm of the Jacobian for the l-th layer. The overall Lipschitz constant then satisfies $L_T \leq \prod_{l=1}^L C_l$. The diameter of the data support appears because the flow needs to map the unit Gaussian to a region of size $\text{diam}(\text{supp}(p_{\text{data}}))$, which imposes a lower bound on the required expansion. The constant $C_{\text{flow}}$ captures the architectural constraints and can be made explicit for specific flow constructions.

Let $\mathcal{S}_Z = \{z \in \mathbb{R}^d : \|z - \mathbf{c}_z\|_2 < \tau\}$ be a matched set in the latent space with centroid $\mathbf{c}_z$. The transformed set is $\mathcal{S}_X = T(\mathcal{S}_Z) = \{T(z) : z \in \mathcal{S}_Z\}$. Since $T$ is $L_T$-Lipschitz continuous, for any $z_1, z_2 \in \mathbb{R}^d$, we have $\|T(z_1) - T(z_2)\|_2 \leq L_T \|z_1 - z_2\|_2$. Now, for any $z \in \mathcal{S}_Z$, we have $\|z - \mathbf{c}_z\|_2 < \tau$. Then, by the Lipschitz continuity of $T$, $\|T(z) - T(\mathbf{c}_z)\|_2 \leq L_T \|z - \mathbf{c}_z\|_2 < L_T \tau$. Therefore, for every $x \in \mathcal{S}_X$, we have $\|x - T(\mathbf{c}_z)\|_2 < L_T \tau$. It follows that $\frac{1}{|\mathcal{S}_X|} \sum_{x \in \mathcal{S}_X} \|x - T(\mathbf{c}_z)\|_2^2 < \frac{1}{|\mathcal{S}_X|} \sum_{x \in \mathcal{S}_X} (L_T \tau)^2 = L_T^2 \tau^2$. This establishes the variance preservation property, showing that the geometric structure of matched sets is preserved under the flow transformation.

Let $\bar{x} = \frac{1}{|\mathcal{S}_X|} \sum_{x \in \mathcal{S}_X} x$ be the empirical mean of the transformed set. Note that $\bar{x} = \frac{1}{|\mathcal{S}_Z|} \sum_{z \in \mathcal{S}_Z} T(z)$ because $T$ is a bijection and $\mathcal{S}_X = T(\mathcal{S}_Z)$. We want to bound $\|\bar{x} - T(\mathbf{c}_z)\|_2$. Using the linearity of the mean, we have $\bar{x} - T(\mathbf{c}_z) = \frac{1}{|\mathcal{S}_Z|} \sum_{z \in \mathcal{S}_Z} \left( T(z) - T(\mathbf{c}_z) \right)$. Taking the norm and using the triangle inequality, we get $\|\bar{x} - T(\mathbf{c}_z)\|_2 \leq \frac{1}{|\mathcal{S}_Z|} \sum_{z \in \mathcal{S}_Z} \| T(z) - T(\mathbf{c}_z) \|_2$. Now, for each $z \in \mathcal{S}_Z$, by the Lipschitz continuity of $T$, we have $\| T(z) - T(\mathbf{c}_z) \|_2 \le L_T \| z - \mathbf{c}_z \|_2 \le L_T \tau \quad\Rightarrow\quad
\frac{1}{|\mathcal{S}_X|} \sum_{x \in \mathcal{S}_X} \|x - T(\mathbf{c}_z)\|_2^2 \le L_T^2 \tau^2.$ Therefore, $\|\bar{x} - T(\mathbf{c}_z)\|_2 < \frac{1}{|\mathcal{S}_Z|} \sum_{z \in \mathcal{S}_Z} L_T \tau = L_T \tau$.  This establishes the concentration guarantee, showing that the empirical mean of the transformed set remains close to the transformed centroid, with the deviation bounded by the product of the Lipschitz constant and the matching radius.

The theorem demonstrates that the favorable properties of matching—variance reduction and concentration—extend from Gaussian to real-world non-Gaussian data through normalizing flows, with explicit control over the distortion via the Lipschitz constant $L_T$.
\end{proof}

\subsection{Proof of Lemma \ref{lemma:multimodal_reduction}}
\label{sec:proof_multimodal_reduction}
\begin{lemma}[\textbf{Multimodal Reduction to Unimodal Problems}]
\label{lemma:multimodal_reduction}
Assume the modes are separated as 
$\min_{i\neq j}\|\mu_i^*-\mu_j^*\|_2 > 2(\tau_{\max}+R_{\max}+\epsilon_{\max})$,
where $\tau_{\max}=\max_m \tau_m$, $R_{\max}$ is a (deterministic or high-probability) within-mode radius, and 
$\epsilon_{\max}=\max_m \epsilon_{K,m}$.
Assume coverage: $\tau_m \ge R_{\max}+\epsilon_{K,m}$ for each $m$.
Then a) Samples from different modes are never matched to the same centroid. b) The multimodal matching decomposes into $M$ independent unimodal problems. c) For each mode $m$, convergence holds as in Theorem~\ref{thm:asymptotic_behavior}. d) Non-deterioration per mode: $\epsilon_{K+1,m} \le \epsilon_{K,m}$ for all $m$.
\end{lemma}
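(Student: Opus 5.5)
The plan is a triangle-inequality separation argument that shows the assignment rule is ``block diagonal'' with respect to the true modes. After that, parts (b)--(d) follow by invoking the unimodal results mode by mode.

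\textbf{Setup.} Let $\mathbf{c}_m := \mathbf{c}_{n,m}$ denote the current centroid of mode $m$, so that $\|\mathbf{c}_m-\mu_m^*\|_2=\epsilon_{K,m}\le\epsilon_{\max}$. A sample $x$ generated by mode $m$ satisfies $\|x-\mu_m^*\|_2\le R_{\max}$ (deterministically, or on the high-probability event defining $R_{\max}$, which we intersect with everything below). To keep the constants clean, I will work in the Euclidean metric, $M=\|\cdot\|_2$. For a general $M$, Assumption~\ref{assump:metric} only rescales the radii by $m_M$ or $L_M$, and the separation hypothesis must be read with $\tau_m$ replaced by $\tau_m/m_M$.

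\textbf{Step 1 (part (a)).} Fix a sample $x$ from mode $m$ and any other mode $j\neq m$.
\begin{itemize}
\item \emph{Own centroid.} By the triangle inequality, $\|x-\mathbf{c}_m\|_2\le R_{\max}+\epsilon_{K,m}\le\tau_m$, using the coverage assumption. So $x$ lies inside the ball of its own centroid.
\item \emph{Foreign centroid.} By the reverse triangle inequality,
$\|x-\mathbf{c}_j\|_2\ge \|\mu_m^*-\mu_j^*\|_2-R_{\max}-\epsilon_{K,j} > 2\tau_{\max}+R_{\max}+2\epsilon_{\max}\ge\tau_j$.
So $x$ lies outside every foreign ball.
\end{itemize}
Hence the assignment rule of Section~\ref{sec:multimodal} (inside exactly one ball) assigns $x$ to mode $m$ and only to mode $m$. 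In particular, samples from different modes never share a centroid. One edge case needs care: the coverage bound gives $\le\tau_m$, while the rule uses the strict inequality $<\tau_m$. I would handle this by taking coverage with strict inequality, or by noting that equality is a null event for continuous data.

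\textbf{Step 2 (part (b)).} By Step 1, the matched set of centroid $m$ contains exactly the samples from mode $m$ (those within radius $\tau_m$). The update of $\mathbf{c}_m$ uses only those samples, so it is a function only of mode-$m$ data. An induction over iterations then shows that the separation and coverage hypotheses persist as long as the errors $\epsilon_{K,m}$ do not increase. This non-increase is precisely what part (d) supplies, so I would prove (b) and (d) jointly by induction on $K$. With that in hand, the $M$-modal procedure is literally $M$ decoupled copies of the unimodal procedure, each with target $\mu_m^*$ and threshold $\tau_m$.

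\textbf{Step 3 (parts (c) and (d)).} Given the decoupling, I apply the earlier results to each mode separately.
\begin{itemize}
\item \emph{Part (c).} For each copy, Theorem~\ref{thm:asymptotic_behavior}(c) applies verbatim with $\mu_*\to\mu_m^*$ and $\sigma^2\to\sigma_m^2$.
\item \emph{Part (d).} Theorem~\ref{thm:finite_robustness}(3) gives non-deterioration for each copy. If a new domain is rejected by mode $m$, then $\epsilon_{K+1,m}=\epsilon_{K,m}$. If it is included and satisfies $M(\mu_{K+1}-\mu_m^*)\le\epsilon_{K,m}+\tau_m/L_M$, the high-probability non-deterioration clause applies. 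A union bound over the $M$ modes then gives the simultaneous statement.
\end{itemize}

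\textbf{Main obstacle.} Part (d) is the hard step. Theorem~\ref{thm:finite_robustness} delivers non-deterioration only with probability $1-\exp(-\Omega(|S_K|))$, and only under the closeness condition. The lemma, by contrast, states $\epsilon_{K+1,m}\le\epsilon_{K,m}$ outright. I would therefore state (d), and hence the inductive maintenance of the hypotheses in (b), on the high-probability event, using Corollary~\ref{cor:optimal_tau} to ensure that every accepted domain satisfies the closeness condition. This does not come for free: rejecting harmful domains requires $\tau_m>L_M\epsilon_{K,m}$, and Corollary~\ref{cor:optimal_tau} guarantees that each accepted domain lies within $\epsilon_{K,m}+\tau_m/m_M$ of $\mu_m^*$. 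That falls short of the closeness condition $\epsilon_{K,m}+\tau_m/L_M$ needed in Theorem~\ref{thm:finite_robustness}(3) unless $m_M=L_M$, as in the Euclidean case.
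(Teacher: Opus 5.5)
Your proposal follows the paper's proof essentially step for step. It uses the same own-centroid and foreign-centroid triangle-inequality bounds for (a) and the disjointness of the matched sets to decouple the per-mode centroid updates for (b). For (c)--(d) it applies Theorem~\ref{thm:asymptotic_behavior} and the unimodal non-deterioration results mode by mode.

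Your treatment of (d) is more careful than the paper's, which only cites Corollary~\ref{cor:optimal_tau}, a statement that controls inclusion and exclusion but does not by itself give non-deterioration. You state (d) on the high-probability event of Theorem~\ref{thm:finite_robustness}(3), check closeness via Corollary~\ref{cor:optimal_tau}(a) in the Euclidean case, and maintain the coverage and separation hypotheses by induction.

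There are two slips, neither of which affects the argument. The foreign-centroid bound should read $2\tau_{\max}+R_{\max}+\epsilon_{\max}$. Safe exclusion in Corollary~\ref{cor:optimal_tau}(a) holds for every $\tau$; the condition $\tau>L_M\epsilon_K$ only makes the inclusion band nonempty.
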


\begin{proof}
\textbf{Disjoint Matching Sets:}
Let $\mathbf{x}$ be a sample from mode $i$ with true mean $\boldsymbol{\mu}_i^*$. By definition, $\|\mathbf{x} - \boldsymbol{\mu}_i^*\|_2 \leq R_{\max}$. The distance from $\mathbf{x}$ to its assigned centroid $\mathbf{c}_{n,i}$ satisfies $\|\mathbf{x} - \mathbf{c}_{n,i}\|_2 \leq \|\mathbf{x} - \boldsymbol{\mu}_i^*\|_2 + \|\boldsymbol{\mu}_i^* - \mathbf{c}_{n,i}\|_2 \leq R_{\max} + \epsilon_{K,i}$. By coverage, $R_{\max}+\epsilon_{K,i}\le \tau_i$, hence $\mathbf{x}$ is eligible for mode $i$. Now consider the distance from $\mathbf{x}$ to any other mode's centroid $\mathbf{c}_{n,j}$ for $j \neq i$:
\begin{align*}
\|\mathbf{x} - \mathbf{c}_{n,j}\|_2 &\geq \|\boldsymbol{\mu}_i^* - \boldsymbol{\mu}_j^*\|_2 - \|\mathbf{x} - \boldsymbol{\mu}_i^*\|_2 - \|\boldsymbol{\mu}_j^* - \mathbf{c}_{n,j}\|_2 \\
&> 2(\tau_{\max} + R_{\max} + \epsilon_{\max}) - R_{\max} - \epsilon_{\max} \\
&= \tau_{\max} + R_{\max} + \epsilon_{\max}.
\end{align*}

Meanwhile, the matching criterion for mode $i$ requires $\|\mathbf{x} - \mathbf{c}_{n,i}\|_2 \leq \tau_i \leq \tau_{\max}$. Therefore $\|\mathbf{x} - \mathbf{c}_{n,j}\|_2 > \tau_{\max} + R_{\max} + \epsilon_{\max} \geq \tau_{\max} \geq \tau_i$,
which means $\mathbf{x}$ cannot be matched to centroid $\mathbf{c}_{n,j}$. Thus, samples from different modes are never matched to the same centroid.

\textbf{Independence of Modal Problems:}
Since the matching sets for different modes are disjoint, the centroid updates for each mode depend only on samples from that mode: $\mathbf{c}_{n,m}^{(t+1)} = \frac{1}{|\mathcal{S}_m^{(t)}|} \sum_{\mathbf{x} \in \mathcal{S}_m^{(t)}} \mathbf{x}$, where $\mathcal{S}_m^{(t)}$ contains only samples from mode $m$. This means the $M$ matching processes evolve independently, with no cross-talk between modes.

\textbf{Unimodal Convergence for Each Mode:} For each mode $m$, we have an independent unimodal matching problem with: a) Target distribution: $\Norm(\boldsymbol{\mu}_m^*, \sigma_m^2 \I_d)$
b) Domain distributions: $\{Q_{k,m}\}_{k=1}^K$ where $Q_{k,m}$ is the restriction of domain $k$ to mode $m$
c) Matching radius: $\tau_m$

By Theorem~\ref{thm:asymptotic_behavior}, as $K \to \infty$, the matching process for mode $m$ converges: $P_{\mathrm{match},m}^{(\tau_m)} \dconv \Norm(\boldsymbol{\mu}_m^*, \sigma_m^2 \I_d)$. Since the modes are independent and the matching processes are disjoint, the overall multimodal distribution converges to the target: $\sum_{m=1}^M \pi_m P_{\mathrm{match},m}^{(\tau_m)} \dconv \sum_{m=1}^M \pi_m \Norm(\boldsymbol{\mu}_m^*, \sigma_m^2 \I_d) = \mathscr{D}_{\mathrm{test}}$.

\textbf{Non-Deterioration Guarantee per Mode:}
Since the modes behave in accordance with unimodal convergence when they are appropriately separated (as seen above), each mode’s matching problem reduces to an independent unimodal case. As we have already established, in the unimodal case (Corollary~\ref{cor:optimal_tau}), selection of $\tau$ to an optimal value guarantees non-deterioration
\end{proof}

\section{Properties of the Matched Distribution}
\label{sec:matched_properties}

The matched set $\mathcal{S} = \{ \mathbf{x} : M(\mathbf{x} - \mathbf{c}_n) < \tau \}$ exhibits these key properties that support the iterative refinement process and theoretical guarantees established in Theorems~\ref{thm:asymptotic_behavior} and~\ref{thm:finite_robustness}:

\begin{enumerate}
    \item \textbf{Monotonicity and Set Size Bounds:}
    Under Assumption~\ref{assump:metric}, the size of the matched set $|\mathcal{S}|$ is a monotonic function of the threshold $\tau$. Formally, for $0 \leq \tau_1 < \tau_2$:
    \[
    \mathcal{S}(\tau_1) \subseteq \mathcal{S}(\tau_2) \quad \text{and} \quad |\mathcal{S}(\tau_1)| \leq |\mathcal{S}(\tau_2)|.
    \]
    \textit{Proof:} For any sample $\mathbf{x}$ satisfying $M(\mathbf{x} - \mathbf{c}_n) < \tau_1$, it necessarily satisfies $M(\mathbf{x} - \mathbf{c}_n) < \tau_2$ since $\tau_1 < \tau_2$. The metric equivalence (Assumption~\ref{assump:metric}) ensures this ordering is preserved across the entire data space.
    
    \item \textbf{Concentration of Norms:}
    Under Assumption~\ref{assump:metric}, the average norm of samples in $\mathcal{S}$ is bounded by:
    \[
    \left| \frac{1}{|\mathcal{S}|} \sum_{\mathbf{x} \in \mathcal{S}} \|\mathbf{x}\|_2 - \|\mathbf{c}_n\|_2 \right| \leq L_M \cdot \tau.
    \]
    \textit{Proof:} By the reverse triangle inequality and Lipschitz continuity from Assumption~\ref{assump:metric}, for any $\mathbf{x} \in \mathcal{S}$:
    \[
    \left| \|\mathbf{x}\|_2 - \|\mathbf{c}_n\|_2 \right| \leq \|\mathbf{x} - \mathbf{c}_n\|_2 \leq L_M \cdot M(\mathbf{x} - \mathbf{c}_n) < L_M \cdot \tau.
    \]
    
    \item \textbf{Geometric Decay of Set Size:}
    Suppose the sample distribution has a continuous density $f(x)$ at $\mathbf{c}_n$, 
    and Assumption~\ref{assump:metric} holds. 
    Then for small $\tau$, the probability mass of the matched set satisfies
    \[
    \mathbb{P}\!\left( M(X - \mathbf{c}_n) < \tau \right) \asymp \tau^d,
    \]
    where $d$ is the ambient dimension. 
    Consequently, the expected size of the matched set among $N$ i.i.d. samples scales as
    \[
    \mathbb{E}[|\mathcal{S}|] \asymp N \tau^d.
    \]
    
    \textit{Justification:} The metric equivalence guarantees that the $M$-ball 
    $\{x : M(x - \mathbf{c}_n) < \tau\}$ lies between Euclidean balls of radii 
    $\tau / L_M$ and $\tau / m_M$. 
    The volume of such balls scales as $\Theta(\tau^d)$, 
    and continuity of $f(x)$ at $\mathbf{c}_n$ ensures the probability mass inside scales the same. 
    Thus, the expected count in $\mathcal{S}$ is $N$ times this probability.
    
    \item \textbf{Variance Reduction:}
    Under Assumption~\ref{assump:metric}, the empirical variance of the data in $\mathcal{S}$ is bounded by:
    \[
    \frac{1}{|\mathcal{S}|} \sum_{\mathbf{x} \in \mathcal{S}} \|\mathbf{x} - \bar{\boldsymbol{\mu}}_{\mathcal{S}}\|_2^2 \leq (L_M \cdot \tau)^2.
    \]
    \textit{Proof:} Since all samples in $\mathcal{S}$ satisfy $M(\mathbf{x} - \mathbf{c}_n) < \tau$, and by Lipschitz continuity $\|\mathbf{x} - \mathbf{c}_n\|_2 \leq L_M \cdot \tau$, the maximum possible variance around any point in $\mathcal{S}$ is $(L_M \cdot \tau)^2$.
\end{enumerate}

These properties provide the mathematical foundation for the robustness guarantees in Theorem~\ref{thm:finite_robustness}:
\begin{itemize}
    \item \textbf{Monotonicity} ensures stable iterative refinement under Assumption~\ref{assump:metric}
    \item \textbf{Concentration} guarantees coherent matched sets using the Lipschitz property
    \item \textbf{Geometric decay} explains sample efficiency trade-offs under the tail conditions
    \item \textbf{Variance reduction} directly enables bounded error guarantees through metric equivalence
\end{itemize}

\section{Empirical Analysis}
\label{sec:ablation_exp}

\subsection{Synthetic Validation of Theoretical Guarantees}
\label{sec:synthetic}

We conduct synthetic experiments to validate Theorems~\ref{thm:asymptotic_behavior}, \ref{thm:symmetric_convergence}, and \ref{thm:finite_robustness}. Throughout, the target test distribution is $\mathscr{D}_{\mathrm{test}}=\mathcal{N}(\boldsymbol{\mu}_*,\sigma^2 \mathbf{I}_d)$ with $\boldsymbol{\mu}_*=\mathbf{0}$ and $\sigma=0.8$. Each training domain is $Q_k=\mathcal{N}(\boldsymbol{\mu}_k,\sigma^2\mathbf{I}_d)$, where domain means $\{\boldsymbol{\mu}_k\}$ are drawn from a meta-distribution $\mathscr{D}_{\boldsymbol{\mu}}$ that satisfies Assumption~\ref{ass:moment_tail} (finite moments, light tails) and—when we model asymmetry—the directional-mass condition (Sec.~\ref{sec:theory}). Unless stated otherwise, asymmetric settings use an asymmetry strength $\alpha=1.5$. All results are averaged over 10 random seeds.

\begin{figure}[t]
    \centering
    \includegraphics[width=0.7\textwidth]{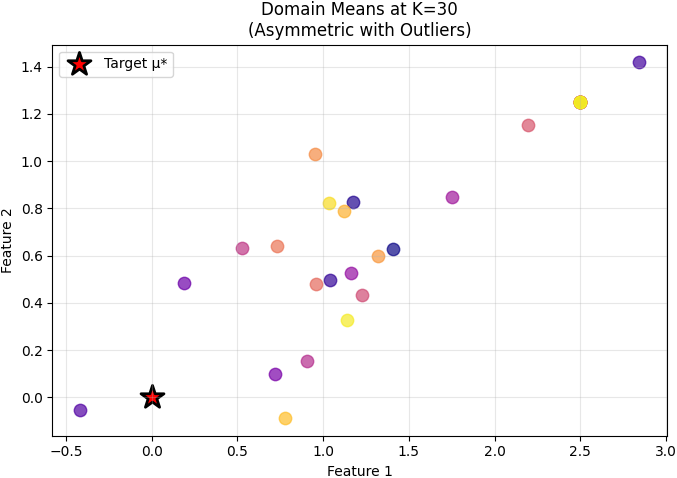}
    \caption{Asymmetric setting: domain means (colored dots) concentrate along a biased direction, away from the target mean $\boldsymbol{\mu}_*=\mathbf{0}$ (red star).}
    \label{fig:synthetic_intro}
\end{figure}

\paragraph{Implementation details.}
We instantiate the three strategies exactly as in our definitions: \emph{Naive pooling} (Def.~\ref{def:naive_pooling}), \emph{Uniform subsampling} (Def.~\ref{def:subsampling}), and \emph{Matching} (Def.~\ref{def:matching}). For matching we use $M(\cdot)=\|\cdot\|_2$, initialize the centroid with the sample-wise coordinate wise median, and iterate until convergence $\|\mathbf{c}^{(t+1)}-\mathbf{c}^{(t)}\|_2<10^{-4}$, with fixed radii $\tau\in\{1.0,1.1,1.2\}$).

\begin{figure}[t]
    \centering
    \includegraphics[width=0.7\textwidth]{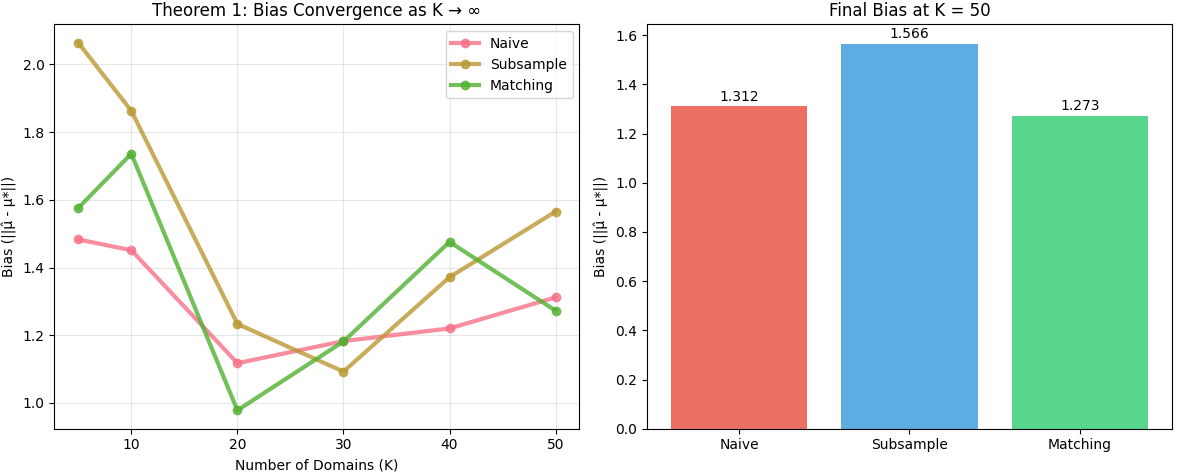}
    \caption{Asymptotic behavior (Theorem~\ref{thm:asymptotic_behavior}). Left: bias $\epsilon_K=\|\bar{\boldsymbol{\mu}}_K-\boldsymbol{\mu}_*\|_2$ versus number of domains $K\in\{5,10,20,30,40,50\}$. Right: final bias at $K=50$. Matching achieves the smallest bias as $K$ grows, in line with its convergence to $\mathcal{N}(\boldsymbol{\mu}_*,\sigma^2\mathbf{I}_d)$, while other strategies retain inter-domain variance.}
    \label{fig:synthetic_exp_1}
\end{figure}

\paragraph{Asymptotic regime ($K\to\infty$).}
For $K\in\{5,10,20,30,40,50\}$ under an asymmetric $\mathscr{D}_{\boldsymbol{\mu}}$ (Fig.~\ref{fig:synthetic_intro}), we sample $n=150$ points per domain and set $\tau=1.2$ for matching. Figure~\ref{fig:synthetic_exp_1} shows that the bias $\epsilon_K=\|\bar{\boldsymbol{\mu}}_K-\boldsymbol{\mu}_*\|_2$ decreases fastest for matching and is lowest at $K=50$, confirming Theorem~\ref{thm:asymptotic_behavior}: matching filters out inter-domain heterogeneity (converging to $\mathcal{N}(\boldsymbol{\mu}_*,\sigma^2\mathbf{I}_d)$), whereas naive pooling and subsampling converge to limits with added covariance $\boldsymbol{\Sigma}_{\boldsymbol{\mu}}$.

\begin{figure}[t]
    \centering
    \includegraphics[width=0.7\textwidth]{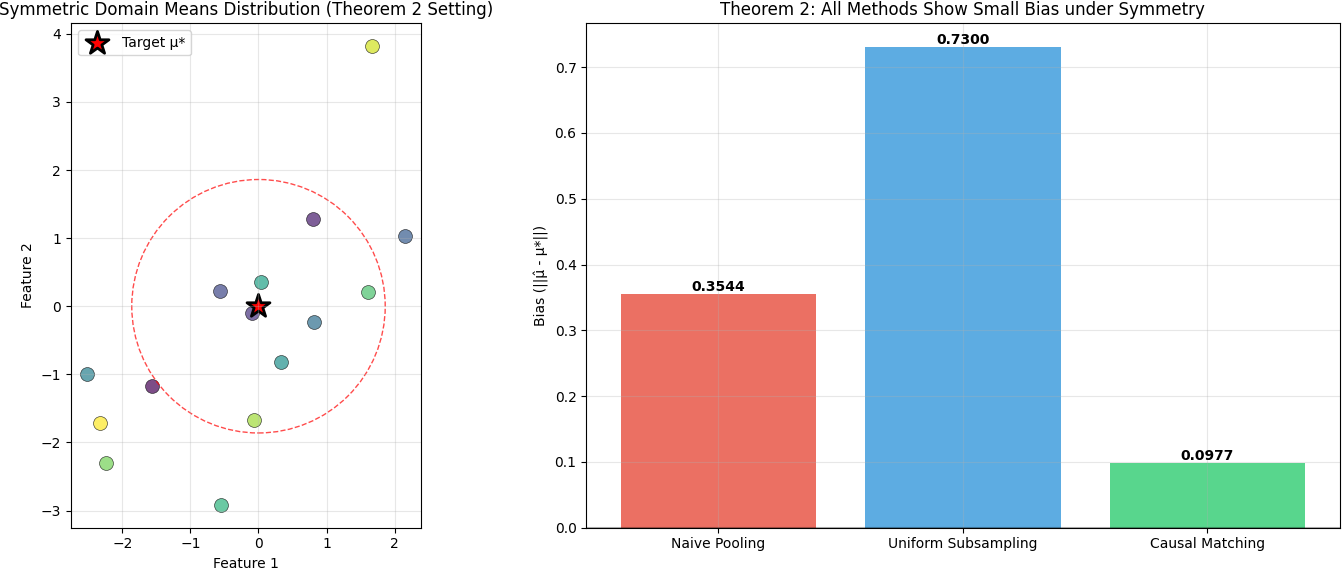}
    \caption{Finite-$K$ symmetry (Theorem~\ref{thm:symmetric_convergence}). Left: symmetrically oriented domain means around $\boldsymbol{\mu}_*$. Right: all methods exhibit negligible bias at $K=15$.}
    \label{fig:synthetic_exp_2}
\end{figure}

\paragraph{Finite-$K$ with symmetric meta-distribution.}
With $K=15$ and a symmetric $\mathscr{D}_{\boldsymbol{\mu}}$ (spread parameter $\gamma=1.5$), we again draw $n=100$ points per domain and set $\tau=1.0$. Figure~\ref{fig:synthetic_exp_2} confirms Theorem~\ref{thm:symmetric_convergence}: all three strategies are (nearly) unbiased for finite $K$ under symmetry, as the domain means balance around $\boldsymbol{\mu}_*$.

\begin{figure}[t]
    \centering
    \includegraphics[width=0.7\textwidth]{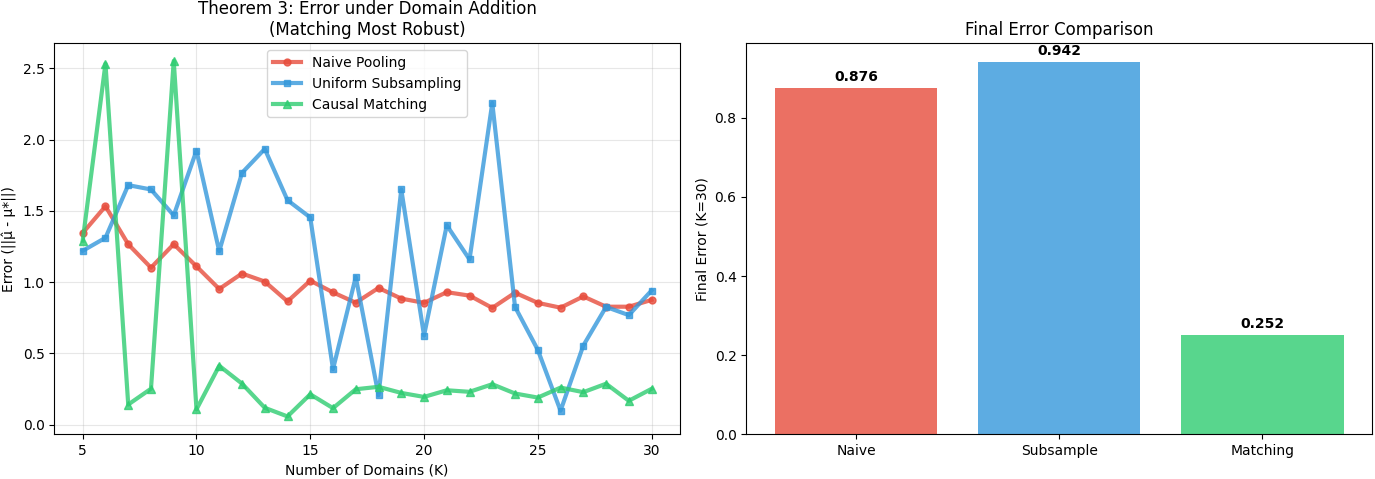}
    \caption{Finite-sample robustness under domain addition (Theorem~\ref{thm:finite_robustness}). Top-left: error trajectories as $K$ increases from $5$ to $30$ with every third domain an outlier ($\|\boldsymbol{\mu}_k-\boldsymbol{\mu}_*\|_2\approx 2.5$). Top-right: final errors at $K=30$. Bottom-left: domain means at the final $K$. Bottom-right: per-step error changes $\Delta\epsilon_K$. Matching is the most stable: bounded, shrinking perturbations and frequent non-deterioration steps.}
    \label{fig:synthetic_exp_3}
\end{figure}

\paragraph{Finite-$K$ with asymmetric addition (one-step robustness).}
Starting from $K{=}5$ and growing to $K{=}30$, we add a new domain at each step; every third domain is a harmful outlier with $\|\boldsymbol{\mu}_k-\boldsymbol{\mu}_*\|_2\approx 2.5$. We use $n=100$ samples per domain and set $\tau=1.1$ for matching. Figure~\ref{fig:synthetic_exp_3} shows: (i) \emph{naive pooling} exhibits occasional $\Theta(1)$ jumps when an outlier arrives; (ii) \emph{subsampling} blunts spikes but still incurs $\Omega(1/m)$ worst-case steps; and (iii) \emph{matching} yields the most stable curve and the smallest final error. The $\Delta\epsilon_K$ panel (bottom-right) highlights that matching’s per-step changes are tightly controlled and often non-positive, aligning with the high-probability non-deterioration guaranteed by Corollary~\ref{cor:optimal_tau}.

\noindent\textit{Takeaway.} Across all regimes, the empirical trends match the theory: matching removes inter-domain variance in the limit, is unbiased under symmetry, and crucially in finite, asymmetric settings keeps per-step error changes bounded by $O(1/|S_K|)$ with frequent non-deterioration, whereas naive pooling and subsampling can suffer persistent spikes. \footnote{Classical robust estimation methods, such as the tail-based filtering of Diakonikolas and Kane~\cite{diakonikolas2023algorithmic} (Chapter 2), rely on sub-Gaussian concentration inequalities to exclude outliers whose deviation exceeds a probabilistic threshold. This procedure assumes a Euclidean sample space where distance and variance share a direct linear relationship. However, on non-Euclidean manifolds, these
concentration bounds no longer hold, since distances do not scale linearly with
variance and curvature alters the shape of confidence regions. Consequently,
tail-based filtering can either reject valid domains located along high-curvature
directions or retain distant ones that appear close under Euclidean metrics,
thereby breaking domain inclusion consistency. Our $\tau$-based selection operates purely in geometric space: a domain is included if its geodesic distance from the centroid lies within a fixed radius $\tau$. Because compact manifolds are bi-Lipschitz equivalent to Euclidean spaces locally, this deterministic threshold guarantees bounded inclusion even when global sub-Gaussian assumptions fail. Hence, our method preserves inclusion–exclusion consistency under curvature.}

\subsection{Zero-shot anomaly detection}
\label{sec:zero_shot_supplementary}

\subsubsection{Related works}
\label{sec:related_works}

\textbf{Propensity score matching techniques:} Classical \emph{propensity score matching} (PSM)~\cite{rosenbaum1983central} and its extensions 
(e.g., inverse propensity weighting, covariate balancing, kernel balancing) are widely used in observational studies to reduce selection bias. These methods rely on the assumption of \emph{overlap} between treated and control distributions, ensuring that propensity scores do not concentrate near $0$ or $1$. When overlap holds, balancing based on estimated scores yields consistent treatment effect estimates. However, in \emph{multimodal or high-dimensional feature spaces}, such as CLIP embeddings, these techniques often become unstable. First, distinct modality clusters (e.g., ChestXRay vs.\ BrainMRI) occupy disjoint regions of the hypersphere, violating overlap (positivity) and causing propensity weights to 
explode or degenerate. Second, high dimensional propensity estimation requires strong parametric assumptions, and misspecification in multimodal settings leads to extreme variance in weights. Third, traditional balancing aligns global distributions, which in hyperspherical embeddings may force 
artificial alignment across unrelated modes, distorting semantic structure. Recent works have attempted to stabilize causal balancing under such conditions, for example through representation learning with generalization bounds~\cite{shalit2017estimating}, or by introducing optimal transport weights for causal inference~\cite{dunipace2021optimal,gunsilius2021matching}. While these provide more flexible mechanisms, they remain global and do not explicitly respect 
local manifold geometry. By contrast, our framework performs \emph{centroid-based geodesic matching} directly in hyperspherical feature space. Instead of relying on global balancing, it adaptively selects samples within geometry-consistent neighborhoods around centroids. Furthermore, we introduce variance-aware channel attention (VACA), which distinguishes class-discriminative from modality-driven variance. This yields cleaner neighborhoods for matching and avoids the instabilities that plague PSM and OT-based balancing in multimodal regimes. 

\textbf{Difference from Continual Learning:} While our experimental protocol superficially resembles \emph{continual learning} (CL), the 
fundamental challenges differ in scope and difficulty. In standard CL, the learner receives a 
sequence of tasks or domains and must avoid \emph{catastrophic forgetting} of previously acquired 
knowledge~\cite{li2025memory}. Techniques such as parameter-efficient fine-tuning, replay 
buffers, or regularization are designed to preserve past performance while adapting to new tasks. 
Crucially, CL typically assumes task boundaries are well-defined and that past and present tasks 
share at least partial feature overlap. Our setup is strictly more challenging. We consider the \emph{zero-shot anomaly detection} problem 
under domain addition, where each new domain may introduce fundamentally different modalities with 
disjoint embeddings on the hypersphere (e.g., ChestXRay vs.\ BrainMRI). Unlike CL, there is no 
guarantee of feature overlap (positivity may be violated), and performance must generalize to 
entirely unseen modalities without labeled adaptation. In this sense, our problem can be viewed as 
a \emph{hardest-case subset of continual learning}, where not only must past knowledge be retained, 
but the learner must construct geometry-consistent neighborhoods that ensure transfer across 
disconnected domains. Matching with variance-aware geodesic centroids directly addresses this 
difficulty, going beyond CL methods like BiLORA~\cite{zhu2025bilora} that focus only on preventing 
forgetting but do not resolve modality-induced separations.

\textbf{Existing architectures for zero-shot anomaly detection:}  
Medical anomaly detection faces extreme heterogeneity and cross-domain asymmetry, making robust zero-shot generalization a fundamental challenge. Traditional anomaly detection approaches, such as reconstruction-based methods \cite{deng2022anomaly,roth2022towards,salehi2021multiresolution}, often suffer from ambiguous decision boundaries \cite{bergmann2019mvtec}. Patch-based frameworks like PatchCore \cite{roth2022towards} and UniAD \cite{you2022unified} achieve strong performance in single-domain settings but lack adaptability to unseen domains. One-class classification methods \cite{bao2024bmad,jiang2023multi} also struggle to generalize across modalities, limiting their practical impact in heterogeneous medical environments. Recently, vision-language models (VLMs) have emerged as a powerful alternative. Methods such as AnomalyCLIP \cite{zhou2023anomalyclip} and AdaCLIP \cite{cao2024adaclip} leverage CLIP embeddings and lightweight adapters to enable anomaly detection without per-domain retraining. These approaches show promise but remain challenged by modality-specific feature clustering, as anomalies often differ structurally across modalities. MVFA-AD \cite{huang2024adapting} further adapts CLIP to medical images via modality-specific adapters, improving robustness but still susceptible to performance degradation when confounding domains are introduced.  

We adopt and extend this VLM-based direction for three principled reasons: (1) \textbf{Semantic grounding}, where VLMs encode pathological concepts beyond modality-specific appearances; (2) \textbf{Transferable representations}, as large-scale pretraining provides resilience against distribution shifts; and (3) \textbf{Zero-shot capability}, enabling generalization to unseen modalities without retraining. Our framework builds directly on AnomalyCLIP \cite{zhou2023anomalyclip}, incorporating both its \emph{task-agnostic prompts} and \emph{adapter modules}, but enhances it with a theoretically grounded geodesic matching mechanism that mitigates modality-induced clustering.  

Finally, since our evaluation protocol resembles continual domain addition, we also benchmark against BiLORA \cite{zhu2025bilora}, a parameter-efficient fine-tuning (PEFT) framework for continual learning that uses adapter-based LoRA to prevent catastrophic forgetting. Unlike BiLORA \cite{zhu2025bilora} and related baselines, our method explicitly prevents performance deterioration under domain addition, achieving robustness consistent with our theoretical guarantees. 

\subsubsection{Modality Specific Clusters}
\label{sec:modality_clusters}

Following the findings and technique introduced by \cite{liang2022mind}, we find the cosine similarites among the images of the datasets (modalities).  500 images for each dataset was chosen and the image features from the fourth adapter layer (the deepest adapter layer) of the CLIP image encoder for calculating the intra and inter domain cosine similarities. Table \ref{tab:dataset_similarity} reports the pairwise cosine similarity between dataset-level centroids in the CLIP embedding space. The diagonal entries are consistently dominant, indicating that each modality occupies a relatively narrow region of the hyperspherical feature space. In contrast, the off-diagonal similarities are substantially lower, reflecting limited overlap across modalities. This directly supports our observation of \emph{modality-specific clustering}: embeddings from the same modality are tightly grouped, whereas embeddings from different modalities are separated by large angular gaps. Consequently, zero-shot generalization becomes challenging, as models must traverse long distances across disjoint clusters to adapt from seen to unseen domains.

\begin{table}[t]
\centering
\caption{Pairwise cosine similarity matrix between dataset centroids in the CLIP feature space. High diagonal values highlight narrow intra-modality clustering, while low off-diagonal values reflect inter-modality separation.}
\label{tab:dataset_similarity}
\begin{tabular}{l c c c c c c}
\toprule
 & BrainMRI & LiverCT & OCT2017 & RESC & ChestXray & HIS \\
\midrule
BrainMRI            & 0.820 & 0.240 & 0.540 & 0.431 & 0.479 & 0.468 \\
LiverCT            & 0.240 & 0.371 & 0.201 & 0.110 & 0.290 & 0.208 \\
OCT2017       & 0.540 & 0.201 & 0.820 & 0.645 & 0.785 & 0.816 \\
RESC       & 0.431 & 0.110 & 0.645 & 0.594 & 0.591 & 0.651 \\
ChestXray            & 0.479 & 0.290 & 0.785 & 0.591 & 0.908 & 0.840 \\
HIS   & 0.468 & 0.208 & 0.816 & 0.651 & 0.840 & 0.898 \\
\bottomrule
\end{tabular}
\end{table}

\subsection{Comparison of the Class Discriminative Power of the image feature dimensions}
\label{sec:variance_supplementary}
From Figure~\ref{fig:variance_plot} (RetinaOCT2017), we observe that only a subset of embedding dimensions contributes substantially to the class-specific variance (normal vs.\ anomaly), consistent with our formulation in Section~\ref{sec:method}. These dimensions capture discriminative structure aligned with pathology, while the remaining dimensions encode modality-specific variance and are responsible for forming modality-specific clusters on the hypersphere (see also Section~\ref{sec:variance_attention}). This validates the need for variance-aware channel attention (VACA), which explicitly reweights channels by their discriminative contribution $\Delta$ and aggregates them through the variance surrogate $\mathsf{Var}_{\mathrm{disc}}$.

To provide intuition, consider a simplified 3D sphere. If the embeddings of two modalities lie along orthogonal axes, e.g., $(1,0,0)$, $(0,1,0)$, and $(0,0,1)$, then each point is supported by a distinct single dimension. Although all points lie on the same sphere, they are maximally separated in terms of discriminative variance and cannot overlap meaningfully. In contrast, if embeddings are distributed across dimensions, such as $(\tfrac{1}{2},\tfrac{1}{2},\tfrac{1}{2})$ and $(\tfrac{1}{2},\tfrac{1}{4},\tfrac{1}{3})$, then both share support across multiple dimensions, leading to closer angular separation. The first case illustrates modality-dominated variance (dimensions act independently), while the second corresponds to discriminative variance (dimensions contribute jointly).

VACA operationalizes this principle by amplifying channels that show consistent discriminative alignment (like shared multi-dimensional support) and downweighting those that act as independent modality indicators. Geometrically, this shrinks modality-induced angular gaps on $\mathbb{S}^{d-1}$, thereby making geodesic neighborhoods more semantically consistent. Empirically, applying VACA increases the effective discriminative energy across all embedding dimensions, enhancing robustness in the zero-shot anomaly detection setting.

\begin{figure}[t]
    \centering
    \includegraphics[width=0.9\textwidth]{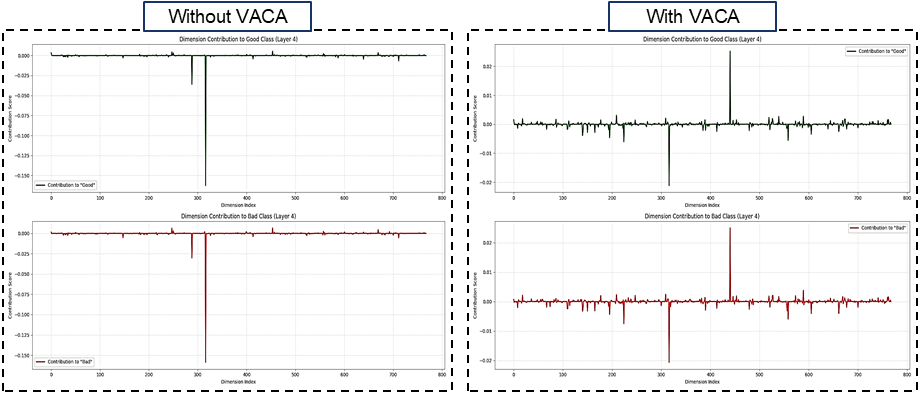}
    \caption{Variance decomposition of image embeddings for the last adapter layer across modalities for RetinaOCT2017. A subset of channels carries class-specific variance (normal vs.\ anomaly), while the majority encode modality-specific variance. This induces hyperspherical clustering and motivates VACA (see Section~\ref{sec:variance_attention}), which reweights channels to amplify discriminative signal.}
    \label{fig:variance_plot}
\end{figure}

\subsubsection{Ablation Experiments}
\label{sec:ablation}
Figure~\ref{fig:ablation} illustrates the impact of the choice of matching metric $M$ on both alignment and downstream performance. Consistent with our geometric analysis in Section~\ref{sec:mind_the_gap}, geodesic distance $d_G$ outperforms Euclidean and cosine metrics across most settings. This aligns with the theoretical guarantees of Section~\ref{sec:geodesic_centroid}, where $d_G$ was shown to faithfully represent angular separations on the hypersphere, thereby preserving local neighborhoods and strengthening the positivity condition in Theorem~\ref{thm:finite_robustness}.  

Moreover, integrating Variance Aware Channel Attention (VACA, Section~\ref{sec:variance_attention}) with geodesic matching (CS\_GeodVar) yields further gains. VACA reduces modality-induced variance by upweighting class-discriminative channels, effectively tightening the semantic neighborhoods used in the matching step. This ensures that matched sets satisfy a stronger form of local positivity, as spurious modality-specific dimensions are attenuated. Empirically, this manifests as consistently higher DA scores, with AC and AS performance following the same upward trend.   For these ablation plots, we deliberately adopt the \emph{most difficult sequence of domain addition}, where domains are introduced in order of increasing angular distance—first the closest domain, then progressively more distant ones (see Table \ref{tab:dataset_similarity}). This design produces the hardest-case incremental generalization setting, where deterioration is most likely. The strong performance of $d_G$ and VACA under this protocol underscores the robustness of our approach. Together, these results validate our design: geodesic distance serves as the theoretically principled matching metric on $\mathbb{S}^{d-1}$, while VACA enhances the robustness of matching by ensuring that the effective feature space better aligns with the discriminative axes rather than modality-specific ones.

\subsubsection{Comparison with existing state-of-the-art methods}
\label{sec:sota}

Figure~\ref{fig:final} highlights the comparative performance of MVFA~\cite{huang2024adapting}, AnomalyCLIP~\cite{zhou2023anomalyclip}, BiLORA~\cite{zhu2025bilora}, and our approach under sequential domain addition for anomaly classification (AC), anomaly segmentation (AS), and domain alignment (DA). MVFA \cite{huang2024adapting}, which naively pools all domains, consistently produces only moderate DA scores (2.0–2.2), confirming our theoretical results that naive pooling amplifies distributional asymmetries and degrades generalization under domain shifts. AnomalyCLIP \cite{zhou2023anomalyclip}, built on text-agnostic prompts, fares worse—its DA scores swing erratically from 1.01 to 3.26—demonstrating the instability of heuristic prompt tuning when confronted with heterogeneous domains. BiLORA \cite{zhu2025bilora}, a continual learning framework with parameter-efficient fine-tuning (PEFT) designed to mitigate catastrophic forgetting, improves upon both MVFA and AnomalyCLIP. Yet, because it still relies on naive pooling across sequentially added domains, its DA values remain below the critical threshold of 4 in several cases (e.g., 3.145 for ChestXRay AC, 3.158 for Liver CT AC, 3.081 for RESC AC). This leaves BiLORA vulnerable to performance dips whenever the introduced domains are misaligned with prior ones—catastrophic forgetting may be mitigated, but confounding from domain misalignment is not. Furthermore, Table~\ref{tab:zero-shot} shows that our method achieves performance on par with, and often surpassing, existing state-of-the-art approaches for anomaly detection. Importantly, it attains this level of accuracy while simultaneously guaranteeing stability under sequential domain addition, ensuring no deterioration in performance as new domains are introduced for fine-tuning.

In contrast, our method achieves DA scores consistently at or above 4.0 for every dataset and task, including peaks of 4.0736 for Brain MRI detection (AC) and 4.0484 for Brain MRI segmentation (AS). Crossing this threshold is significant: DA $\geq 4$ implies that alignment is never compromised by domain addition, ensuring no risk of collapse or instability across heterogeneous sources. This robustness stems directly from our design. Variance-Aware Channel Attention (VACA) selectively amplifies discriminative channels while suppressing spurious modality-driven variance, and geodesic matching on the hypersphere adaptively filters samples so that only label-consistent features refine centroids. Together, these mechanisms enforce stable, geometry-consistent clustering and prevent centroids from drifting toward confounded or ambiguous features.

Theoretically, this aligns with the \textit{ignorability} and \textit{positivity} assumptions underpinning our framework. Unlike MVFA \cite{huang2024adapting}, AnomalyCLIP \cite{zhou2023anomalyclip}, and BiLORA \cite{zhu2025bilora}—which indiscriminately pool domains and violate ignorability by letting confounded features dominate—our method ensures that updates satisfy positivity and ignorability, which are the two fundamental assumptions of causality for an unbiased estimator. This principled integration of geometry and variance-aware filtering explains why our DA never dips below 4 and has a higher chance of increasing, providing provable stability under domain addition and superior practical robustness.

\textit{Note: All experiments were conducted using the hardest sequence of domain addition, where the domain closest to the test domain (as determined from Table~\ref{tab:dataset_similarity}) was introduced first for fine-tuning, followed by the second closest, and so on. This ordering makes performance gains progressively harder with each added domain. To further evaluate robustness, we also tested two additional sequences for both our method (matching) and BiLORA~\cite{zhu2025bilora} (naive pooling): a) Reverse sequence (easiest case): The order was inverted, starting with the farthest domain and moving toward the test domain. This maximizes the chance of performance improvement at later stages, since domains closer to the test set are utilized later, generally yielding higher DA scores. b) Arbitrary sequence: A randomly chosen order of domain addition was used to test invariance to sequencing. The purpose of these tests was to examine exchangeability (Corollary~\ref{cor:exchangeability}), i.e., whether the final performance after all domains are introduced remains the same regardless of order. Since exchangeability is a necessary assumption for pooling methods to ensure ignorability, verifying this empirically is important. We observed that the final performance for both BiLORA \cite{zhu2025bilora} and our proposed method differed by at most $\pm$1.06 AUC across sequences. This confirms that exchangeability holds in practice.}

\subsubsection{Overall Framework}
\label{sec:algorithm}
Algorithm~\ref{alg:main_method} summarizes our pipeline integrating variance-aware channel attention (VACA), intrinsic (geodesic) matching on the unit hypersphere, and multi-task objectives. Given patch embeddings $P\!\in\!\mathbb{R}^{B\times N\times D}$ and class text embeddings $T\!\in\!\mathbb{R}^{D\times 2}$, we first adapt $P$ with task-specific lightweight adapters for \emph{segmentation} and \emph{detection} to disentangle subspaces. VACA then emphasizes class-discriminative channels while suppressing modality-driven variance via learned gains derived from channel-wise semantic contrasts, yielding reweighted features $\widetilde{P}_{\text{seg}}$ (used for matching) and $\widetilde{P}_{\text{det}}$ (used for image-level scoring). 

Next, we perform \emph{geodesic matching} on the unit hypersphere with \emph{normal} and \emph{anomaly} centroids denoted by $\mathbf{c}^{+}$ and $\mathbf{c}^{-}$, respectively. For a image feature $f_j$ corresponding to the image sample $x_j$, we compute $d^{+} \!=\! d_G(f_j,\mathbf{c}^{+})$ and $d^{-} \!=\! d_G(f_j,\mathbf{c}^{-})$, where $d_G$ is the geodesic distance. We employ an \textbf{adaptive $\tau$}: $\tau(f_j) \;=\; \min\!\big(d^{+},\, d^{-}\big)$.

A sample updates \emph{only} its label-consistent centroid when it is strictly closer than the other, i.e., for normal samples update if $d^{+}\!<\!d^{-}$ and for anomaly samples update if $d^{-}\!<\!d^{+}$; otherwise we \emph{skip} the sample. This label-consistent, threshold-free rule prevents ambiguous pulls and yields stable prototype refinement as centroids evolve via exponential moving averages (EMA). 

In parallel, pixel-level losses (Dice, Focal) and image-level BCE align adapted visual features with text embeddings. To prevent mode collapse and guarantee identifiability, we push $\mathbf{c}^{+}$ and $\mathbf{c}^{-}$ apart with the \emph{geodesic} loss 
\[
\mathcal{L}_{\text{intra}}
\;=\;
\frac{1}{|\mathcal{S}^{+}|}\!\sum_{f\in\mathcal{S}^{+}} d_G(f,\mathbf{c}^{+})^{2}
\;+\;
\frac{1}{|\mathcal{S}^{-}|}\!\sum_{f\in\mathcal{S}^{-}} d_G(f,\mathbf{c}^{-})^{2},
\quad
\mathcal{L}_{\text{inter}}
\;=\;
-\, d_G(\mathbf{c}^{+}, \mathbf{c}^{-}),
\]
and
\[
\mathcal{L}_{\text{geo}}
\;=\;
\lambda_{1}\,\mathcal{L}_{\text{intra}}
\;+\;
\lambda_{2}\,\mathcal{L}_{\text{inter}},
\]
with $\lambda_1,\lambda_2\!>\!0$ balancing compactness and separation (consistent with Sec.~\ref{sec:ablation} discussion). The full objective is
\[
\mathcal{L}_{\text{total}}
\;=\;
\underbrace{\beta_{1}\,\mathcal{L}_{\text{Dice}}+\beta_{2}\,\mathcal{L}_{\text{Focal}}}_{\mathcal{L}_{\text{seg}}}
\;+\;
\underbrace{\beta_{3}\,\mathcal{L}_{\text{BCE}}}_{\mathcal{L}_{\text{det}}}
\;+\;
\underbrace{\mathcal{L}_{\text{geo}}}_{\text{centroids}}
\;+\;
\underbrace{\mathcal{L}_{\text{var}}}_{\text{VACA reg.}},
\]
where $\beta_{1},\beta_{2},\beta_{3}\!>\!0$ weight the task losses, and $\mathcal{L}_{\text{var}}$ is the VACA variance regularizer. $\mathcal{L}_{\text{var}}$ is only used to update the parameters of $MLP_\theta$. All hyperparameters $(\lambda_{1},\lambda_{2},\beta_{1..3},\lambda_{\text{var}})$ are selected via extensive experimentation per modality.  

\begin{algorithm}[t]
\caption{Variance-Aware Geodesic Matching for Zero-Shot Medical Anomaly Detection}
\label{alg:main_method}
\begin{algorithmic}[1]
\Require Patch embeddings $P \in \mathbb{R}^{B \times N \times D}$; class embeddings $T \in \mathbb{R}^{D \times 2}$
\Require Initial centroids $\mathbf{c}^{+}_{(0)}, \mathbf{c}^{-}_{(0)} \in \mathbb{R}^D$ (unit-norm or zero-init then normalized)
\Require EMA factor $\alpha \in (0,1)$; channel amplification $\gamma > 0$
\Require Loss weights $\beta_{1},\beta_{2},\beta_{3}\!>\!0$; $\lambda_{1},\lambda_{2}\!>\!0$; variance weight $\lambda_{\text{var}}\!>\!0$
\State \textbf{Initialize:} $\mathbf{c}^{+}\!\leftarrow\!\mathbf{c}^{+}_{(0)}$, $\mathbf{c}^{-}\!\leftarrow\!\mathbf{c}^{-}_{(0)}$, $\mathcal{S}^{+}\!\leftarrow\!\emptyset$, $\mathcal{S}^{-}\!\leftarrow\!\emptyset$

\For{each batch $(P, T, \mathcal{Y}_{\text{seg}}, \mathcal{Y}_{\text{det}})$}
    \State \textbf{(1) Task-Specific Adapters and Normalization}
    \State $P_{\text{seg}} \gets \mathrm{Adapter}_{\text{seg}}(P)$;\quad $P_{\text{det}} \gets \mathrm{Adapter}_{\text{det}}(P)$
    \State $P_{\text{norm}} \gets \ell_2\text{-normalize}(P_{\text{seg}})$;\quad $T_{\text{norm}} \gets \ell_2\text{-normalize}(T)$
    
    \State \textbf{(2) Variance-Aware Channel Attention (VACA)}
    \State $\tilde{P} \!\gets\! \text{broadcast}(P_{\text{norm}}) \in \mathbb{R}^{B\times N\times 1\times D}$;\;
           $\tilde{T} \!\gets\! \text{broadcast}(T_{\text{norm}}) \in \mathbb{R}^{1\times 1\times D\times 2}$
    \State $\mathcal{C} \!\gets\! \tilde{P}\odot\tilde{T}$;\quad $\overline{\mathcal{C}} \!\gets\! \frac{1}{BN}\sum_{b,n}\mathcal{C}_{b,n,:,:}\in\mathbb{R}^{D\times 2}$
    \State $\Delta_d \gets \big|\overline{\mathcal{C}}_{d,\text{normal}} - \overline{\mathcal{C}}_{d,\text{anomaly}}\big|$ \Comment{channel discriminability}
    \State $a \gets \mathrm{Softplus}(\mathrm{MLP}_\theta(\Delta))$;\quad $w \gets 1 + \gamma a$
    \State $\widetilde{P}_{\text{seg}} \gets P_{\text{norm}} \odot w$;\quad $\widetilde{P}_{\text{det}} \gets P_{\text{det}}$
    \State \textbf{VACA variance regularizer:}\;
           $\mathsf{Var}_{\mathrm{disc}} \gets \mathrm{Var}(\Delta \odot w)$;\;
           $\mathcal{L}_{\text{var}} \gets \lambda_{\text{var}}\cdot(\mathsf{Var}_{\text{seg}} + \mathsf{Var}_{\text{det}})$
    
    \State \textbf{(3) Geodesic Matching with Adaptive Gate}
    \For{each feature $f_j$ in $\widetilde{P}_{\text{seg}}$ with label $y\!\in\!\{\text{normal},\text{anomaly}\}$}
        \State $d^{+} \gets d_G(f_j,\mathbf{c}^{+}) = \arccos\!\big(\langle f_j,\mathbf{c}^{+}\rangle\big)$;\quad
               $d^{-} \gets d_G(f_j,\mathbf{c}^{-}) = \arccos\!\big(\langle f_j,\mathbf{c}^{-}\rangle\big)$
        \State $\tau(f_j) \gets \min(d^{+}, d^{-})$ \Comment{adaptive, sample-specific}
        \If{$y=\text{normal}$ \textbf{and} $d^{+} < d^{-}$}
            \State $\mathcal{S}^{+} \gets \mathcal{S}^{+} \cup \{f_j\}$;\;
                   $\tilde{\mathbf{c}}^{+} \gets \alpha\,\mathbf{c}^{+} + (1-\alpha)f_j$;\;
                   $\mathbf{c}^{+} \gets \tilde{\mathbf{c}}^{+}/\|\tilde{\mathbf{c}}^{+}\|_2$
        \ElsIf{$y=\text{anomaly}$ \textbf{and} $d^{-} < d^{+}$}
            \State $\mathcal{S}^{-} \gets \mathcal{S}^{-} \cup \{f_j\}$;\;
                   $\tilde{\mathbf{c}}^{-} \gets \alpha\,\mathbf{c}^{-} + (1-\alpha)f_j$;\;
                   $\mathbf{c}^{-} \gets \tilde{\mathbf{c}}^{-}/\|\tilde{\mathbf{c}}^{-}\|_2$
        \Else
            \State \textbf{skip} \Comment{ambiguous sample; no centroid update}
        \EndIf
    \EndFor
    
    \State \textbf{(4) Losses}
    \State \emph{Segmentation:}\;
           $\hat{\mathcal{Y}}_{\text{seg}} \gets \mathrm{softmax}(\widetilde{P}_{\text{seg}} T_{\text{norm}}^\top)$;\;
           $\mathcal{L}_{\text{Dice}} \gets 1-\mathrm{Dice}(\hat{\mathcal{Y}}_{\text{seg}}, \mathcal{Y}_{\text{seg}})$;\;
           $\mathcal{L}_{\text{Focal}} \gets \mathrm{FocalLoss}(\hat{\mathcal{Y}}_{\text{seg}}, \mathcal{Y}_{\text{seg}})$;\;
           $\mathcal{L}_{\text{seg}} \gets \beta_{1}\mathcal{L}_{\text{Dice}} + \beta_{2}\mathcal{L}_{\text{Focal}}$
    \State \emph{Detection:}\;
           $\mathbf{s}_{\text{det}} \gets \max_{\text{patch}}(\widetilde{P}_{\text{det}} T_{\text{norm}}^\top)$;\;
           $\mathcal{L}_{\text{det}} \gets \beta_{3}\,\mathrm{BCE}(\sigma(\mathbf{s}_{\text{det}}), \mathcal{Y}_{\text{det}})$
    \State \emph{Geodesic (centroids):}\;
           $\mathcal{L}_{\text{intra}} \gets \frac{1}{|\mathcal{S}^{+}|}\sum_{f\in\mathcal{S}^{+}} d_G(f,\mathbf{c}^{+})^{2}
           + \frac{1}{|\mathcal{S}^{-}|}\sum_{f\in\mathcal{S}^{-}} d_G(f,\mathbf{c}^{-})^{2}$;\;
           $\mathcal{L}_{\text{inter}} \gets -\, d_G(\mathbf{c}^{+},\mathbf{c}^{-})$
    \State \phantom{\emph{Geodesic (centroids):}}\;
           $\mathcal{L}_{\text{geo}} \gets \lambda_{1}\mathcal{L}_{\text{intra}} + \lambda_{2}\mathcal{L}_{\text{inter}}$
    \State \textbf{Total:}\;
           $\mathcal{L}_{\text{total}} \gets \mathcal{L}_{\text{seg}} + \mathcal{L}_{\text{det}} + \mathcal{L}_{\text{geo}} + \mathcal{L}_{\text{var}}$
\EndFor
\end{algorithmic}
\end{algorithm}

\end{document}